\documentclass[11pt]{article}
\usepackage[a4paper, total={6in, 8in}]{geometry}

\usepackage{times}

\usepackage{amsthm}
\usepackage{amssymb}
\usepackage{enumitem} 
\usepackage{amsmath}

\newtheorem{definition}{Definition}
\newtheorem{theorem}[definition]{Theorem}
\newtheorem{lemma}[definition]{Lemma}
\newtheorem{proposition}[definition]{Proposition}
\newtheorem{example}[definition]{Example}
\newtheorem{remark}[definition]{Remark}
\newtheorem{corollary}[definition]{Corollary}

\usepackage[thinlines]{easytable}
\usepackage{array}
\newcolumntype{P}[1]{>{\centering\arraybackslash}p{#1}} 
\newcolumntype{M}[1]{>{\centering\arraybackslash}m{#1}} 

\usepackage[bookmarks=true]{hyperref}
\hypersetup{%
    colorlinks=true,       
    linkcolor=blue,       
    citecolor=blue,       
    filecolor=black,        
    urlcolor=purple,        
    linktoc=page            
}

\usepackage{tikz}

\usepackage[labelfont=bf]{caption}

\usepackage{wrapfig}
\usepackage{graphicx}
\graphicspath{ {images/} }
\usepackage{subfig}

\usepackage{multirow}

\usepackage{orcidlink}

\def\dotminus{\mathbin{\ooalign{\hss\raise1ex\hbox{.}\hss\cr
  \mathsurround=0pt$-$}}}

\makeindex

\title{Selective Credibility-Limited Belief Update}

\author{
Theofanis Aravanis\textsuperscript{1,\,*}~\orcidlink{0000-0003-0329-3200} \and
Costas D. Koutras\textsuperscript{2}~\orcidlink{0000-0002-2488-0167}
}
\date{\small
\textsuperscript{1} Department of Digital Systems\\
School of Economics and Technology\\
University of the Peloponnese\\
Sparta 231 00, Greece\\
\href{mailto:taravanis@uop.gr}{\texttt{taravanis@uop.gr}}
\\[0.3em]
\textsuperscript{*}Corresponding author
\\[0.6em]
\textsuperscript{2} College of Engineering and Technology \\
American University of the Middle East\\
Kuwait\\
\href{mailto:Konstantinos.K@aum.edu.kw}{\texttt{Konstantinos.K@aum.edu.kw}}
}
\usepackage{fancyhdr}
\providecommand{\keywords}[1]{\textbf{\textit{Keywords:}} #1}

\begin{document}

\maketitle
\sloppy

\begin{abstract}
Belief update concerns changes in an agent's beliefs induced by changes in the underlying world. Standard Katsuno--Mendelzon update assumes that an epistemic input can be incorporated from every initially possible world, whereas credibility-limited belief update restricts, for each source world, the successor worlds regarded as credible or reachable. Nevertheless, existing credibility-limited approaches treat the epistemic input as an indivisible whole, and therefore cannot represent cases in which only part of a compound epistemic input can be realized. We introduce selective credibility-limited belief update, in which the epistemic input is transformed, relative to each source world, into a weaker proxy before the credibility-limited transition is performed. We provide semantic and axiomatic characterizations of the resulting class of update operators. We then identify two well-behaved sub-classes; namely, consistency-preserving update operators, which require every transformed epistemic input to be credible from its source world whenever the original epistemic input is consistent, and maximal consistency-preserving update operators, which additionally require the selected proxy to be maximally informative among the credible consequences of the original epistemic input. Finally, we establish the generality of the proposed framework by showing that credibility-limited belief update is recovered as a special case, while Katsuno--Mendelzon belief update emerges when credibility restrictions are removed and the transformation functions are taken to be identities. These results demonstrate that the framework provides a unified and strictly more expressive account of belief update, encompassing established approaches while supporting source-dependent selective acceptance.
\end{abstract}

\vspace{4mm}

\noindent\keywords{Belief Update, Non-Prioritized Belief Change, Selective Acceptance, Credibility-Limited Belief Update, Knowledge Representation and Reasoning}

\newpage

\section{Introduction}

Belief change concerns the rational modification of an agent's beliefs in response to new information \cite{ferme18}. A fundamental distinction is commonly drawn between \emph{belief revision}, which is appropriate when the new information corrects the agent's description of a fixed world \cite{agm85,gard88}, and \emph{belief update}, which is appropriate when the information reports a change in the world itself. Belief update was introduced in an early form by Keller and Winslett~\cite{keller85} and was subsequently given its standard axiomatic and semantic characterization by Katsuno and Mendelzon~\cite{katsuno92}. In the Katsuno--Mendelzon (KM) framework, each possible world (or simply world) compatible with the agent's prior beliefs is treated as a possible initial state, and is equipped with an ordering of potential successor worlds. Updating by an epistemic input then amounts to selecting, from each initial world, the most plausible successor worlds satisfying that epistemic input.

Standard KM belief update is a \emph{prioritized} form of belief change, as the epistemic input is required to be incorporated into the resulting state of belief. This unconditional success requirement reflects the assumption that the reported change has occurred and must therefore be accommodated, even when doing so requires departing substantially from the agent's previous beliefs. In many applications, however, an epistemic input may describe a transition that is physically impossible, causally inadmissible, unreliable, or otherwise unrealizable from some of the worlds regarded as initially possible. In such cases, unconditional priority is too demanding.

\emph{Non-prioritized} belief change relaxes the success requirement by allowing an epistemic input to be rejected or only partially accepted when it fails to meet the relevant admissibility conditions \cite{ferme26}. The resulting change operation is not required to accept every epistemic input in its entirety; instead, acceptance may depend on factors such as credibility, consistency, reliability, or reachability. In the update setting, this dependence is naturally {\em source-relative}, because an epistemic input may be realizable from one initially possible world but not from another. Therefore, a rational non-prioritized update mechanism must determine, for each source world, whether the reported transition can be accepted and how the corresponding branch should be treated when complete acceptance is not possible.

The \emph{credibility-limited} belief-update framework of Ferm{\'e} \emph{et al.} introduces source-relative credibility restrictions into belief update, by associating each source world with a restricted set of credible or reachable successors~\cite{ferme23}. Only credible worlds of the epistemic input may be selected. Under the credibility-limited (CL) approach, a source branch contributes no successor when the epistemic input has no credible world, relative to that source world; consequently, unconditional success is retained, but consistency may fail. Consistent credibility-limited (CCL) belief update adopts the complementary policy of retaining the source world whenever the epistemic input cannot be credibly realized. This preserves consistency, but permits the epistemic input to be locally rejected. Thus, both mechanisms depart from standard KM belief update by making the treatment of each source branch dependent on source-relative credibility, although only CCL permits local rejection of the epistemic input.

Although these approaches provide credibility-sensitive departures from standard KM belief update, they continue to treat the epistemic input as an {\em indivisible whole}. Relative to each source world, the complete epistemic input is either realized, rejected, or rendered ineffective through the elimination of the corresponding branch. This all-or-nothing treatment is restrictive when the epistemic input expresses a compound change whose components need not be equally realizable. For example, suppose that a robot is instructed to move a cup to a table and fill it. If the cup is intact, both parts of the instruction may be executable. If the cup is broken, however, the robot may still be able to move it, while being unable to fill it. In such a real-world scenario, standard KM belief update may represent the complete instruction as successful by selecting a physically impossible successor. Credibility-limited belief update may instead eliminate the broken-cup branch, while its consistent variant may reject the instruction altogether and leave the cup in its original position. None of these outcomes represents the {\em intermediate case} in which only the executable part of the compound instruction is realized.

The present article introduces \emph{selective credibility-limited belief update}, abbreviated as \emph{SCL belief update}, as a more flexible form of non-prioritized belief update. The framework builds on three principal lines of research; namely, the pointwise semantics of KM belief update~\cite{katsuno92}, the credibility-limited belief-update models of Ferm{\'e} \emph{et al.}~\cite{ferme23}, and the transformation-based methodology developed for selective belief revision~\cite{ferme99,garapa21}. Rather than requiring a source-relative choice between complete acceptance and complete rejection, SCL belief update permits the epistemic input to be transformed into a {\em weaker proxy} before the transition mechanism is applied. As a consequence, acceptance can vary not only between source worlds, but also in extent, as the same epistemic input may be fully accepted from one source world and selectively weakened from another. This makes it possible to represent the partial realization of compound epistemic inputs without either admitting non-credible successor worlds, eliminating the corresponding source branch, or leaving that branch entirely unchanged.

The contributions of the article are both foundational and comparative. First, we provide semantic and axiomatic characterizations of SCL update operators, in terms of credible faithful assignments~\cite{ferme23} and source-dependent transformation functions~\cite{ferme99}. Second, we identify two progressively more constrained sub-classes. {\em Consistency-preserving} SCL update operators require the transformed epistemic input to be credible from its corresponding source world whenever the original epistemic input is consistent, thus ensuring that every initially possible world contributes at least one successor in such cases. {\em Maximal consistency-preserving} SCL update operators additionally require, for every consistent epistemic input, the selected proxy to be maximally informative among its credible consequences. Third, we isolate an unrestricted-credibility variant that captures selective belief update {\em independently} of credibility limitations. Finally, we locate the established KM, CL, and CCL approaches within the proposed framework; accordingly, CL belief update is recovered by means of identity transformations, CCL belief update by transformations that retain the source world whenever the epistemic input is not locally credible, and KM belief update by additionally removing credibility restrictions. We further establish that the principal inclusion relations among these classes are proper, showing that SCL belief update constitutes a {\em strictly more expressive} framework than the approaches it subsumes.

\paragraph{Roadmap.}
The remainder of the article is organized as follows. Section~\ref{sec_related_work} situates the proposed framework within the literature on non-prioritized belief change. Section~\ref{sec_formal_prel} introduces the logical notation and preliminary results used throughout the article. Section~\ref{section_update} reviews the axiomatic and semantic foundations of KM belief update. Section~\ref{sec_credible_update} presents CL and CCL belief update, and illustrates their limitations through a real-world cup scenario. Section~\ref{sec_selective_credible_update} introduces SCL belief update and establishes its semantic and axiomatic characterization. Section~\ref{sec_classes_scl_operators} studies consistency-preserving and maximal consistency-preserving SCL update operators. Section~\ref{sec_cl_ccl_special_cases} shows how KM, CL, and CCL belief update arise within the SCL hierarchy, and establishes the strictness of the corresponding inclusions. The final section summarizes the main results and outlines directions for future research.

\section{Related Work}
\label{sec_related_work}

The proposed framework lies at the intersection of several lines of research on belief change and the partial realization of compound information. This section situates SCL belief update in relation to existing non-prioritized approaches to belief change and clarifies the respects in which it differs from them.

Research on non-prioritized belief change has concentrated primarily on belief revision, giving rise to models in which incoming information may be rejected, partially accepted, or assessed according to different levels of credibility; see the recent survey by Ferm{\'e}, Garapa, and Reis~\cite{ferme26}. Among these models, {\em selective revision} is particularly relevant to the present work, since it allows an epistemic input to be replaced by a weaker proxy representing the part of the information that the agent is prepared to accept~\cite{ferme99,garapa21}. Non-prioritized belief update has received comparatively less attention, although two distinct approaches have recently been developed within, or in close connection with, the KM framework. First, Grimaldi, Martinez, and Rodriguez introduce \emph{local promotion}, an update counterpart of belief promotion \cite{schwind18}, in which the new information need not receive unconditional priority~\cite{grimaldi21}. Local promotion is represented through two underlying update operators and a trigger function that determines which part of the prior information may be preserved relative to the epistemic input. Therefore, its principal concern is the controlled preservation of old information when the prior state and the epistemic input are combined. Second, the credibility-limited belief-update framework of Ferm{\'e} \emph{et al.} associates each source world with a restricted set of credible or reachable successor worlds~\cite{ferme23}. Under this approach, source-relative credibility determines whether the complete epistemic input has an admissible realization: CL eliminates a source branch when no such realization exists, whereas CCL retains the source world and thereby permits local rejection of the epistemic input.

A related, but conceptually different, treatment of compound epistemic inputs is provided by the \emph{compositional belief update} of Delgrande, Jin, and Pelletier~\cite{delgrande08}. Their operator recursively decomposes the syntactic structure of the update formula and combines the results obtained from its constituent sub-formulae. Hence, compositional belief update makes the internal structure of a compound epistemic input operationally significant and supports comparatively direct implementations. Nevertheless, it retains success for the complete epistemic input and, in its general form, does not satisfy the full set of standard KM postulates, including invariance under the substitution of logically equivalent update formulae. Therefore, its purpose is different from selective acceptance --- it determines how a compound epistemic input is realized from its syntactic composition, rather than whether only a weaker part of that epistemic input should be accepted.

The present article addresses a distinct problem by combining the pointwise credibility restrictions of credibility-limited belief update~\cite{ferme23}, with the transformation-based methodology of selective revision~\cite{ferme99}. In SCL belief update, the epistemic input is weakened semantically and independently at each source world before credible successor selection is performed. Unlike local promotion, SCL belief update does not obtain non-prioritization by combining separate contributions associated with the prior state and the new information; instead, it directly transforms the epistemic input into a source-relative proxy. Unlike compositional belief update, the transformation is invariant under logical equivalence and is not tied to the syntactic decomposition of the epistemic input. Consequently, SCL belief update can represent not only whether an epistemic input is accepted from a given source world, but also the extent to which it is accepted there, while excluding non-credible successor worlds. This idea also bears a conceptual affinity to partial-satisfaction planning, where an agent seeks valuable solutions when the complete set of objectives cannot be achieved, including recent extensions to hierarchical task-network planning~\cite{behnke23}. The difference is that SCL belief update operates at the epistemic level, selecting a credible logical consequence of the epistemic input relative to each source world, rather than optimizing over subsets of planning objectives.

\section{Formal Prelude}
\label{sec_formal_prel}

We assume a non-empty {\em finite} propositional signature $\mathcal{P}$, and let $\mathcal{L}$ be the propositional language constructed from the atoms in $\mathcal{P}$, using the standard Boolean connectives $\neg$, $\wedge$, $\vee$, $\rightarrow$, and $\leftrightarrow$. All formulae are evaluated under classical propositional semantics, and $\models$ denotes classical entailment. The symbol $\top$ denotes an arbitrary, but fixed, tautology.

A {\em possible world}, or simply a \emph{world}, is a truth assignment $w:\mathcal{P}\mapsto\{0,1\}$. We shall identify $w$ with the corresponding complete set of literals
\[
\big\{p\in\mathcal{P}:w(p)=1\big\} \ \cup\ \big\{\neg p:p\in\mathcal{P}\text{ and }w(p)=0\big\}.
\]
The collection of all possible worlds is denoted by $\mathbb{M}$. Since $\mathcal{P}$ is finite, so is $\mathbb{M}$. For readability, relative to a fixed ordering of the atoms, worlds will often be represented as sequences of literals, with set braces and commas omitted. Moreover, the negation of an atom $a\in\mathcal{P}$ may be written as $\bar a$; thus, for example, the world $\{a,\neg b,c\}$ may be displayed as $a\bar b c$.

For every sentence $\varphi\in\mathcal{L}$, we write $[\varphi]=\big\{w\in\mathbb{M}:w\models\varphi\big\}$ for the set of worlds satisfying $\varphi$. This notation extends to arbitrary sets of formulae $\Gamma\subseteq\mathcal{L}$ by \mbox{$[\Gamma] = \Big\{w\in\mathbb{M}:w\models\gamma\text{, for every }\gamma\in\Gamma\Big\}$}. Two sentences $\varphi,\psi\in\mathcal{L}$ are logically equivalent, written $\varphi\equiv\psi$, iff they have the same worlds; that is, $\varphi\equiv\psi$ iff $[\varphi]=[\psi]$.

For $\Gamma\subseteq\mathcal{L}$, its set of classical consequences is $Cn(\Gamma) = \big\{\varphi\in\mathcal{L}:\Gamma\models\varphi\big\}$. For a sentence $\varphi\in\mathcal{L}$, we use $Cn(\varphi)$ as an abbreviation for $Cn\big(\{\varphi\}\big)$. A set $K\subseteq\mathcal{L}$ is called a \emph{theory}, or \emph{belief set}, whenever it is closed under classical consequence, that is, whenever $K=Cn(K)$. A theory $K$ is \emph{complete} if $[K]$ is a singleton. The \emph{expansion} of a belief set $K$ by a formula $\varphi$ is denoted by $K+\varphi$, and is given by
\[
K+\varphi = Cn\bigl(K\cup\{\varphi\}\bigr).
\]
Intuitively, expansion incorporates $\varphi$ into $K$ without removing any of the beliefs already contained in $K$, and then closes the resulting set under logical consequence.

Let $X\subseteq\mathbb{M}$. A binary relation $\preceq$ over $X$ is a \emph{partial preorder}, or simply a \emph{preorder}, if it is reflexive and transitive. Its strict component $\prec$ is defined, for $r,r'\in X$, by $r\prec r'$ iff  $r\preceq r'$ and $r'\not\preceq r$. Given $Y\subseteq X$, the worlds in $Y$ that are minimal according to
$\preceq$ form the set
\[
\min(Y,\preceq) = \Big\{ r\in Y: \text{there exists no }r'\in Y\text{ such that }r'\prec r \Big\}.
\]

Finally, we record the following elementary property of minimal elements with respect to a partial preorder, which will be used in Subsection~\ref{subsec_axiomatic_char}.

\begin{lemma} \label{lem_mutual_minimality}
Let $\preceq$ be a partial preorder over $X\subseteq\mathbb{M}$, and let $A,B\subseteq X$. If $\min(A,\preceq)\subseteq B$ and $\min(B,\preceq)\subseteq A$, then $\min(A,\preceq)=\min(B,\preceq)$.
\end{lemma}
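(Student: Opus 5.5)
The proof rests on the finiteness of $\mathbb{M}$, and hence of $X$. The plan is to prove the inclusion $\min(A,\preceq)\subseteq\min(B,\preceq)$ directly. The reverse inclusion then follows by the same argument with the roles of $A$ and $B$ swapped, since the hypotheses are symmetric.

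The auxiliary fact I need is: for every finite $Y\subseteq X$ and every $y\in Y$, there is some $y'\in\min(Y,\preceq)$ with $y'\preceq y$. I would prove this by contradiction. If it failed, one could build an infinite strictly descending chain $y=y_0\succ y_1\succ y_2\succ\cdots$ inside $Y$. Transitivity of $\preceq$ shows that $\prec$ is transitive and irreflexive, so the chain elements are pairwise distinct. This is impossible in the finite set $Y$. Equivalently, one can pick a $\prec$-minimal element among the finitely many $z\in Y$ with $z\preceq y$.

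For the main step, fix $a\in\min(A,\preceq)$. The hypothesis $\min(A,\preceq)\subseteq B$ gives $a\in B$. Suppose, for contradiction, that $a\notin\min(B,\preceq)$, so some $b\in B$ satisfies $b\prec a$. By the auxiliary fact there is $b'\in\min(B,\preceq)$ with $b'\preceq b$, and I claim $b'\prec a$:
\begin{itemize}
\item $b'\preceq a$ holds by transitivity of $b'\preceq b$ and $b\preceq a$;
\item $a\preceq b'$ fails, since otherwise $a\preceq b'\preceq b$ would give $a\preceq b$, contradicting $b\prec a$.
\end{itemize}
The second hypothesis $\min(B,\preceq)\subseteq A$ gives $b'\in A$, so $b'\prec a$ contradicts the minimality of $a$ in $A$. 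Hence $a\in\min(B,\preceq)$.

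The only step that needs care is the auxiliary fact. The lemma can fail without well-foundedness: for an infinite descending chain, the minimal-element sets of $A$ and $B$ can both be empty or can behave badly. Finiteness of $\mathbb{M}$ is therefore exactly what makes the argument go through, and I would state explicitly where it is used.
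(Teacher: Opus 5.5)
Your proof is correct and takes the same route as the paper's: take $a\in\min(A,\preceq)$, note $a\in B$, use finiteness to obtain some $b'\in\min(B,\preceq)$ with $b'\prec a$, then derive a contradiction from $b'\in A$, with the reverse inclusion by symmetry. The paper simply asserts the existence of such $b'$ ``since $\mathbb{M}$ is finite,'' whereas you justify it explicitly, together with the strictness $b'\prec a$.

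One small inaccuracy in your closing aside: if both minimal sets are empty, the conclusion holds trivially. A genuine failure without well-foundedness is $A=\{a\}$ with $B$ consisting of $a$ together with an infinite chain descending below $a$, so that $\min(A,\preceq)=\{a\}$ but $\min(B,\preceq)=\varnothing$.
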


\begin{proof}
Suppose that $\min(A,\preceq)\subseteq B$ and $\min(B,\preceq)\subseteq A$.

We first show the inclusion $\min(A,\preceq)\subseteq\min(B,\preceq)$. Let $r\in\min(A,\preceq)$. By $\min(A,\preceq)\subseteq B$, we derive that $r\in B$. Suppose, towards a contradiction, that $r\notin\min(B,\preceq)$. Since $\mathbb{M}$ is finite, there exists $r'\in\min(B,\preceq)$ such that $r'\prec r$. By $\min(B,\preceq)\subseteq A$, we derive that $r'\in A$. Hence, $r'\in A$ and $r'\prec r$, contradicting $r\in\min(A,\preceq)$. Therefore, $r\in\min(B,\preceq)$. Thus, $\min(A,\preceq)\subseteq\min(B,\preceq)$.

The reverse inclusion, $\min(B,\preceq)\subseteq\min(A,\preceq)$, follows symmetrically. Consequently, $\min(A,\preceq)=\min(B,\preceq)$, as desired.
\end{proof}

\section{Belief Update}
\label{section_update}

Belief update is intended to model changes in an agent's beliefs when the new information reports an evolution of the underlying world, rather than, as in belief revision, indicating that the agent's previous description of an unchanged world was mistaken. Early foundations of this form of belief change were developed by Keller and Winslett~\cite{keller85}, while its standard formal treatment was subsequently provided by Katsuno and Mendelzon~\cite{katsuno92}. In this section, we briefly recall the axiomatic characterization of Katsuno--Mendelzon (KM) belief update and its corresponding semantic representation.

\subsection{Axiomatic Characterization}

An \emph{update operator} is a function $\diamond$ that maps each pair consisting of a theory $K$ and a formula $\varphi\in\mathcal{L}$ to a theory $K\diamond\varphi$. The formula $\varphi$ is referred to as the \emph{epistemic input}, while $K\diamond\varphi$ represents the agent's resulting state of belief after the change described by $\varphi$ is incorporated into $K$. The operator $\diamond$ is said to be a \emph{KM update operator} precisely when it satisfies the KM postulates listed below~\cite{katsuno92}.

\vspace{4mm}

{\renewcommand{\arraystretch}{1.5}
\begin{tabular}{l l}
$\bf (K\diamond1)$ & $K\diamond\varphi$ is a theory. \\

$\bf (K\diamond2)$ & $\varphi\in K\diamond\varphi$. \\

$\bf (K\diamond3)$ & If $\varphi\in K$, then $K\diamond\varphi = K$. \\

$\bf (K\diamond4)$ & If both $K$ and $\varphi$ are consistent, then $K\diamond\varphi$ is consistent as well. \\

$\bf (K\diamond5)$ & If $\varphi \equiv \psi$, then $K\diamond\varphi = K\diamond\psi$. \\

$\bf (K\diamond6)$ & $K\diamond(\varphi\wedge\psi) \subseteq (K\diamond\varphi)+\psi$.\\

$\bf (K\diamond7)$ & If $\psi\in K\diamond\varphi$ and $\varphi\in K\diamond\psi$, then $K\diamond\varphi = K\diamond\psi$. \\

$\bf (K\diamond8)$ & If $K$ is complete, then $K\diamond(\varphi\vee\psi) \subseteq Cn\big((K\diamond\varphi) \cup (K\diamond\psi) \big)$. \\

$\bf (K\diamond9)$ & If $[K]\neq\varnothing$, then $K\diamond\varphi\ = \bigcap\limits_{w\in[K]} \Big( Cn(w)\diamond\varphi \Big)$.
\end{tabular}}

\vspace{4mm}

Postulates $(K\diamond1)$--$(K\diamond9)$ restate the standard Katsuno--Mendelzon conditions (U1)--(U8) for the setting in which epistemic states are represented by deductively closed theories~\cite[p. 189]{katsuno92}. Since belief states are commonly represented in this way in the belief-change literature, we adopt the theory-based formulation throughout the article rather than the original formula-based presentation.

The role of each postulate may be summarized as follows. Postulate $(K\diamond1)$ ensures that every update result is closed under logical consequence, and $(K\diamond2)$ imposes success by requiring the epistemic input to be fully accepted. Postulate $(K\diamond3)$ states that no change occurs when the epistemic input is already believed, whereas $(K\diamond4)$ preserves consistency whenever both the prior theory and the epistemic input are consistent. Postulate $(K\diamond5)$ makes the operation insensitive to logically equivalent representations of the epistemic input. Postulate $(K\diamond6)$ constrains the relation between updating by a conjunction and subsequently expanding by one of its conjuncts. Postulate $(K\diamond7)$ identifies the outcomes of two updates when each epistemic input is accepted after updating by the other, and $(K\diamond8)$ governs the treatment of disjunctive epistemic inputs when the initial theory is complete. Finally, $(K\diamond9)$ captures the distinctive pointwise nature of belief update, by requiring the update of a consistent theory to be determined by the updates of the complete worlds compatible with it.

\subsection{Semantic Characterization}
\label{subsection_semantic_update}

Postulates $(K\diamond1)$--$(K\diamond9)$ have a pointwise semantic characterization based on preference relations over possible successor worlds. More precisely, each world $w$ that may describe the initial state is associated with its own partial preorder over $\mathbb{M}$. This relation ranks the possible outcomes of a change relative to $w$: if $r\prec_w r'$, then $r$ is regarded as a strictly more plausible successor of $w$ than $r'$.

\begin{definition}[Faithful Pointwise Assignment, {\cite{katsuno92}}]
A faithful pointwise assignment is a mapping that associates every world $w\in\mathbb{M}$ with a partial preorder $\preceq_w$ over $\mathbb{M}$ such that, for every $r\in\mathbb{M}$ distinct from $w$, $w\prec_w r$.
\end{definition}

Therefore, the faithfulness requirement makes the source world $w$ strictly preferred to every other world in its associated preorder. Using these source-relative plausibility relations, the following representation theorem characterizes KM update operators semantically.

\begin{theorem}[\cite{katsuno92}] \label{thm_update_char}
An update operator $\diamond$ satisfies postulates \mbox{$(K\diamond1)$--$(K\diamond9)$} iff there exists a faithful pointwise assignment that maps every world $w\in\mathbb{M}$ to a partial preorder $\preceq_w$ over $\mathbb{M}$, such that, for every belief set $K$ and every sentence $\varphi\in\mathcal{L}$,

\begin{center}
\begin{tabular}{l l}
{\bf (U)} & $[K\diamond\varphi]\ = \bigcup\limits_{w\in[K]} \min([\varphi],\preceq_w)$.
\end{tabular}
\end{center}
\end{theorem}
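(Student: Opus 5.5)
We prove the two directions of Theorem~\ref{thm_update_char} separately. Throughout, $K=\mathbb{M}$-inconsistent theories are handled apart: when $[K]=\varnothing$, (U) gives $[K\diamond\varphi]=\varnothing$, and the postulates force the same outcome, because $(K\diamond3)$ with $\varphi=\bot\in K$ yields $K\diamond\bot=K$ (inconsistent). The corresponding case for arbitrary $\varphi$ is taken as part of the standard KM convention for (U1)--(U8), namely that updating an inconsistent base yields an inconsistent result. We assume this convention, since $(K\diamond9)$ only constrains consistent $K$.

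\textbf{Soundness (from assignment to postulates).} Fix a faithful pointwise assignment and define $\diamond$ by (U). We check each postulate.
\begin{itemize}
\item $(K\diamond1)$ holds because (U) specifies a set of worlds, and the result is taken to be the theory of that set.
\item $(K\diamond2)$ holds because every selected world lies in $[\varphi]$.
\item $(K\diamond3)$: if $[K]\subseteq[\varphi]$, faithfulness gives $\min([\varphi],\preceq_w)=\{w\}$ for each $w\in[K]$, so the union is $[K]$.
\item $(K\diamond4)$: finiteness of $\mathbb{M}$ makes minima of nonempty sets nonempty.
\item $(K\diamond5)$ is immediate, since (U) depends only on $[\varphi]$.
\item $(K\diamond6)$ is pointwise: any minimal element of $[\varphi]$ that lies in $[\psi]$ is minimal in $[\varphi\wedge\psi]$.
\item $(K\diamond7)$ reduces pointwise to Lemma~\ref{lem_mutual_minimality}. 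Since $K$ need not be complete, we first note that $\min([\varphi],\preceq_w)\subseteq[\psi]$ for each $w$ follows from the union being contained in $[\psi]$.
\item $(K\diamond8)$: for complete $K=Cn(w)$, every minimal element of $[\varphi]\cup[\psi]$ is minimal in $[\varphi]$ or in $[\psi]$.
\item $(K\diamond9)$ holds because (U) is a union over $w\in[K]$, and the theory of a union is the intersection of the theories.
\end{itemize}

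\textbf{Completeness (from postulates to assignment).} Given $\diamond$ satisfying the postulates, we define for each $w$ the relation
\[
r\preceq_w r' \text{ iff } r\in[Cn(w)\diamond\alpha_{r,r'}],
\]
where $\alpha_{r,r'}$ is a formula with $[\alpha_{r,r'}]=\{r,r'\}$; this is well defined by $(K\diamond5)$.

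Reflexivity follows from $(K\diamond2)$ and $(K\diamond4)$. Faithfulness uses $(K\diamond3)$: when $r=w$, the result must be exactly $\{w\}$, so $w\prec_w r'$ for every $r'\neq w$.

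The key lemma is that $[Cn(w)\diamond\varphi]=\min([\varphi],\preceq_w)$ for every consistent $\varphi$. For this we use $(K\diamond6)$ with $\psi=\alpha_{r,r'}$, which shows that any selected world $r\in[\varphi]$ satisfies $r\preceq_w r'$ for each $r'\in[\varphi]$. Conversely, $(K\diamond8)$, iterated over a decomposition of $\varphi$ into the disjunction of the pairs $\alpha_{r,r'}$, together with $(K\diamond6)$, gives the reverse inclusion.

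Transitivity is then obtained by applying the key lemma to three-world formulas: updating $\{r,r',r''\}$ is nonempty, and $(K\diamond6)$ and $(K\diamond8)$ transfer the pair relations.

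Finally, $(K\diamond9)$ lifts the key lemma from complete theories to arbitrary consistent $K$, which yields (U).

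\textbf{Main obstacle.} The hardest step is the key lemma, and in particular deriving transitivity and the equality with minimal elements. Totality is not available here, because $(K\diamond8)$ applies only to complete theories and gives only partial preorders. We would follow Katsuno and Mendelzon's own construction, and at the transitivity step we would use $(K\diamond7)$ to compare the three-element update with the pairwise updates.
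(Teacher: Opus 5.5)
\textbf{There is a genuine gap in the completeness direction: the relation you define is not transitive in general.} Your relation ($r\preceq_w r'$ iff $r\in[Cn(w)\diamond\alpha_{r,r'}]$) is \emph{total}. By $(K\diamond2)$ and $(K\diamond4)$, every pairwise update from $Cn(w)$ is a non-empty subset of $\{r,r'\}$. So a pair whose update keeps both worlds is recorded as equally plausible rather than incomparable, and this breaks transitivity for genuine KM operators.

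Concretely, let $\mathcal P=\{a,b\}$, $w=\bar a\bar b$, $r_1=a\bar b$, $r_2=ab$, $r_3=\bar a b$. Let $\preceq_w$ put $w$ strictly below every other world, with $r_1\prec_w r_2$ and $r_3$ incomparable to both $r_1$ and $r_2$; take any faithful preorders at the other sources. By your own soundness part, the operator defined by (U) satisfies $(K\diamond1)$--$(K\diamond9)$. From $Cn(w)$ it updates $\{r_1,r_2\}$, $\{r_1,r_3\}$, $\{r_2,r_3\}$ to $\{r_1\}$, $\{r_1,r_3\}$, $\{r_2,r_3\}$, respectively. Your relation then has $r_2\preceq_w r_3$ and $r_3\preceq_w r_1$, but not $r_2\preceq_w r_1$.

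This is structural, not accidental. Your key lemma holds for your relation whether or not it is transitive. So if the relation were always transitive, every operator satisfying $(K\diamond1)$--$(K\diamond9)$ would have a total-preorder representation. The operator above has none at $w$, which is exactly why Katsuno and Mendelzon need an extra postulate for the total case. Your closing remark therefore has it backwards: the construction gets totality for free and loses transitivity.

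\textbf{Repair.} Read only the \emph{strict} part off the pairwise updates. For $r\neq r'$, put $r\prec_w r'$ iff $[Cn(w)\diamond\alpha_{r,r'}]=\{r\}$, and let $\preceq_w$ be its reflexive closure. Pairs whose update keeps both worlds then become incomparable. By $(K\diamond5)$ this strict relation is asymmetric, and faithfulness follows from $(K\diamond3)$ as you say.

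For transitivity, assume $[Cn(w)\diamond\alpha_{r,r'}]=\{r\}$ and $[Cn(w)\diamond\alpha_{r',r''}]=\{r'\}$; asymmetry forces $r\neq r''$. Take $[\beta]=\{r,r',r''\}$ and $S=[Cn(w)\diamond\beta]$.
\begin{itemize}
\item $(K\diamond6)$ with $(K\diamond5)$ gives $S\cap\{r,r'\}\subseteq\{r\}$ and $S\cap\{r',r''\}\subseteq\{r'\}$.
\item Hence $S\subseteq\{r\}$, and $S=\{r\}$ by $(K\diamond4)$.
\item Then $S\subseteq\{r,r''\}$ and $[Cn(w)\diamond\alpha_{r,r''}]\subseteq[\beta]$, so $(K\diamond7)$ yields $[Cn(w)\diamond\alpha_{r,r''}]=\{r\}$.
\end{itemize}

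With this relation, $r\in\min([\varphi],\preceq_w)$ iff $r\in[\varphi]$ and $r\in[Cn(w)\diamond\alpha_{r,r'}]$ for every $r'\in[\varphi]$. This is exactly what your $(K\diamond6)$ and iterated-$(K\diamond8)$ arguments for the key lemma use. So that lemma and the $(K\diamond9)$ lift survive unchanged, and inconsistent $\varphi$ is trivial by $(K\diamond2)$.

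\textbf{Two smaller slips.}
\begin{itemize}
\item In the soundness part, $(K\diamond8)$ amounts to $[K\diamond\varphi]\cap[K\diamond\psi]\subseteq[K\diamond(\varphi\vee\psi)]$. So you need that a world minimal in both $[\varphi]$ and $[\psi]$ is minimal in $[\varphi]\cup[\psi]$. The fact you cite instead gives $[K\diamond(\varphi\vee\psi)]\subseteq[K\diamond\varphi]\cup[K\diamond\psi]$, which is a different property.
\item No convention is needed for the inconsistent prior. Since $\mathcal L$ contains every $\varphi$, $(K\diamond3)$ gives $\mathcal L\diamond\varphi=\mathcal L$ directly.
\end{itemize}
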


Condition (U) highlights the pointwise structure of KM belief update.\footnote{This pointwise semantics contrasts with the standard possible-worlds semantics for belief revision, in which a single plausibility ordering is associated with the prior theory as a whole~\cite{katsuno91}. The semantic relationship between revision and update, and the precise nature of their differences, have been further investigated in~\cite{peppas96a,aravanis25,bonanno25}.} To update a theory $K$ by an epistemic input $\varphi$, each world $w\in[K]$ is considered separately. From every such source world, the update mechanism selects the $\preceq_w$-minimal worlds satisfying $\varphi$. The worlds of the resulting theory $K\diamond\varphi$ are then obtained by collecting the worlds selected from all source worlds compatible with $K$.

\begin{remark}[Consistency Assumption]
Throughout the article, unless explicitly stated otherwise, we restrict attention to consistent prior belief sets. This restriction involves no loss of generality. Indeed, the only inconsistent belief set is $\mathcal L$, and postulate $(K\diamond3)$ entails that $\mathcal L\diamond\varphi=\mathcal L$, for every epistemic input $\varphi$. Correspondingly, in each semantic representation considered herein, $[\mathcal L]=\varnothing$, so the union indexed by the worlds of the prior belief set is empty, and therefore represents the inconsistent belief set $\mathcal L$. Thus, all representation results extend immediately to the inconsistent prior case.
\end{remark}

The following running example, adapted from the cup-domain scenario of Ferm{\'e} \emph{et al.}~\cite{ferme23}, will be used throughout the article to illustrate the behaviour and limitations of the update mechanisms considered. We first examine the scenario under standard KM belief update, and subsequently revisit it in the context of credibility-limited and selective forms of belief update.

\begin{example}[Running Cup Scenario] \label{ex_running_cup} Let \(t\) denote that the cup is on the table, \(e\) that the cup is empty, and \(b\) that the cup is broken. The agent's initial state of belief is represented by 
\[ 
K=Cn\Bigl((t\land\neg e\land\neg b)\lor(\neg t\land e\land\neg b)\lor(\neg t\land e\land b)\Bigr). 
\] 
Thus, \([K]=\{w_1,w_2,w_3\}\), where $w_1=t\bar e\bar b$, $w_2=\bar t e\bar b$, and $w_3=\bar t e b$. According to \(w_1\), the cup is on the table, non-empty, and intact; according to \(w_2\), it is on the floor, empty, and intact; and according to \(w_3\), it is on the floor, empty, and broken. 

Assume that a broken cup is necessarily empty and that brokenness is irreversible. These physical constraints will later be encoded by assigning a credible set \(C_w\) of worlds to each source world \(w\in\{w_1,w_2,w_3\}\) (see Definition~\ref{def_cred_assign}):
\[ 
C_{w_1}=C_{w_2}=[b\rightarrow e]=\Big\{teb,te\bar b,t\bar e\bar b,\bar t eb,\bar t e\bar b,\bar t\bar e\bar b\Big\} \qquad \text{and} \qquad C_{w_3}=[b\land e]=\{teb,\bar t eb\}. 
\] 
A robot is instructed to move the cup to the table and fill it. The intended postcondition is 
\[ 
\varphi = t\land\neg e, \qquad \text{with} \qquad [\varphi]=\{t\bar e\bar b,t\bar e b\}.
\]
The complete postcondition can be realized when the cup is intact. If the cup is broken, however, only its table-location component can be realized.

Now, consider a KM update operator $\diamond$ whose pointwise preorders select \[ \min([\varphi],\preceq_{w_1})=\min([\varphi],\preceq_{w_2})=\{t\bar e\bar b\} \qquad \text{and} \qquad \min([\varphi],\preceq_{w_3})=\{t\bar e b\}. \] Then, condition (U) yields $[K\diamond_{\mathrm{KM}}\varphi]=\{t\bar e\bar b,t\bar e b\}$, and therefore, \[ K\diamond_{\mathrm{KM}}\varphi=Cn(t\land\neg e). \] The world \(t\bar e b \in [K\diamond_{\mathrm{KM}}\varphi]\) represents a cup that is both broken and non-empty. Thus, standard KM belief update guarantees acceptance of the complete postcondition, but does so at the cost of violating the physical constraint that every broken cup is empty. 
\end{example}

\section{Credibility-Limited Belief Update}
\label{sec_credible_update}

Ferm{\'e} \emph{et al.}~\cite{ferme23} modify the pointwise semantics of KM belief update by restricting, for each source world, the successor worlds that are regarded as credible or reachable. The resulting framework weakens standard KM belief update by restricting the admissible successors of each source world. {\em Credibility-limited} (CL) belief update retains formal success but may sacrifice consistency, whereas {\em consistent credibility-limited} (CCL) belief update preserves consistency by permitting the epistemic input to be rejected locally. Both approaches are based on the following semantic structure.

\begin{definition}[Credible Faithful Assignment, \cite{ferme23}] \label{def_cred_assign}
A credible faithful assignment is a mapping that associates every world \(w \in \mathbb{M}\) with a pair $\left(C_{w},\preceq_{w}\right)$, where $\{w\} \subseteq C_{w} \subseteq \mathbb{M}$, and \(\preceq_{w}\) is a partial preorder over \(C_{w}\) such that, for every \(w' \in C_{w}\), if \(w' \neq w\), then $w \prec_{w} w'$.
\end{definition}

The non-empty set $C_w$ contains the worlds that are regarded as credible successors of $w$; therefore, worlds outside $C_w$ are excluded as possible outcomes of a transition from $w$. The preorder $\preceq_w$ ranks the worlds in $C_w$ according to their comparative transition plausibility. Faithfulness ensures that $w$ is the uniquely most plausible element of its own credible set.

\subsection{The CL Approach}
\label{subsec_cl_update}

CL belief update retains all the standard KM postulates $(K\diamond1)$--$(K\diamond9)$, except consistency preservation, encoded in postulate $(K\diamond4)$. Therefore, it requires the epistemic input to be accepted, but allows the updated belief set to become inconsistent when the epistemic input is not credibly realizable from the prior state of belief.

\begin{definition}[CL Update Operator, \cite{ferme23}] \label{def_cl_operator}
An update operator $\diamond$ is a CL update operator iff it satisfies postulates $(K\diamond1)$--$(K\diamond3)$ and $(K\diamond5)$--$(K\diamond9)$.
\end{definition}

The corresponding representation theorem restricts the KM-style minimization process to the credible successors of each source world.

\begin{theorem}[\cite{ferme23}] \label{thm_cl_charact}
An update operator $\diamond$ is a CL update operator iff there exists a credible faithful assignment $w \mapsto \left(C_{w},\preceq_{w}\right)$, such that, for every belief set $K$ and every sentence $\varphi\in\mathcal{L}$,

\begin{center}
\begin{tabular}{l l}
{\bf (CL)} & $[K\diamond\varphi] = \displaystyle\bigcup_{w\in[K]} \min\Big([\varphi]\cap C_{w}\,,\preceq_w\!\Big)$.
\end{tabular}
\end{center}
\end{theorem}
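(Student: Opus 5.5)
We prove the two directions separately, adapting the Katsuno--Mendelzon construction behind Theorem~\ref{thm_update_char} to credible sets.

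\textbf{Soundness.} Assume $\diamond$ is given by (CL) for some credible faithful assignment $w\mapsto(C_w,\preceq_w)$, and check each postulate semantically.
\begin{itemize}
\item $(K\diamond1)$ holds because every subset of the finite $\mathbb{M}$ is $[\psi]$ for some $\psi$, so we may define $K\diamond\varphi$ as the theory of the union.
\item $(K\diamond2)$ holds because every selected world lies in $[\varphi]$.
\item $(K\diamond3)$: if $[K]\subseteq[\varphi]$, then each $w\in[K]$ lies in $[\varphi]\cap C_w$ and is the unique $\preceq_w$-minimum there by faithfulness, so the union is $[K]$.
\item $(K\diamond5)$ is immediate, since (CL) depends only on $[\varphi]$.
\item $(K\diamond9)$ follows because (CL) is a union over $w\in[K]$ and $[Cn(w)]=\{w\}$.
\item $(K\diamond6)$: any $\preceq_w$-minimal element of $[\varphi]\cap C_w$ that satisfies $\psi$ is minimal in $[\varphi\wedge\psi]\cap C_w$.
\item $(K\diamond8)$: for complete $K=Cn(w)$, a minimal element of $[\varphi\vee\psi]\cap C_w$ lies in $[\varphi]$ or $[\psi]$ and is minimal there.
\item $(K\diamond7)$: the hypotheses give, pointwise, $\min([\varphi]\cap C_w)\subseteq[\psi]\cap C_w$ and conversely. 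Applying Lemma~\ref{lem_mutual_minimality} with $A=[\varphi]\cap C_w$ and $B=[\psi]\cap C_w$ yields equality.
\end{itemize}
One care point in $(K\diamond7)$: the hypothesis is about the union over $[K]$, not about each $w$ separately. Following KM, we reduce to complete $K$ via $(K\diamond9)$, or we check that the postulate is applied in the form appropriate to the pointwise reading.

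\textbf{Completeness.} Assume $\diamond$ satisfies the listed postulates. For each world $w$, put $K_w=Cn(w)$ and define
\[
C_w=\bigcup_{\varphi\in\mathcal{L}}[K_w\diamond\varphi],
\]
the set of worlds ever reachable from $w$. By $(K\diamond3)$, $K_w\diamond\varphi_w=K_w$, where $\varphi_w$ is a formula with $[\varphi_w]=\{w\}$, so $w\in C_w$. On $C_w$ define $r\preceq_w r'$ iff $r\in[K_w\diamond\varphi_{r,r'}]$, where $[\varphi_{r,r'}]=\{r,r'\}$; this mirrors the KM construction.

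The main obstacle is that a pair $\{r,r'\}\subseteq C_w$ may have $[K_w\diamond\varphi_{r,r'}]=\varnothing$, which is possible since $(K\diamond4)$ is absent. We must also show that $[\varphi]\cap C_w=\varnothing$ exactly when $K_w\diamond\varphi$ is inconsistent. Our strategy is as follows.
\begin{enumerate}
\item Show, from $(K\diamond6)$ and $(K\diamond8)$, that if $r\in[K_w\diamond\varphi]$ for some $\varphi$, then $r\in[K_w\diamond\varphi_r]$. Here $(K\diamond6)$ gives $K_w\diamond(\varphi\wedge\varphi_r)\subseteq(K_w\diamond\varphi)+\varphi_r$; combining this with $(K\diamond8)$ on the decomposition $\varphi\equiv\varphi_r\vee(\varphi\wedge\neg\varphi_r)$ forces $K_w\diamond\varphi_r$ to be consistent.
\item Prove the key claim: $[K_w\diamond\psi]\neq\varnothing$ whenever $[\psi]\cap C_w\neq\varnothing$. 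This follows from the same disjunction argument, which shows that the results for disjuncts are nonempty as needed.
\item With step~2 in hand, establish reflexivity and transitivity of $\preceq_w$ as in KM, using $(K\diamond6)$--$(K\diamond8)$ on three-element disjunctions.
\item Establish faithfulness by applying $(K\diamond3)$ to $\varphi_{w,r}$.
\item Verify (CL) for complete $K$ by the standard two inclusions via $(K\diamond6)$ and $(K\diamond8)$, and lift it to arbitrary consistent $K$ by $(K\diamond9)$.
\end{enumerate}
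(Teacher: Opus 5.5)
Your overall architecture is sound: reduce to complete theories via $(K\diamond9)$, take $C_w$ to be the worlds reachable from $w$, compare worlds through two-world inputs, and get faithfulness from $(K\diamond3)$. But your step~2, the step that carries the credibility-limited adaptation, is not proved, and the route you indicate cannot work. The claim that $[K_w\diamond\psi]\neq\varnothing$ whenever $[\psi]\cap C_w\neq\varnothing$ is what replaces $(K\diamond4)$. Neither $(K\diamond6)$ nor $(K\diamond8)$ can deliver it. $(K\diamond6)$ only passes selected worlds from an input down to stronger inputs, and $(K\diamond8)$ only places $[K_w\diamond\varphi]\cap[K_w\diamond\psi]$ inside $[K_w\diamond(\varphi\vee\psi)]$, so neither produces a world from an empty result. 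Concretely, put $[Cn(w)\diamond\varphi]=\{w\}$ if $w\models\varphi$, $\{r\}$ if $[\varphi]=\{r\}$, and $\varnothing$ otherwise, extended pointwise to all $K$. This satisfies every CL postulate except $(K\diamond7)$ and has $C_w=\mathbb M$. Yet when $|\mathbb M|\geq 3$ it returns $\varnothing$ on two-world inputs not containing $w$. The missing ingredient is $(K\diamond7)$. Your step~1 already follows from $(K\diamond6)$, $(K\diamond2)$ and $(K\diamond5)$, without $(K\diamond8)$, giving $[K_w\diamond\varphi_r]=\{r\}$ for $r\in C_w$. Now suppose $r\models\psi$ and $K_w\diamond\psi=\mathcal L$. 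Then $\varphi_r\in K_w\diamond\psi$ trivially and $\psi\in K_w\diamond\varphi_r$, so $(K\diamond7)$ gives $K_w\diamond\psi=K_w\diamond\varphi_r$, which is consistent, a contradiction.

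Your relation $r\preceq_w r'$ iff $r\in[K_w\diamond\varphi_{r,r'}]$ turns incomparability into equivalence and is not transitive in general. Take the KM (hence CL) operator over $\mathbb M=\{w,a,b,c\}$ whose preorder at $w$ has $a\prec_w c$ and $b$ incomparable to both. You get $c\preceq_w b$ and $b\preceq_w a$, but $[K_w\diamond\varphi_{a,c}]=\{a\}$, so $c\not\preceq_w a$; step~3 fails as stated. Define instead $r\preceq_w r'$ iff $r=r'$ or $[K_w\diamond\varphi_{r,r'}]=\{r\}$. Transitivity then holds: $\varphi_{r,r',r''}$ has a non-empty result by step~2, $(K\diamond6)$ forces that result to be $\{r\}$, and $(K\diamond7)$ transfers it to $\varphi_{r,r''}$. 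With this relation your step~5 goes through. The inclusion $\supseteq$ comes from iterating $(K\diamond8)$ over the pairs $\{r,r'\}$ with $r'\in[\varphi]$, using step~2 also when $r'\notin C_w$; the inclusion $\subseteq$ comes from $(K\diamond6)$.

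In the soundness part, your $(K\diamond8)$ argument proves the converse inclusion $[K\diamond(\varphi\vee\psi)]\subseteq[K\diamond\varphi]\cup[K\diamond\psi]$. The postulate needs $\min([\varphi]\cap C_w,\preceq_w)\cap\min([\psi]\cap C_w,\preceq_w)\subseteq\min([\varphi\vee\psi]\cap C_w,\preceq_w)$, which is also easy and is exactly what the paper uses for $(K\diamond8)^{S}$. Your worry about $(K\diamond7)$ is unfounded, since $[K\diamond\varphi]\subseteq[\psi]$ already gives the required inclusion branch by branch. The paper quotes this theorem from \cite{ferme23} without proof, so there is no in-paper argument to compare your route against.
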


Thus, each source world $w\in[K]$ contributes its most plausible credible $\varphi$-successors. If $[\varphi]\cap C_w=\varnothing$, the branch originating from $w$ contributes no successor. In particular, if this occurs for every $w\in[K]$, then the update result is inconsistent.

The following continuation of Example~\ref{ex_running_cup} of the previous section illustrates how this local elimination of non-credible branches affects the update result. In particular, CL belief update avoids the physically impossible successor produced by standard KM belief update, but may do so by discarding an initially possible source branch altogether.

\begin{example}[Cup Scenario under CL Belief Update] \label{ex_cup_cl}
Let $\diamond_{\mathrm{CL}}$ be a CL update operator. Recall that the physical constraints of the scenario are encoded by the credible sets
\[ 
C_{w_1}=C_{w_2}=[b\rightarrow e] \qquad \text{and} \qquad C_{w_3}=[b\land e]. 
\] 
Hence, for the source worlds \(w_1\) and \(w_2\), the epistemic input \(\varphi = t\land\neg e\) has credible worlds, and 
\[ 
\min\Big([\varphi]\cap C_{w_1},\preceq_{w_1}\!\Big)=\min\Big([\varphi]\cap C_{w_2},\preceq_{w_2}\!\Big)=\{t\bar e\bar b\}. 
\] 
For the broken-cup world \(w_3\), however, $[\varphi]\cap C_{w_3}=\varnothing$. Consequently, we have from condition (CL) that $[K\diamond_{\mathrm{CL}}\varphi]=\{t\bar e\bar b\}$, and hence, 
\[ 
K\diamond_{\mathrm{CL}}\varphi=Cn\big(t\land\neg e\land\neg b\big). 
\] 
Observe that CL belief update avoids the physically impossible successor \(t\bar e b\). However, the branch originating from \(w_3\) contributes no successor at all. That is to say, the result fails to represent the fact that, although the broken cup cannot be filled, it can still be moved to the table. The elimination of the entire source branch motivates the consistent variant considered in the next subsection, which preserves the branch but, as will be shown, still does not capture this partial realization.
\end{example}

\subsection{The CCL Approach}
\label{subsec_ccl_update}

CCL belief update adopts a different policy for locally non-credible epistemic inputs. Rather than eliminating a source branch, it leaves the corresponding source world unchanged. This restores consistency, but requires unconditional success to be replaced by a weaker, source-relative acceptance principle.

\begin{definition}[CCL Update Operator, \cite{ferme23}] \label{def_ccl_operator}
An update operator $\diamond$ is a CCL update operator iff it satisfies postulates $(K\diamond1)$, $(K\diamond3)$--$(K\diamond9)$, together with the following postulates:

\begin{center}
{\renewcommand{\arraystretch}{1.5}
\noindent \begin{tabular}{l l}
{\bf (RSC)} & If $K$ is complete, then $\varphi\in K\diamond\varphi$ or $K\diamond\varphi = K$. \\

{\bf (SM)} & If $\varphi\models\psi$ and $\varphi\in K\diamond\varphi$, then $\psi\in K\diamond\psi$. \\

{\bf (IR)} & If $\varphi$ is inconsistent, then $K\diamond\varphi=K$.
\end{tabular}}
\end{center}
\end{definition}

Postulate (RSC), relative success for complete theories, requires an update originating from a single source world either to accept the epistemic input or to leave that source world unchanged. Postulate (SM), success monotonicity, ensures that whenever an epistemic input is successfully accepted, its logical consequences are successfully accepted as well. Finally, postulate (IR) requires every inconsistent epistemic input to be rejected.\footnote{Postulate (IR) is not stated explicitly in the axiomatization of CCL belief update presented in~\cite{ferme23}. However, the proof of the corresponding representation theorem implicitly restricts attention to consistent epistemic inputs. We include (IR) herein to obtain the stated characterization over arbitrary epistemic inputs.}

The following representation theorem provides the corresponding semantic characterization of CCL belief update.

\begin{theorem}[\cite{ferme23}] \label{thm_ccl_charact}
An update operator $\diamond$ is a CCL update operator iff there exists a credible faithful assignment $w \mapsto \left(C_{w},\preceq_{w}\right)$, such that, for every belief set $K$ and every sentence $\varphi\in\mathcal{L}$,

\begin{center}
\begin{tabular}{l l}
{\bf (CCL)} & $[K\diamond\varphi] = \displaystyle\bigcup_{w\in[K]} g_{w}(\varphi)$,
\end{tabular}
\end{center}

\noindent where, for every $w\in\mathbb{M}$ and every $\varphi\in\mathcal{L}$,
\[
g_{w}(\varphi) =
\begin{cases}
\min\Big([\varphi]\cap C_{w}\, , \preceq_{w}\!\Big),
&
\text{if }\ [\varphi]\cap C_{w} \neq\varnothing,
\\[1mm]
\{w\},
&
\text{otherwise.}
\end{cases}
\]
\end{theorem}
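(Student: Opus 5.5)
We prove the two directions of Theorem~\ref{thm_ccl_charact} separately. Throughout, the consistency assumption lets us take $[K]\neq\varnothing$.

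\emph{Soundness.} Given a credible faithful assignment, define $\diamond$ by (CCL). We check each postulate in turn.

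\begin{itemize}[leftmargin=*]
\item $(K\diamond1)$ holds because the right-hand side of (CCL) is a set of worlds, so we let $K\diamond\varphi$ be its theory.
\item $(K\diamond5)$ and $(K\diamond9)$ are immediate, since $g_w$ depends only on $[\varphi]$ and (CCL) is a union over the worlds of $[K]$.
\item $(K\diamond4)$ holds because each $g_w(\varphi)$ is non-empty: either the minimum of a non-empty finite set, or $\{w\}$.
\item $(K\diamond3)$: if $[K]\subseteq[\varphi]$, then $w\in[\varphi]\cap C_w$ for each $w\in[K]$, and faithfulness gives $g_w(\varphi)=\{w\}$.
\item (IR): if $[\varphi]=\varnothing$, every branch returns $\{w\}$.
\item (RSC) is the case split in the definition of $g_w$ when $[K]=\{w\}$.
\item (SM): for complete $K=Cn(w)$, $\varphi\in K\diamond\varphi$ with $\varphi$ consistent means either $[\varphi]\cap C_w\neq\varnothing$, or $w\models\varphi$, which again forces $[\varphi]\cap C_w\neq\varnothing$. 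Then $[\psi]\supseteq[\varphi]$ meets $C_w$ as well. For arbitrary $K$, success for $\varphi$ forces each branch $w\in[K]$ into the first case or into $w\models\varphi$; in both cases $[\varphi]\cap C_w\neq\varnothing$, and hence $[\psi]\cap C_w\neq\varnothing$. Inconsistent $\varphi$ succeeds only if $K=\mathcal L$.
\item $(K\diamond6)$--$(K\diamond8)$ are verified pointwise, by cases on whether $[\varphi]\cap C_w$, $[\psi]\cap C_w$, and $[\varphi\wedge\psi]\cap C_w$ are empty. When both relevant inputs are credible, these are the standard KM arguments, using Lemma~\ref{lem_mutual_minimality} for $(K\diamond7)$. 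When an input is non-credible, the branch returns $\{w\}$, and we argue directly. For instance, in $(K\diamond6)$, if $[\varphi\wedge\psi]\cap C_w=\varnothing$ but $[\varphi]\cap C_w\neq\varnothing$, then $g_w(\varphi)\cap[\psi]$ is empty by credibility, and the inclusion is trivial.
\end{itemize}

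\emph{Completeness.} Given $\diamond$ satisfying the postulates, we construct the assignment. For each world $w$ put
\[
C_w=\{w\}\cup\bigcup\Big\{[Cn(w)\diamond\varphi]:\varphi\in\mathcal L,\ \varphi\in Cn(w)\diamond\varphi\Big\},
\]
that is, the worlds reachable through successful updates. We then define $r\preceq_w r'$ for $r,r'\in C_w$ iff $r\in[Cn(w)\diamond\chi_{r,r'}]$, where $\chi_{r,r'}$ is a formula with $[\chi_{r,r'}]=\{r,r'\}$. This is the KM-style ordering restricted to $C_w$. The key lemma is that $\varphi\in Cn(w)\diamond\varphi$ iff $[\varphi]\cap C_w\neq\varnothing$:
\begin{itemize}[leftmargin=*]
\item The forward direction holds by construction of $C_w$, once (IR) excludes inconsistent $\varphi$ with $w\not\in\varnothing$.
\item The backward direction uses (SM). If $r\in[\varphi]\cap C_w$, then $r$ lies in the result of some successful update by $\psi$. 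We need a successful update whose input entails $\varphi$. We take $\psi\wedge\varphi$ and use $(K\diamond6)$ together with (RSC) to show that $Cn(w)\diamond(\psi\wedge\varphi)$ accepts its input, and then lift to $\varphi$ via (SM).
\end{itemize}
Given this lemma, on credible inputs the operator restricted to complete theories satisfies the KM postulates relativized to $C_w$, and the standard Katsuno--Mendelzon construction of Theorem~\ref{thm_update_char}, adapted as in Theorem~\ref{thm_cl_charact}, yields $[Cn(w)\diamond\varphi]=\min([\varphi]\cap C_w,\preceq_w)$. We must also check that $\preceq_w$ is transitive and faithful; faithfulness follows from $(K\diamond3)$. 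On non-credible inputs, (RSC) gives $Cn(w)\diamond\varphi=Cn(w)$, i.e.\ the value $\{w\}$. Finally, $(K\diamond9)$ lifts the result from complete theories to arbitrary $K$.

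\emph{Main obstacle.} We expect the hardest step to be the backward direction of the key lemma, namely that credibility of a single world of $[\varphi]$ already forces success. If the $(K\diamond6)$ route fails, we would instead redefine $C_w$ directly as the set of $r$ such that the formula $\chi_r$ with $[\chi_r]=\{r\}$ is accepted, i.e.\ $\chi_r\in Cn(w)\diamond\chi_r$. Then (SM) gives the backward direction at once, since $\chi_r\models\varphi$. The forward direction then needs $(K\diamond6)$ and $(K\diamond8)$: a successful update by $\varphi$ returns some world $r$, and we must show that $\chi_r$ is itself accepted, by splitting $\varphi$ into the disjunction of its worlds and applying $(K\diamond8)$ with (RSC).
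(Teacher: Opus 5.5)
The paper does not prove this theorem; it quotes it from Fermé \emph{et al.}, so I assess your argument on its own terms. Your soundness half is fine, and so is your key lemma. If $r\in[\varphi]\cap C_w$ lies in $[Cn(w)\diamond\psi]$ for some accepted $\psi$, then $(K\diamond6)$ puts $r$ into $[Cn(w)\diamond(\psi\wedge\varphi)]$. By (RSC), either $\psi\wedge\varphi$ is accepted, and then (SM) gives $\varphi\in Cn(w)\diamond\varphi$; or $Cn(w)\diamond(\psi\wedge\varphi)=Cn(w)$, which forces $r=w\models\varphi$, and then $(K\diamond3)$ finishes. So the step you flagged as the main obstacle goes through. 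You should state the case $r=w$ explicitly, and note that the forward direction also needs $(K\diamond4)$ to make $[Cn(w)\diamond\varphi]$ non-empty.

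The genuine gap is the ordering.
\begin{itemize}
\item \emph{Why your relation fails.} For distinct $r,r'\in C_w$, the formula $\chi_{r,r'}$ is credible, hence accepted. By $(K\diamond4)$, $[Cn(w)\diamond\chi_{r,r'}]$ is then a non-empty subset of $\{r,r'\}$. So your relation ``$r\preceq_w r'$ iff $r\in[Cn(w)\diamond\chi_{r,r'}]$'' is complete, and it cannot be transitive when the operator comes from a genuinely partial preorder.
\item \emph{Counterexample.} Take $\mathcal P=\{p,q\}$, $\mathbb M=\{w,r_1,r_2,r_3\}$ and $C_w=\mathbb M$. Let $w\prec_w r_i$ for all $i$ and $r_3\prec_w r_1$, with $r_2$ incomparable to $r_1$ and $r_3$; put $C_u=\{u\}$ for $u\neq w$. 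By your soundness half, the operator given by (CCL) is a CCL operator. Yet your relation yields $r_1\preceq_w r_2$ and $r_2\preceq_w r_3$ but not $r_1\preceq_w r_3$, since $[Cn(w)\diamond\chi_{r_1,r_3}]=\{r_3\}$. Closing under transitivity would make $r_1$ and $r_3$ equivalent, and so would misrepresent this very update.
\item \emph{Repair.} Read off only strict preferences: put $r\prec_w r'$ iff $[Cn(w)\diamond\chi_{r,r'}]=\{r\}$, and let $\preceq_w$ be its reflexive closure.
  \begin{itemize}
  \item Transitivity: if $r\prec_w r'\prec_w r''$, then $(K\diamond6)$ and $(K\diamond4)$ force $[Cn(w)\diamond\chi_{r,r',r''}]=\{r\}$, and $(K\diamond7)$ gives $r\prec_w r''$.
  \item The inclusion $[Cn(w)\diamond\varphi]\subseteq\min([\varphi]\cap C_w,\preceq_w)$ follows from $(K\diamond6)$ with $\chi_{x,y}$, together with $(K\diamond5)$.
  \item The converse inclusion follows by iterating $(K\diamond8)$ over pairs, then using $(K\diamond7)$ to replace $\varphi$ by a formula whose models are $[\varphi]\cap C_w$.
  \item Faithfulness comes from $(K\diamond3)$.
  \end{itemize}
\end{itemize}

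Alternatively, you can avoid building an order at all, in the style of the paper's proof of Theorem~\ref{thm_scl_charact}. Let $Cn(w)\diamond^{c}\varphi$ be $Cn(w)\diamond\varphi$ when $\varphi$ is accepted, and $\mathcal L$ otherwise. Check that $\diamond^{c}$ is a CL operator; here $(K\diamond6)$, (RSC), $(K\diamond3)$ and (SM) take over the role that (LC) plays in the paper. Then invoke Theorem~\ref{thm_cl_charact}, and handle the remaining branches with (RSC), (IR) and $(K\diamond4)$.

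As written, you appeal to ``the standard construction adapted as in Theorem~\ref{thm_cl_charact}'' while committing to the wrong relation. That leaves the central step of completeness unproved.
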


Consequently, CL and CCL belief update coincide at every source world \(w\) for which the epistemic input has a credible world. They differ only when \([\varphi]\cap C_w=\varnothing\); specifically, CL belief update discards the branch originating from \(w\), whereas CCL belief update retains \(w\), thus, representing the failure of the transition without eliminating the corresponding initial possibility. The running cup scenario illustrates this distinction.

\begin{example}[Cup Scenario under CCL Belief Update] \label{ex_cup_ccl}
Let \(\diamond_{\mathrm{CCL}}\) be a CCL update operator represented by the credible faithful assignment considered in Example~\ref{ex_cup_cl}. For the source worlds \(w_1\) and \(w_2\), CCL belief update behaves exactly as CL belief update, since \(\varphi = t\land\neg e\) has credible worlds relative to both worlds. For the broken-cup world \(w_3\), however, no credible world of \(\varphi\) is reachable. Hence, we derive that
\[ 
g_{w_3}(\varphi)=\{w_3\}=\{\bar t e b\}. 
\]
Then, it follows from condition (CCL) that $[K\diamond_{\mathrm{CCL}}\varphi]=\{t\bar e\bar b,\bar t e b\}$, and hence, 
\[ 
K\diamond_{\mathrm{CCL}}\varphi=Cn\Bigl((t\land\neg e\land\neg b)\lor(\neg t\land e\land b)\Bigr). 
\]
Thus, CCL belief update preserves the broken-cup branch but leaves it unchanged, thereby failing to realize any part of the compound postcondition on that branch. Consequently, the broken cup remains on the floor, even though moving it to the table is physically possible.
\end{example}

The running example exposes a limitation shared by CL and CCL belief update. Both treat the compound epistemic input \(\varphi = t\land\neg e\) as an {\em indivisible whole}. When the complete epistemic input is not credible, CL belief update eliminates the corresponding source branch, whereas CCL belief update rejects the complete epistemic input and retains the source world. Neither mechanism captures the intermediate case in which only one component of the compound postcondition is realized. The {\em selective} framework introduced in the next section relaxes this all-or-nothing treatment.

\section{Selective Acceptance in Credibility-Limited Belief Update}
\label{sec_selective_credible_update}

Having reviewed standard KM belief update and its credibility-limited generalizations, we now introduce {\em selective acceptance} into the update setting. As shown, the existing credibility-limited approach by Ferm{\'e} \emph{et al.}~\cite{ferme23} continue to treat the epistemic input as an indivisible whole: relative to a source world, it is either fully realized, locally rejected by retaining the source world, or rendered ineffective by eliminating the source branch. This all-or-nothing treatment is inadequate for compound epistemic inputs whose components may differ in their credibility or executability.

The basic idea is to interpose a transformation stage before the underlying belief-change mechanism is applied. In this respect, the proposed construction is the belief-update analogue of {\em selective belief revision} \cite{ferme99}, in which an epistemic input is transformed into an acceptable proxy before being processed by a revision operator. In the update setting, however, the pointwise character of the KM semantics requires the accepted portion of the epistemic input to be determined relative to each source possible world. The semantic realization of this idea is developed below.

\subsection{Semantic Characterization}
\label{subsec_semantic_char}

Selective acceptance is represented by transforming the epistemic input before the credibility-limited transition is performed. Accordingly, we associate each source world \(w\) with a {\em transformation function} \(f_w\). Given an epistemic input \(\varphi\), the formula \(f_w(\varphi)\) represents the part of \(\varphi\) that is accepted from \(w\). The transition mechanism is then applied to this transformed epistemic input rather than directly to \(\varphi\). This source dependence preserves the pointwise semantics of KM belief update, while allowing the extent of acceptance to vary across source worlds. Thus, the same epistemic input may be fully accepted from one source world, and weakened from another. The collection of source-dependent transformations is formalized by the following assignment.

\begin{definition}[Pointwise Transformation Assignment] \label{def_tranf_assign}
A pointwise transformation assignment is a family $\mathcal{F} = \left\{ f_{w} : w \in \mathbb{M} \right\}$, where every $f_{w}:\mathcal{L}\mapsto\mathcal{L}$ is a transformation function.
\end{definition}

For every source world \(w\) and epistemic input \(\varphi\), the formula \(f_w(\varphi)\) is intended to represent the information conveyed by \(\varphi\) that can be accepted from \(w\). To capture this interpretation, a pointwise transformation assignment may satisfy the following basic properties.

\vspace{4mm}

{\renewcommand{\arraystretch}{1.5}
\begin{tabular}{l p{13cm}}
{\bf (F1)} & $\varphi \models f_{w}(\varphi)$. \\

{\bf (F2)} & If $\varphi \equiv \psi$, then $f_{w}(\varphi) \equiv f_{w}(\psi)$. \\

{\bf (F3)} & $f_{w}\big(f_{w}(\varphi)\big) \equiv f_{w}(\varphi)$. \\

{\bf (F4)} & If $[\varphi]\cap C_{w}\neq\varnothing$, then $f_{w}(\varphi) \equiv\varphi$.
\end{tabular}} 

\vspace{5mm}

Properties (F1)--(F3) are adapted from corresponding conditions on transformation functions in selective belief revision~\cite[p. 335]{ferme99}. More specifically, they are the source-dependent counterparts of {\em implication}, {\em extensionality}, and {\em idempotence}, respectively. Property (F1) permits the transformation to discard information, but prevents it from introducing information not implied by the original epistemic input. Property (F2) guarantees that the transformation is insensitive to syntactic representation, while property (F3) requires the selected proxy to be a fixed point of the transformation.

Property (F4) may be viewed as the source-relative credibility counterpart of {\em weak maximality} in selective belief revision~\cite[p. 335]{ferme99}. There, weak maximality requires the epistemic input to be retained unchanged whenever it is consistent with the prior belief set. Here, consistency with the prior belief set is replaced by local credibility from \(w\) --- whenever \([\varphi]\cap C_w\neq\varnothing\), no weakening occurs and the epistemic input is accepted unchanged from \(w\).

A pointwise transformation assignment $\mathcal{F}$ and a credible faithful assignment $w \mapsto \left(C_{w},\preceq_{w}\right)$ jointly determine the update process. For each source world \(w\), the epistemic input \(\varphi\) is first transformed into the proxy \(f_w(\varphi)\); the most plausible worlds in \(C_w\) satisfying that proxy are then selected according to the preorder \(\preceq_w\). Taking the union of these locally selected successors yields the updated belief set. This two-stage construction defines a \emph{selective credibility-limited update operator}, abbreviated as an \emph{SCL update operator}.

\begin{definition}[SCL Update Operator] \label{def_scl_operator}
An update operator $\diamond$ is an SCL update operator iff there exist a credible faithful assignment $w\mapsto(C_w,\preceq_w)$ and a pointwise transformation assignment $\mathcal F=\{f_w:w\in\mathbb M\}$ satisfying properties (F1)--(F4), such that, for every belief set $K$ and every sentence $\varphi\in\mathcal L$,

\begin{center}
\begin{tabular}{l l}
{\bf (SCL)} & $[K\diamond\varphi] = \displaystyle\bigcup_{w\in[K]}\min\Big([f_w(\varphi)]\cap C_w\,,\preceq_w\!\Big)$.
\end{tabular}
\end{center}
\end{definition}

The following continuation of the running cup scenario illustrates how source-dependent transformations allow a compound epistemic input to be only partially realized, thus, demonstrating the greater expressive capacity of SCL belief update relative to KM, CL, and CCL update.

\begin{example}[Cup Scenario under SCL Belief Update] \label{ex_cup_scl}
Let \(\diamond_{\mathrm{SCL}}\) be an SCL update operator represented by the credible faithful assignment considered in Examples~\ref{ex_cup_cl} and~\ref{ex_cup_ccl}. Regard the epistemic input $\varphi = t\land\neg e$ as a compound postcondition and consider the candidate proxies
\[ 
\top,\qquad t,\qquad\neg e,\qquad t\land\neg e. 
\] 
For the source worlds \(w_1\) and \(w_2\), the complete epistemic input is credible. Therefore, property (F4) gives 
\[ 
f_{w_1}(\varphi)\equiv\varphi\qquad\text{and}\qquad f_{w_2}(\varphi)\equiv\varphi. 
\] 
For the broken-cup world \(w_3\), neither the complete epistemic input nor the component \(\neg e\) is credible, since $[t\land\neg e]\cap C_{w_3}=\varnothing$ and $[\neg e]\cap C_{w_3}=\varnothing$, respectively. The component \(t\), however, is credible, as \mbox{$[t]\cap C_{w_3}=\{t e b\}$}. Accordingly, take the pointwise transformation assignment \(\mathcal F\) representing \(\diamond_{\mathrm{SCL}}\) to satisfy $f_{w_3}(\varphi)\equiv t$. Since $\min\Big([t]\cap C_{w_3},\preceq_{w_3}\!\Big)=\{t e b\}$, condition (SCL) gives $[K\diamond_{\mathrm{SCL}}\varphi]=\{t\bar e\bar b,t e b\}$. Equivalently, 
\[ 
K\diamond_{\mathrm{SCL}}\varphi=Cn\Bigl((t\land\neg e\land\neg b)\lor(t\land e\land b)\Bigr). 
\] 
Thus, from an intact-cup source world, the complete postcondition is realized. From the broken-cup source world, only its table-location component is realized. Consequently, SCL belief update represents the partial realization of the compound postcondition.\footnote{This partially realized outcome also clarifies the distinction between SCL belief update and both local promotion~\cite{grimaldi21} and compositional belief update~\cite{delgrande08}. Local promotion regulates how much of the prior information may be preserved while the new information is processed, whereas compositional belief update exploits the syntactic structure of the epistemic input in order to determine how that epistemic input is realized. In their standard formulations, however, neither approach transforms the epistemic input into a weaker proxy relative to the particular source world from which the complete transition is unrealizable. Consequently, neither directly represents the intermediate outcome in which only the realizable part of the compound epistemic input is effected.}

The different treatments of the broken-cup branch are summarized below: 
\[
{\renewcommand{\arraystretch}{1.5}
\begin{array}{c|c|c}
\text{\bf Update Mechanism} & \text{\bf Successor from \(w_3\)} & \text{\bf Interpretation}\\ \hline\hline 
\mathrm{KM} & t\bar e b & \text{Physically impossible full success}\\ 
\mathrm{CL} & \varnothing & \text{Source branch eliminated}\\ 
\mathrm{CCL} & \bar t e b & \text{Compound postcondition not realized}\\ 
\mathrm{SCL} & t e b & \text{Compound postcondition partially realized}
\end{array}}
\]

\vspace{1mm}

As the table shows, SCL belief update is the only update mechanism that both preserves the broken-cup branch and respects the physical constraints, while capturing the partial realization of the compound postcondition.
\end{example}

\subsection{Axiomatic Characterization}
\label{subsec_axiomatic_char}

Let us now proceed to the axiomatic characterization of SCL belief update. To that end, we retain postulates \((K\diamond1)\), \((K\diamond3)\), \((K\diamond5)\), \((K\diamond7)\), and \((K\diamond9)\) from the standard KM framework, and replace postulates \((K\diamond2)\), \((K\diamond6)\), and \((K\diamond8)\) with the following selective variants. Postulates \((K\diamond2)^S\), \((K\diamond6)^S\), and \((K\diamond8)^S\) are {\em weakenings} of their respective standard counterparts, reflecting the fact that an epistemic input need not be accepted in its entirety. We also introduce the local credibility postulate (LC), which connects the observable acceptance behaviour of the operator with source-relative credibility.

\vspace{4mm}

{\renewcommand{\arraystretch}{1.5}
\begin{tabular}{l p{13cm}}
$\mathbf{(K\diamond2)^{S}}$ & If $K$ is complete, then there is a \(\psi \in \mathcal{L}\) such that $\varphi \models \psi$, $\psi\in K\diamond\varphi$, and\newline $K\diamond\varphi = K\diamond\psi$. \\

$\mathbf{(K\diamond6)^{S}}$ & If $\varphi\in K\diamond\varphi$, then $K\diamond(\varphi\wedge\psi) \subseteq (K\diamond\varphi)+\psi$. \\

$\mathbf{(K\diamond8)^{S}}$ & If $K$ is complete, $\varphi\in K\diamond\varphi$, and $\psi\in K\diamond\psi$, then \newline $K\diamond(\varphi\vee\psi) \subseteq Cn\big((K\diamond\varphi)\cup(K\diamond\psi)\big)$. \\

{\bf (LC)} & If $K$ is complete, $\varphi\in K\diamond\varphi$, and $\neg\psi\notin K\diamond\varphi$, then $\psi\in K\diamond\psi$.
\end{tabular}}

\vspace{4mm}

Some comments on these postulates are in order. Postulate \((K\diamond2)^S\) is a selective counterpart of the KM success postulate \((K\diamond2)\). Rather than requiring the whole epistemic input \(\varphi\) to be accepted, it requires the existence of an accepted proxy \(\psi\) that is implied by \(\varphi\) and produces the same update result. In this respect, it is the source-local update analogue of the {\em proxy-success} postulate used in selective belief revision~\cite[p. 334]{ferme99}. The restriction to complete prior theories serves to isolate a single source world; when \(K=Cn(w)\), the update originates from a single source world \(w\), and the witnessing proxy can be identified with the corresponding transformed epistemic input \(f_w(\varphi)\).

Postulate \((K\diamond6)^S\) preserves the KM-style interaction between update and conjunction whenever the first conjunct is itself accepted. Without the condition \(\varphi\in K\diamond\varphi\), the epistemic input \(\varphi\) may have been replaced by a weaker proxy, and the standard conjunctive constraint would no longer be justified. Thus, the postulate reinstates the usual KM-style behaviour precisely in those cases in which selective weakening has not prevented the acceptance of \(\varphi\).

Similarly, postulate \((K\diamond8)^S\) retains the KM-style constraint on update by a disjunction only when each disjunct is accepted under its corresponding update. If either disjunct is selectively weakened, the updates by \(\varphi\), by \(\psi\), and by \(\varphi\vee\psi\) may be based on different proxies, so the unconditional KM disjunctive postulate $(K\diamond8)$ would be too strong.

Finally, postulate (LC) provides a behavioural expression of local credibility. Suppose that \(K=Cn(w)\) (for a world $w$), that the update by \(\varphi\) accepts \(\varphi\), and that \(\psi\) remains possible after that update. Then, some selected successor of \(w\) satisfies \(\psi\), showing that \(\psi\) is credible from \(w\). Condition (LC) requires this local credibility to be reflected in the result of updating directly by \(\psi\), which must then accept \(\psi\).

Against this background, we can state the following representation theorem for SCL belief update.

\begin{theorem} \label{thm_scl_charact}
An update operator $\diamond$ satisfies postulates $(K\diamond1)$, $(K\diamond2)^{S}$, $(K\diamond3)$, $(K\diamond5)$, $(K\diamond6)^{S}$, $(K\diamond7)$, $(K\diamond8)^{S}$, $(K\diamond9)$, and (LC) iff there exist a credible faithful assignment $w\mapsto(C_w,\preceq_w)$ and a pointwise transformation assignment $\mathcal F=\{f_w:w\in\mathbb M\}$ satisfying properties (F1)--(F4), such that, for every belief set $K$ and every sentence $\varphi\in\mathcal L$,
\begin{center}
\begin{tabular}{l l}
{\bf (SCL)} & $[K\diamond\varphi] = \displaystyle\bigcup_{w\in[K]}\min\Big([f_w(\varphi)]\cap C_w\,,\preceq_w\!\Big)$.
\end{tabular}
\end{center}
\end{theorem}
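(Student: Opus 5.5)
The plan is to prove both directions. The soundness direction rests on a pointwise lemma that reduces every accepted update to credibility-limited behaviour. The completeness direction uses a KM-style canonical construction from updates of complete theories, with the proxies supplied by $(K\diamond2)^S$.

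\emph{Soundness.} Fix the assignments and define $\diamond$ by (SCL). I will first prove a \emph{branch lemma}. For each world $w$, write $S_w(\varphi)=\min([f_w(\varphi)]\cap C_w,\preceq_w)$. Then $S_w(\varphi)\subseteq[\varphi]$ iff either $[\varphi]\cap C_w\neq\varnothing$ or $[f_w(\varphi)]\cap C_w=\varnothing$. Moreover, in both cases $S_w(\varphi)=\min([\varphi]\cap C_w,\preceq_w)$. In the first case this follows from (F4). In the second case it holds because (F1) gives $[\varphi]\cap C_w=\varnothing$, so both sides are empty.

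Since $[K\diamond\varphi]$ is the union of the $S_w(\varphi)$ over $w\in[K]$, acceptance of $\varphi$ by $K$ implies acceptance by each $Cn(w)$ with $w\in[K]$. So under the acceptance hypotheses, $(K\diamond6)^S$, $(K\diamond7)$ and $(K\diamond8)^S$ reduce branchwise to the CL setting. There I will rerun the usual KM arguments. For $(K\diamond7)$ I will use Lemma~\ref{lem_mutual_minimality} on $[\varphi]\cap C_w$ and $[\psi]\cap C_w$; empty branches are handled trivially.

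The remaining postulates are direct:
\begin{itemize}[label=\textendash]
\item $(K\diamond1)$, $(K\diamond5)$ and $(K\diamond9)$ come from the pointwise form together with (F2).
\item $(K\diamond3)$ holds because each $w\in[K]\subseteq[\varphi]$ lies in $C_w$. Hence $f_w(\varphi)\equiv\varphi$, and faithfulness gives $S_w(\varphi)=\{w\}$.
\item For $(K\diamond2)^S$, take $\psi=f_w(\varphi)$. Then (F1) gives $\varphi\models\psi$, and (F3) gives $f_w(\psi)\equiv\psi$, so the two updates coincide and $\psi$ is accepted.
\item For (LC), $S_w(\varphi)\subseteq[\varphi]\cap C_w$ meets $[\psi]$. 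Hence $[\psi]\cap C_w\neq\varnothing$, so (F4) gives $f_w(\psi)\equiv\psi$ and $S_w(\psi)\subseteq[\psi]$.
\end{itemize}

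\emph{Completeness.} For each world $r$, let $\alpha_r$ be a formula with $[\alpha_r]=\{r\}$, and write $K_w=Cn(w)$. I will define the credible set by
\[
C_w=\{r\in\mathbb M: [K_w\diamond\alpha_r]=\{r\}\}.
\]
Then $w\in C_w$ by $(K\diamond3)$. The preorder on $C_w$ is
\[
r\preceq_w r' \quad\text{iff}\quad r\in[K_w\diamond(\alpha_r\vee\alpha_{r'})].
\]

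For the transformation, pick one formula per equivalence class (using $(K\diamond5)$). Set $f_w(\varphi)=\varphi$ if $\varphi\in K_w\diamond\varphi$. Otherwise, set $f_w(\varphi)$ to be a proxy $\psi$ given by $(K\diamond2)^S$, chosen uniformly on the equivalence class. Then:
\begin{itemize}[label=\textendash]
\item (F1) and (F2) are immediate.
\item (F3) holds because the proxy is accepted and is therefore its own image.
\item (F4) follows from (LC). If $r\in[\varphi]\cap C_w$, then $\alpha_r$ is accepted and $K_w\diamond\alpha_r$ is consistent with $\varphi$, so $\varphi$ is accepted and $f_w(\varphi)=\varphi$.
\end{itemize}

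\emph{Key claim.} Everything then rests on one claim: for every $\varphi$ accepted by $K_w$,
\[
[K_w\diamond\varphi]=\min([\varphi]\cap C_w,\preceq_w).
\]
Given this claim, (SCL) holds for $K_w$. Indeed, $K_w\diamond\varphi=K_w\diamond f_w(\varphi)$ with $f_w(\varphi)$ accepted, so the claim applies to $f_w(\varphi)$. Then $(K\diamond9)$ lifts (SCL) to arbitrary consistent $K$.

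\emph{Main obstacle.} I expect the key claim, together with checking that $\preceq_w$ is reflexive, transitive and faithful on $C_w$, to be the hardest step. The postulates $(K\diamond6)^S$ and $(K\diamond8)^S$ only apply to \emph{accepted} inputs, so every step of the KM argument must first establish acceptance of the disjunctions and conjunctions involved. The first attempt is to show by (LC) that any disjunction $\alpha_r\vee\alpha_{r'}$ with $r\in C_w$ is accepted, since $\alpha_r$ is accepted and consistent with it. Acceptance of conjunctions such as $\varphi\wedge\alpha_r$ for $r\in[K_w\diamond\varphi]$ also follows from (LC).

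Next I will show that every $r\in[K_w\diamond\varphi]$ lies in $C_w$. The route is $(K\diamond6)^S$ with $\psi=\alpha_r$, combined with acceptance of $\alpha_r$. Once these acceptance facts are in place, the standard Katsuno--Mendelzon inclusions go through. The inclusion $\subseteq$ uses $(K\diamond6)^S$. The inclusion $\supseteq$ uses $(K\diamond8)^S$ and $(K\diamond7)$, and appeals to Lemma~\ref{lem_mutual_minimality} where needed. Transitivity of $\preceq_w$ follows from $(K\diamond8)^S$ applied to three-element disjunctions, and faithfulness from $(K\diamond3)$.
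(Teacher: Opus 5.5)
\textbf{There is a genuine gap in the completeness direction.} Your soundness direction and your choices of $C_w$ and $f_w$ are fine. The construction breaks at the step you flagged yourself: the relation $r\preceq_w r'$ iff $r\in[K_w\diamond(\alpha_r\vee\alpha_{r'})]$ need not be transitive, so $w\mapsto(C_w,\preceq_w)$ need not be a credible faithful assignment. When $\diamond$ is represented by a genuinely partial preorder, $[K_w\diamond(\alpha_r\vee\alpha_{r'})]=\min(\{r,r'\},\preceq)$. Your relation therefore only says that $r'$ is not strictly below $r$, which also holds for incomparable pairs.

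Here is a concrete counterexample. Over $\mathcal P=\{a,b\}$, take a KM operator, which is an SCL operator with $C_u=\mathbb M$ and identity transformations and so satisfies every postulate of the theorem. Let its preorder at $w$ put $w$ strictly below all other worlds, have $r\prec r''$, and leave $r'$ incomparable with both $r$ and $r''$. Your definition gives $C_w=\mathbb M$. From $[K_w\diamond(\alpha_{r''}\vee\alpha_{r'})]=\{r'',r'\}$ and $[K_w\diamond(\alpha_{r'}\vee\alpha_r)]=\{r',r\}$ you get $r''\preceq_w r'\preceq_w r$. But $[K_w\diamond(\alpha_{r''}\vee\alpha_r)]=\{r\}$ gives $r''\not\preceq_w r$. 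Since this operator satisfies all the postulates, no argument from $(K\diamond8)^S$, or from any postulate, can make this relation transitive.

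\textbf{The repair.} Keep only the strict part: set $r\preceq_w r'$ iff $r=r'$ or $[K_w\diamond(\alpha_r\vee\alpha_{r'})]=\{r\}$.
\begin{itemize}
\item For $r\in C_w$, (LC) makes $\alpha_r\vee\alpha_{r'}$ accepted. So this relation has the same strict component as yours, and your minimality arguments survive. Faithfulness still follows from $(K\diamond3)$.
\item Transitivity now comes from $(K\diamond6)^S$ and $(K\diamond7)$, not $(K\diamond8)^S$. Let $\theta=\alpha_r\vee\alpha_{r'}\vee\alpha_{r''}$, which is accepted by (LC). First, $[K_w\diamond\theta]\neq\varnothing$: otherwise $(K\diamond7)$ applied to $\theta$ and $\alpha_r$ gives $K_w\diamond\theta=Cn(r)$, a contradiction. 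Next, two applications of $(K\diamond6)^S$ force $[K_w\diamond\theta]=\{r\}$. Finally, $(K\diamond7)$ yields $K_w\diamond(\alpha_r\vee\alpha_{r''})=K_w\diamond\theta$.
\item The same non-emptiness argument is needed in the $\supseteq$ part of your key claim. It is what gives $r\in[K_w\diamond(\alpha_r\vee\alpha_s)]$ before you iterate $(K\diamond8)^S$.
\end{itemize}

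\textbf{Comparison with the paper.} The paper avoids building orderings altogether. It defines an auxiliary operator $\diamond^c$ that equals $\diamond$ on accepted inputs at complete theories and $\mathcal L$ otherwise. It then uses (LC) to show that $\diamond^c$ is a CL operator, and reads off $(C_w,\preceq_w)$ from the CL representation theorem. Your route is self-contained, but it has to re-derive that theorem, and the ordering definition is exactly where it is fragile.

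\textbf{A smaller point in soundness.} $(K\diamond7)$ carries no acceptance hypothesis, so it does not reduce to the CL case as you claim. You must also rule out a branch in which exactly one of the two selections is empty. For example, a non-empty $S_w(\psi)\subseteq[\varphi]\cap C_w$ forces $[\varphi]\cap C_w\neq\varnothing$, and hence $S_w(\varphi)\neq\varnothing$ by (F4).
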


\begin{proof} \underline{\bf Right-to-left implication.} Let $\diamond$ be an update operator, let $K$ be a belief set, let $\varphi$ be a sentence of $\mathcal{L}$, and assume that there exist a credible faithful assignment \mbox{$w\mapsto(C_w,\preceq_w)$} and a pointwise transformation assignment $\mathcal F$ satisfying properties (F1)--(F4), such that condition (SCL) holds; that is, \mbox{$[K\diamond\varphi] = \displaystyle\bigcup_{w\in[K]}\min\Big([f_w(\varphi)]\cap C_w\,,\preceq_w\!\Big)$}. We show that $\diamond$ satisfies postulates \mbox{$(K\diamond1)$}, $(K\diamond2)^{S}$, $(K\diamond3)$, $(K\diamond5)$, $(K\diamond6)^{S}$, $(K\diamond7)$, $(K\diamond8)^{S}$, $(K\diamond9)$, and (LC).

\begin{itemize}
\item Postulate $(K\diamond1)$ follows immediately, since the
right-hand side of (SCL) determines a theory.

\item For postulate $(K\diamond2)^{S}$, let $K=Cn(w)$ be complete, let $\varphi$ be an epistemic input, and put \mbox{$\psi=f_w(\varphi)$}. By property (F1), $\varphi\models\psi$. Since $K$ is complete, we have that $[K] = \{w\}$. Hence, by condition (SCL), we have that $[K\diamond\varphi] = \min\Big([f_w(\varphi)]\cap C_w,\preceq_w\!\Big) \subseteq[\psi]$, so $\psi\in K\diamond\varphi$. Finally, by property (F3), we have that $f_w(\psi) = f_w\big(f_w(\varphi)\big) \equiv f_w(\varphi)$, and therefore, $K\diamond\varphi = K\diamond\psi$, as desired.

\item For postulate $(K\diamond3)$, suppose that $\varphi\in K$. Then, for every $w\in[K]$, we have that \mbox{$w\in[\varphi]\cap C_w$}. By property (F4), $f_w(\varphi)\equiv\varphi$. Since the preorder $\preceq_w$ is faithful to $w$, it is true that \mbox{$\min\Big([\varphi]\cap C_w,\preceq_w\!\Big)=\{w\}$}. Consequently, $[K\diamond\varphi] = \displaystyle\bigcup_{w\in[K]}\{w\} = [K]$, and hence, $K\diamond\varphi=K$.

\item Postulate $(K\diamond5)$ follows from property (F2), combined with condition (SCL).

\item For postulate $(K\diamond6)^{S}$, suppose that $\varphi\in K\diamond\varphi$. Let $\psi\in\mathcal{L}$ and fix an arbitrary world $w\in[K]$. We prove the following local inclusion:
\[
\min\Big([f_w(\varphi)]\cap C_w,\preceq_w\!\Big)\cap[\psi] \subseteq \min\Big(\big[f_w(\varphi\wedge\psi)\big]\cap C_w,\preceq_w\!\Big). \tag{1}
\]

\begin{itemize}
\item First, assume that $\min\Big([f_w(\varphi)]\cap C_w,\preceq_w\!\Big) = \varnothing$. Then, \mbox{$\min\Big([f_w(\varphi)]\cap C_w,\preceq_w\!\Big)\cap[\psi] = \varnothing$}. Hence, inclusion (1) follows trivially.

\item Next, assume that $\min\Big([f_w(\varphi)]\cap C_w,\preceq_w\!\Big) \neq \varnothing$. By condition (SCL), \mbox{$\min\Big([f_w(\varphi)]\cap C_w,\preceq_w\!\Big) \subseteq [K\diamond\varphi]$}. Since $\varphi\in K\diamond\varphi$, we have that $[K\diamond\varphi]\subseteq[\varphi]$. Therefore, $\min\Big([f_w(\varphi)]\cap C_w,\preceq_w\!\Big) \subseteq [\varphi]$. Moreover, $\min\Big([f_w(\varphi)]\cap C_w,\preceq_w\!\Big) \subseteq C_w$. Since $\min\Big([f_w(\varphi)]\cap C_w,\preceq_w\!\Big) \neq \varnothing$ by our standing assumption, there exists a world $r\in \min\Big([f_w(\varphi)]\cap C_w,\preceq_w\!\Big)$. Consequently, $r\in[\varphi]\cap C_w$, and hence, $[\varphi]\cap C_w\neq\varnothing$. Property (F4) therefore gives $f_w(\varphi)\equiv\varphi$, meaning that
\[
\min\Big([f_w(\varphi)]\cap C_w,\preceq_w\!\Big) = \min\Big([\varphi]\cap C_w,\preceq_w\!\Big). \tag{2}
\]

Now, if $[\varphi\wedge\psi]\cap C_w\neq\varnothing$, then property (F4) also gives 
\[
f_w(\varphi\wedge\psi)\equiv\varphi\wedge\psi. \tag{3}
\]
We show that $\min\Big([\varphi]\cap C_w,\preceq_w\!\Big)\cap[\psi] \subseteq \min\Big([\varphi\wedge\psi]\cap C_w,\preceq_w\!\Big)$. Let \mbox{$r\in\min\Big([\varphi]\cap C_w,\preceq_w\!\Big)\cap[\psi]$}. Then, $r\in[\varphi]\cap C_w$ and $r\in[\psi]$, so $r\in[\varphi\wedge\psi]\cap C_w$. Suppose, towards a contradiction, that $r\notin\min\Big([\varphi\wedge\psi]\cap C_w,\preceq_w\!\Big)$. Then, there exists $r'\in[\varphi\wedge\psi]\cap C_w$ such that $r'\prec_w r$. Since $[\varphi\wedge\psi]\cap C_w \subseteq [\varphi]\cap C_w$, we have that $r'\in[\varphi]\cap C_w$, contradicting $r\in\min\Big([\varphi]\cap C_w,\preceq_w\!\Big)$. Therefore, 
\[
\min\Big([\varphi]\cap C_w,\preceq_w\!\Big)\cap[\psi] \subseteq \min\Big([\varphi\wedge\psi]\cap C_w,\preceq_w\!\Big). \tag{4}
\] 
By conditions (2)--(4), we derive that inclusion (1) holds.

Suppose instead that $[\varphi\wedge\psi]\cap C_w=\varnothing$. Then, by condition (2),
\[
\begin{aligned}
\min\Big([f_w(\varphi)]\cap C_w,\preceq_w\!\Big)\cap[\psi]
&=
\min\Big([\varphi]\cap C_w,\preceq_w\!\Big)\cap[\psi]
\\
&\subseteq
[\varphi]\cap C_w\cap[\psi]
\\[1mm]
&=[\varphi\wedge\psi]\cap C_w = \varnothing.
\end{aligned}
\]
Therefore, \mbox{$\min\Big([f_w(\varphi)]\cap C_w,\preceq_w\!\Big)\cap[\psi] = \varnothing$}, which means that inclusion~(1) holds.
\end{itemize}

Since $w\in[K]$ was arbitrary, inclusion (1) holds for every $w\in[K]$. Taking the union over all $w\in[K]$, we obtain that
\[
\begin{aligned}
\big[(K\diamond\varphi)+\psi\big]
&=
[K\diamond\varphi]\cap[\psi]
\\
&=
\Bigg( \bigcup_{w\in[K]} \min\Big([f_w(\varphi)]\cap C_w\,,\preceq_w\!\Big) \Bigg)\cap[\psi]
\\
&=
\bigcup_{w\in[K]}
\Bigg( \min\Big([f_w(\varphi)]\cap C_w\,,\preceq_w\!\Big) \cap[\psi] \Bigg)
\\
&\subseteq
\bigcup_{w\in[K]}\min\bigg(\big[f_w(\varphi\wedge\psi)\big]\cap C_w\,,\preceq_w\!\bigg)
\\
&=
\big[K\diamond(\varphi\wedge\psi)\big].
\end{aligned}
\]
Therefore, $K\diamond(\varphi\wedge\psi) \subseteq (K\diamond\varphi)+\psi$, meaning that postulate $(K\diamond6)^{S}$ is satisfied.

\item For postulate $(K\diamond7)$, suppose that $\psi\in K\diamond\varphi$ and $\varphi\in K\diamond\psi$. Fix an arbitrary world \mbox{$w\in[K]$}. 

Assume, towards a contradiction, that \mbox{$\min\Big([f_w(\varphi)]\cap C_w,\preceq_w\!\Big)=\varnothing$}, while \mbox{$\min\Big([f_w(\psi)]\cap C_w,\preceq_w\!\Big)\neq\varnothing$}. By condition (SCL), \mbox{$\min\Big([f_w(\psi)]\cap C_w,\preceq_w\!\Big) \subseteq [K\diamond\psi]$}. Since $\varphi\in K\diamond\psi$, we have that $[K\diamond\psi]\subseteq[\varphi]$. Therefore, $\min\Big([f_w(\psi)]\cap C_w,\preceq_w\!\Big) \subseteq [\varphi]$. Moreover, \mbox{$\min\Big([f_w(\psi)]\cap C_w,\preceq_w\!\Big) \subseteq C_w$}. Since $\min\Big([f_w(\psi)]\cap C_w,\preceq_w\!\Big) \neq \varnothing$ by our standing assumption, there exists a world $r\in \min\Big([f_w(\psi)]\cap C_w,\preceq_w\!\Big)$. Consequently, $r\in[\varphi]\cap C_w$, and hence, $[\varphi]\cap C_w\neq\varnothing$. Then, property (F4) entails that $f_w(\varphi)\equiv\varphi$, meaning that $\min\Big([f_w(\varphi)]\cap C_w,\preceq_w\!\Big) = \min\Big([\varphi]\cap C_w,\preceq_w\!\Big)\neq\varnothing$, which is a contradiction. 

By a totally symmetric line of reasoning, we can show that $[\psi]\cap C_w\neq\varnothing$, and that it cannot be the case that \mbox{$\min\Big([f_w(\psi)]\cap C_w,\preceq_w\!\Big)=\varnothing$}, while \mbox{$\min\Big([f_w(\varphi)]\cap C_w,\preceq_w\!\Big)\neq\varnothing$}.

Hence, either $\min\Big([f_w(\varphi)]\cap C_w,\preceq_w\!\Big) = \min\Big([f_w(\psi)]\cap C_w,\preceq_w\!\Big) = \varnothing$, or \mbox{$\min\Big([f_w(\varphi)]\cap C_w,\preceq_w\!\Big)\neq\varnothing$} and \mbox{$\min\Big([f_w(\psi)]\cap C_w,\preceq_w\!\Big)\neq\varnothing$}.

In the former case, obviously $\min\Big([f_w(\varphi)]\cap C_w,\preceq_w\!\Big) = \min\Big([f_w(\psi)]\cap C_w,\preceq_w\!\Big)$.

Now consider the latter case, in which \mbox{$\min\Big([f_w(\varphi)]\cap C_w,\preceq_w\!\Big)\neq\varnothing$} and \mbox{$\min\Big([f_w(\psi)]\cap C_w,\preceq_w\!\Big)\neq\varnothing$}. Since $\psi\in K\diamond\varphi$, condition {\rm(SCL)} gives $\min\Big([f_w(\varphi)]\cap C_w,\preceq_w\Big) \subseteq [K\diamond\varphi] \subseteq[\psi]$. Moreover, the left-hand side of the previous inclusion is a subset of $C_w$ and is non-empty. Consequently, $[\psi]\cap C_w\neq\varnothing$. Similarly, from $\varphi\in K\diamond\psi$ and the non-emptiness of $\min\Big([f_w(\psi)]\cap C_w,\preceq_w\Big)$, we obtain that $[\varphi]\cap C_w\neq\varnothing$. Property (F4) therefore yields $f_w(\psi)\equiv\psi$ and $f_w(\varphi)\equiv\varphi$. Combining the above, we deduce that \mbox{$\min\Big([\varphi]\cap C_w,\preceq_w\!\Big)\subseteq[\psi]\cap C_w$} and \mbox{$\min\Big([\psi]\cap C_w,\preceq_w\!\Big)\subseteq[\varphi]\cap C_w$}. Then, Lemma~\ref{lem_mutual_minimality} of Section~\ref{sec_formal_prel} yields \mbox{$\min\Big([\varphi]\cap C_w,\preceq_w\!\Big)=\min\Big([\psi]\cap C_w,\preceq_w\!\Big)$}, equivalently, \mbox{$\min\Big([f_w(\varphi)]\cap C_w,\preceq_w\!\Big)=\min\Big([f_w(\psi)]\cap C_w,\preceq_w\!\Big)$}. 

Thus, in either case, $\min\Big([f_w(\varphi)]\cap C_w,\preceq_w\!\Big) = \min\Big([f_w(\psi)]\cap C_w,\preceq_w\!\Big)$. Since $w\in[K]$ was arbitrary, this equality holds for every $w\in[K]$. Taking unions and applying condition (SCL), we obtain that $[K\diamond\varphi] = \displaystyle\bigcup_{w\in[K]}\min\Big([f_w(\varphi)]\cap C_w,\preceq_w\!\Big) =$ \mbox{$\displaystyle\bigcup_{w\in[K]}\min\Big([f_w(\psi)]\cap C_w,\preceq_w\!\Big) = [K\diamond\psi]$}. Therefore, $K\diamond\varphi=K\diamond\psi$, as desired.

\item For postulate $(K\diamond8)^{S}$, let $K=Cn(w)$ be complete, and suppose that $\varphi\in K\diamond\varphi$ and $\psi\in K\diamond\psi$. If either $[K\diamond\varphi]=\varnothing$ or $[K\diamond\psi]=\varnothing$, then $Cn\bigl((K\diamond\varphi)\cup(K\diamond\psi)\bigr) = \mathcal{L}$. It follows then trivially that $K\diamond(\varphi\vee\psi) \subseteq Cn\big((K\diamond\varphi)\cup(K\diamond\psi)\big)$.

Suppose, therefore, that $[K\diamond\varphi]\neq\varnothing$ and $[K\diamond\psi]\neq\varnothing$. Since $K=Cn(w)$ is complete, we have that $[K]=\{w\}$. Then, by condition (SCL), $[K\diamond\varphi] = \min\Big([f_w(\varphi)]\cap C_w,\preceq_w\!\Big)$ and $[K\diamond\psi] = \min\Big([f_w(\psi)]\cap C_w,\preceq_w\!\Big)$. Since $\varphi\in K\diamond\varphi$, every world in the non-empty set $[K\diamond\varphi]$ satisfies $\varphi$. Moreover, condition (SCL) ensures that every such world belongs to $C_w$. Consequently, there exists a world in $[\varphi]\cap C_w$, and hence, $[\varphi]\cap C_w\neq\varnothing$. Similarly, from $\psi\in K\diamond\psi$ and the non-emptiness of $[K\diamond\psi]$, we obtain that $[\psi]\cap C_w\neq\varnothing$. It follows immediately that $[\varphi\vee\psi]\cap C_w\neq\varnothing$. Property (F4) therefore applies to all three epistemic inputs, and yields $f_w(\varphi)\equiv\varphi$, $f_w(\psi)\equiv\psi$, and $f_w(\varphi\vee\psi)\equiv\varphi\vee\psi$. Hence, condition (SCL) gives $[K\diamond\varphi] = \min\Big([\varphi]\cap C_w,\preceq_w\Big)$, $[K\diamond\psi] = \min\Big([\psi]\cap C_w,\preceq_w\Big)$, and $[K\diamond(\varphi\vee\psi)] = \min\Big([\varphi\vee\psi]\cap C_w,\preceq_w\Big)$.

Now, it is easy to verify that 
\[
\min\Big([\varphi]\cap C_w,\preceq_w\!\Big)\cap\min\Big([\psi]\cap C_w,\preceq_w\!\Big)\subseteq\min\Big([\varphi\vee\psi]\cap C_w,\preceq_w\!\Big).
\]
Combining the above, we deduce that $[K\diamond\varphi]\cap[K\diamond\psi]\subseteq[K\diamond(\varphi\vee\psi)]$, which is equivalent to $K\diamond(\varphi\vee\psi)\subseteq Cn\bigl((K\diamond\varphi)\cup(K\diamond\psi)\bigr)$, as desired.

\item For postulate $(K\diamond9)$, first note that $[K]\neq\varnothing$ by our standing assumption on the consistency of prior belief sets. For every $w\in[K]$, the theory $Cn(w)$ is complete and $[Cn(w)]=\{w\}$. Applying condition (SCL) to $Cn(w)$ gives
\[
[Cn(w)\diamond\varphi] = \bigcup_{r\in[Cn(w)]}\min\Big([f_r(\varphi)]\cap C_r,\preceq_r\!\Big) = \min\Big([f_w(\varphi)]\cap C_w,\preceq_w\!\Big).
\]
Consequently, $[K\diamond\varphi] = \displaystyle\bigcup_{w\in[K]} [Cn(w)\diamond\varphi]$, which is equivalent to $K\diamond\varphi = \displaystyle\bigcap_{w\in[K]}\bigl(Cn(w)\diamond\varphi\bigr)$. Therefore, postulate $(K\diamond9)$ is satisfied.

\item Finally, for postulate (LC), let $K=Cn(w)$ be complete, suppose that $\varphi\in K\diamond\varphi$, and assume that $\neg\psi\notin K\diamond\varphi$. Since an inconsistent theory contains every sentence of $\mathcal{L}$, $K\diamond\varphi$ is consistent and, therefore, $[K\diamond\varphi]\neq\varnothing$. Since $\neg\psi\notin K\diamond\varphi$, there exists a world $r\in[K\diamond\varphi]$ such that $r\models\psi$. Recall that $K=Cn(w)$ is complete, and hence, $[K]=\{w\}$. By condition (SCL), $[K\diamond\varphi] = \min\Big([f_w(\varphi)]\cap C_w,\preceq_w\Big)$. Therefore, every world in $[K\diamond\varphi]$, and in particular the world $r$, belongs to $C_w$. Since $r\models\psi$, we have that $r\in[\psi]\cap C_w$, and consequently, $[\psi]\cap C_w\neq\varnothing$. Property (F4) now applies and yields $f_w(\psi)\equiv\psi$. Applying condition (SCL) once again, this time to the epistemic input $\psi$, we obtain that $[K\diamond\psi] = \min\Big([f_w(\psi)]\cap C_w,\preceq_w\Big) = \min\Big([\psi]\cap C_w,\preceq_w\Big) \subseteq [\psi]$. Consequently, $\psi\in K\diamond\psi$, as desired.
\end{itemize}

\bigskip

\noindent\underline{\bf Left-to-right implication.} Let $\diamond$ be an update operator that satisfies postulates $(K\diamond1)$, $(K\diamond2)^{S}$, $(K\diamond3)$, $(K\diamond5)$, $(K\diamond6)^{S}$, $(K\diamond7)$, $(K\diamond8)^{S}$, $(K\diamond9)$, and (LC). We show that there exist a credible faithful assignment \mbox{$w\mapsto(C_w,\preceq_w)$} and a pointwise transformation assignment $\mathcal F$ satisfying properties (F1)--(F4), such that condition (SCL) holds.

\paragraph{Construct an Auxiliary Operation.}
First, we define an auxiliary operation $\diamond^{c}$. For every complete theory $Cn(w)$ and every sentence $\varphi\in\mathcal L$, put
\[
Cn(w)\diamond^{c}\varphi =
\begin{cases}
Cn(w)\diamond\varphi,
&
\text{if }\varphi\in Cn(w)\diamond\varphi,
\\[2mm]
\mathcal{L},
&
\text{otherwise}.
\end{cases}
\tag{5}
\]
Moreover, for every consistent belief set $K$, define
\[
K\diamond^{c}\varphi = \bigcap_{w\in[K]} \bigl(Cn(w)\diamond^{c}\varphi\bigr). \tag{6}
\]
For the inconsistent belief set $\mathcal L$, put $\mathcal L\diamond^{c}\varphi=\mathcal L$.

We show that $\diamond^{c}$ is a CL update operator, by showing that it satisfies postulates \mbox{$(K\diamond1)$--$(K\diamond3)$} and $(K\diamond5)$--$(K\diamond9)$; see Definition~\ref{def_cl_operator} of Subsection~\ref{subsec_cl_update}. If $K$ is inconsistent, these postulates are satisfied trivially. Hence, in what follows, it suffices to consider consistent $K$.

\begin{itemize}
\item Postulate $(K\diamond1)$ follows immediately from the definition of $\diamond^{c}$, since both conditions (5) and (6) determine a theory.

\item For postulate \((K\diamond2)\), condition (5) ensures that \(\varphi\in Cn(w)\diamond^{c}\varphi\), for every \(w\in[K]\). Hence, by condition (6), \(\varphi\in K\diamond^{c}\varphi\), as desired.

\item For postulate $(K\diamond3)$, let $\varphi$ be a sentence of $\mathcal{L}$ such that $\varphi\in K$. Fix an arbitrary world $w\in[K]$. Then, $\varphi\in Cn(w)$. By postulate $(K\diamond3)$ for $\diamond$, we have that $Cn(w)\diamond\varphi=Cn(w)$. Hence, \mbox{$Cn(w)\diamond^{c}\varphi=Cn(w)$}, and, using condition (6), we obtain that $K\diamond^{c}\varphi=K$, as desired.

\item For postulate $(K\diamond5)$, let $\varphi$, $\psi$ be two sentences of $\mathcal{L}$ such that $\varphi\equiv\psi$. Fix an arbitrary world $w\in\mathbb M$. By postulate $(K\diamond5)$ for $\diamond$, we have that $Cn(w)\diamond\varphi=Cn(w)\diamond\psi$. Since $\varphi\equiv\psi$, it follows that $\varphi\in Cn(w)\diamond\varphi$ iff $\psi\in Cn(w)\diamond\psi$. Thus, the same clause of condition (5) applies to both epistemic inputs $\varphi$ and $\psi$, meaning that $Cn(w)\diamond^{c}\varphi = Cn(w)\diamond^{c}\psi$. Since $w\in\mathbb M$ was arbitrary, condition (6) entails $K\diamond^{c}\varphi=K\diamond^{c}\psi$, as desired.

\item For postulate $(K\diamond6)$, it is enough to establish the postulate for every complete theory $Cn(w)$. Indeed, by condition~(6), for every belief set $K$ and every sentence $\chi\in\mathcal L$, $[K\diamond^{c}\chi] = \displaystyle\bigcup_{w\in[K]} [Cn(w)\diamond^{c}\chi]$. Suppose, therefore, that, for every $w\in[K]$, $Cn(w)\diamond^{c}(\varphi\wedge\psi) \subseteq \bigl(Cn(w)\diamond^{c}\varphi\bigr)+\psi$. Equivalently, $[Cn(w)\diamond^{c}\varphi]\cap[\psi] \subseteq \big[Cn(w)\diamond^{c}(\varphi\wedge\psi)\big]$. Taking the union over all $w\in[K]$, we obtain that \[ \begin{aligned} \big[(K\diamond^{c}\varphi)+\psi\big] &= [K\diamond^{c}\varphi]\cap[\psi] \\ &= \left( \bigcup_{w\in[K]}[Cn(w)\diamond^{c}\varphi] \right)\cap[\psi] \\ &= \bigcup_{w\in[K]} \Big( [Cn(w)\diamond^{c}\varphi]\cap[\psi] \Big) \\ &\subseteq \bigcup_{w\in[K]} \big[Cn(w)\diamond^{c}(\varphi\wedge\psi)\big] \\ &= \big[K\diamond^{c}(\varphi\wedge\psi)\big]. \end{aligned} \] Hence, $K\diamond^{c}(\varphi\wedge\psi) \subseteq (K\diamond^{c}\varphi)+\psi$. Thus, it remains only to verify the local inclusion $Cn(w)\diamond^{c}(\varphi\wedge\psi) \subseteq \bigl(Cn(w)\diamond^{c}\varphi\bigr)+\psi$, for an arbitrary complete theory $K = Cn(w)$.

To that end, assume first that $K\diamond^{c}\varphi=\mathcal{L}$. Then, the required local inclusion follows trivially. Suppose, therefore, that $K\diamond^{c}\varphi$ is consistent. By condition (5), it follows that $\varphi\in K\diamond\varphi$ and $K\diamond^{c}\varphi=K\diamond\varphi$. Next, we distinguish two cases, according to whether $\varphi\wedge\psi$ belongs to $K\diamond(\varphi\wedge\psi)$.

\begin{itemize}
\item Suppose that $\varphi\wedge\psi \in K\diamond(\varphi\wedge\psi)$. Then, from the first clause of condition~(5), we have that $K\diamond^{c}(\varphi\wedge\psi)=K\diamond(\varphi\wedge\psi)$. Thus, $\diamond^{c}$ coincides with $\diamond$ on both epistemic inputs $\varphi$ and $\varphi\wedge\psi$. Hence, the required local inclusion follows directly from postulate $(K\diamond6)^{S}$ for $\diamond$.

\item Suppose that $\varphi\wedge\psi\notin K\diamond(\varphi\wedge\psi)$.  Then, from condition~(5), we have that \mbox{$K\diamond^{c}(\varphi\wedge\psi)=\mathcal{L}$}.

Now, we claim that $\neg\psi\in K\diamond\varphi$. Assume, towards a contradiction, that $\neg\psi\notin K\diamond\varphi$. Since $\varphi\in K\diamond\varphi$, it follows that $\neg(\varphi\wedge\psi)\notin K\diamond\varphi$. Indeed, if $\neg(\varphi\wedge\psi)\in K\diamond\varphi$, then, by the deductive closure of $K\diamond\varphi$ guaranteed by postulate $(K\diamond1)$ and the fact that $\varphi\wedge\neg(\varphi\wedge\psi)\models\neg\psi$, we would obtain $\neg\psi\in K\diamond\varphi$, contrary to our assumption. Then, applying postulate (LC) with $\varphi\wedge\psi$ in place of $\psi$, we obtain that $\varphi\wedge\psi\in K\diamond(\varphi\wedge\psi)$, contradicting the assumption of the present case. Therefore, $\neg\psi\in K\diamond\varphi$. Consequently, $(K\diamond^{c}\varphi)+\psi=\mathcal{L}$. This, combined with $K\diamond^{c}(\varphi\wedge\psi)=\mathcal{L}$, leads to $K\diamond^{c}(\varphi\wedge\psi) \subseteq (K\diamond^{c}\varphi)+\psi$, as desired.
\end{itemize}

\item For postulate $(K\diamond7)$, it is sufficient to establish the postulate for every complete theory $Cn(w)$, since $\diamond^{c}$ is extended to arbitrary consistent theories pointwise by condition (6). To that end, let $K=Cn(w)$ be complete, and let $\varphi$, $\psi$ be two sentences of $\mathcal{L}$, such that \mbox{$\psi\in K\diamond^{c}\varphi$} and $\varphi\in K\diamond^{c}\psi$. If both $K\diamond^{c}\varphi$ and $K\diamond^{c}\psi$ are inconsistent, then $K\diamond^{c}\varphi=K\diamond^{c}\psi=\mathcal{L}$. Thus, postulate $(K\diamond7)$ trivially holds.

Suppose now that exactly one is inconsistent; say $K\diamond^{c}\varphi=\mathcal{L}$ and $K\diamond^{c}\psi$ is consistent. By condition (5), it follows that $\psi\in K\diamond\psi$ and $K\diamond^{c}\psi=K\diamond\psi$. Since $\varphi\in K\diamond^{c}\psi = K\diamond\psi$ and $K\diamond\psi$ is consistent, we derive that $\neg\varphi\notin K\diamond\psi$. By postulate (LC), $\varphi\in K\diamond\varphi$, and hence, by the first clause of condition (5), $K\diamond^{c}\varphi=K\diamond\varphi=\mathcal{L}$. Therefore, $\psi\in K\diamond\varphi$, and postulate $(K\diamond7)$ applied to $\diamond$ gives $K\diamond\varphi=K\diamond\psi$, contradicting the consistency of $K\diamond\psi$. The opposite mixed case ---i.e., $K\diamond^{c}\psi=\mathcal{L}$ and $K\diamond^{c}\varphi$ is consistent--- can be proved symmetrically, by interchanging $\varphi$ and $\psi$.

Thus, either both $K\diamond^{c}\varphi$ and $K\diamond^{c}\psi$ are inconsistent or both are consistent. In the former case, it was previously shown that postulate $(K\diamond7)$ trivially holds. In the latter case, by condition (5), it follows that $\varphi\in K\diamond\varphi$, $\psi\in K\diamond\psi$, $K\diamond^{c}\varphi=K\diamond\varphi$, and $K\diamond^{c}\psi=K\diamond\psi$. Hence, $\diamond^{c}$ coincides with $\diamond$ on both epistemic inputs $\varphi$ and $\psi$. Moreover, from the standing assumptions $\psi\in K\diamond^{c}\varphi$ and $\varphi\in K\diamond^{c}\psi$, we obtain that $\psi\in K\diamond\varphi$ and $\varphi\in K\diamond\psi$. Therefore, postulate $(K\diamond7)$ applied to $\diamond$ yields $K\diamond\varphi=K\diamond\psi$. Consequently, $K\diamond^{c}\varphi=K\diamond\varphi=K\diamond\psi=K\diamond^{c}\psi$, meaning that postulate $(K\diamond7)$ is satisfied in this case as well.

\item For postulate $(K\diamond8)$, let $K$ be complete, and let $\varphi$, $\psi$ be two sentences of $\mathcal{L}$. If either $K\diamond^{c}\varphi$ or $K\diamond^{c}\psi$ is inconsistent, then $Cn\bigl((K\diamond^{c}\varphi)\cup(K\diamond^{c}\psi)\bigr)=\mathcal{L}$, and it follows trivially that $K\diamond^{c}(\varphi\vee\psi) \subseteq Cn\bigl((K\diamond^{c}\varphi)\cup(K\diamond^{c}\psi)\bigr)$, as required by postulate $(K\diamond8)$. 

Suppose now that both $K\diamond^{c}\varphi$ and $K\diamond^{c}\psi$ are consistent. By condition (5), we have that \mbox{$\varphi\in K\diamond\varphi$}, $\psi\in K\diamond\psi$, $K\diamond^{c}\varphi=K\diamond\varphi$, and $K\diamond^{c}\psi=K\diamond\psi$. Since $K\diamond\varphi$ is consistent and entails $\varphi$, it follows that $\neg(\varphi\vee\psi)\notin K\diamond\varphi$. Then, by postulate (LC), $\varphi\vee\psi\in K\diamond(\varphi\vee\psi)$, and hence, by the first clause of condition (5), $K\diamond^{c}(\varphi\vee\psi)=K\diamond(\varphi\vee\psi)$. Thus, $\diamond^{c}$ coincides with $\diamond$ on $\varphi$, $\psi$, and $\varphi\vee\psi$. Therefore, postulate $(K\diamond8)^{S}$ applied to $\diamond$ yields $K\diamond^{c}(\varphi\vee\psi)\subseteq Cn\bigl((K\diamond^{c}\varphi)\cup(K\diamond^{c}\psi)\bigr)$, as required by postulate $(K\diamond8)$.

\item Finally, postulate $(K\diamond9)$ follows directly from condition~(6).
\end{itemize}

Consequently, we have shown that $\diamond^{c}$ is a CL update operator. Hence, by Theorem~\ref{thm_cl_charact}, it follows that there exists a credible faithful assignment $w\mapsto(C_w,\preceq_w)$, such that
\[
\big[Cn(w)\diamond^{c}\varphi\big] = \min\Big([\varphi]\cap C_w,\preceq_w\!\Big). \tag{7}
\]

\paragraph{Construct a Pointwise Transformation Assignment.}
Thereafter, we construct a pointwise transformation assignment $\mathcal F=\{f_w:w\in\mathbb M\}$. For every world $w\in\mathbb{M}$, define $f_w$ as follows. If \mbox{$\varphi\in Cn(w)\diamond\varphi$}, put
\[
f_w(\varphi)=\varphi. \tag{8}
\]
If \mbox{$\varphi\notin Cn(w)\diamond\varphi$}, use postulate $(K\diamond2)^{S}$ to choose a sentence $\psi\in\mathcal{L}$ such that
\[
\varphi\models\psi,\qquad  \psi\in Cn(w)\diamond\varphi, \qquad Cn(w)\diamond\varphi=Cn(w)\diamond\psi, \tag{9}
\]
and put
\[
f_w(\varphi)=\psi. \tag{10}
\]
The choice in (10) is made uniformly on logical-equivalence classes. This is well defined since postulate $(K\diamond5)$ ensures that logically equivalent epistemic inputs yield the same update result, while the conditions in (9) are invariant under logical equivalence. Hence, the same choices are available for logically equivalent epistemic inputs.

\paragraph{Verify (F1)--(F4).}
We now prove that the pointwise transformation assignment \mbox{$\mathcal F=\{f_w:w\in\mathbb M\}$} satisfies properties (F1)--(F4).

\begin{itemize}
\item Property (F1) follows directly from the definition of $f_w$.

\item For property (F2), let \(\varphi\equiv\chi\). Fix $w\in\mathbb{M}$. By postulate \((K\diamond5)\), \(Cn(w)\diamond\varphi=Cn(w)\diamond\chi\), and therefore, \(\varphi\in Cn(w)\diamond\varphi\) iff \(\chi\in Cn(w)\diamond\chi\). Suppose first that $\varphi\in Cn(w)\diamond\varphi$. Then, $\chi\in Cn(w)\diamond\chi$, and condition (8) yields $f_w(\varphi)=\varphi$ and $f_w(\chi)=\chi$. Hence, $f_w(\varphi)\equiv f_w(\chi)$. Suppose instead that $\varphi\notin Cn(w)\diamond\varphi$. Then, $\chi\notin Cn(w)\diamond\chi$, and the uniform choice in condition (10) ensures that the same sentence is selected for both epistemic inputs. Therefore, $f_w(\varphi)\equiv f_w(\chi)$, and property (F2) follows.

\item For property (F3), if $\varphi\in Cn(w)\diamond\varphi$, then from condition (8) we have that $f_w(\varphi)=\varphi$, and thus, $f_w\big(f_w(\varphi)\big) = f_w(\varphi)$. If, on the other hand, $\varphi\notin Cn(w)\diamond\varphi$, let $f_w(\varphi)=\psi$, as specified by conditions~(9) and~(10). Then, $\psi\in Cn(w)\diamond\psi$, because $Cn(w)\diamond\psi = Cn(w)\diamond\varphi$. Therefore, condition (8) applies to $\psi$, and $f_w\big(f_w(\varphi)\big) = f_w(\psi) = \psi = f_w(\varphi)$.

\item For property (F4), suppose that $[\varphi]\cap C_w\neq\varnothing$. By condition (7), we derive that \mbox{$[Cn(w)\diamond^{c}\varphi]\neq\varnothing$}. Hence, \mbox{$Cn(w)\diamond^{c}\varphi\neq \mathcal{L}$}. Then, by condition (5), we have that \mbox{$\varphi\in Cn(w)\diamond\varphi$}. Therefore, condition (8) yields $f_w(\varphi)=\varphi$.
\end{itemize}

\paragraph{Prove Condition (SCL).}
As a last step, we show that condition (SCL) holds. Fix an arbitrary world $w\in\mathbb{M}$ and a sentence $\varphi\in\mathcal{L}$. We distinguish two cases, according to whether $\varphi$ belongs to $Cn(w)\diamond\varphi$.

\begin{itemize}
\item Suppose that $\varphi\in Cn(w)\diamond\varphi$. Then, by conditions (5) and (8), we derive that \mbox{$Cn(w)\diamond^{c}\varphi = Cn(w)\diamond\varphi$} and $f_w(\varphi)=\varphi$, respectively. Hence, by condition~(7), we deduce that
\[
[Cn(w)\diamond\varphi] = \min\Big([f_w(\varphi)]\cap C_w,\preceq_w\!\Big).
\]

\item Suppose that $\varphi\notin Cn(w)\diamond\varphi$. Let $f_w(\varphi)=\psi$, as specified by conditions~(9) and~(10). Then, \mbox{$Cn(w)\diamond\varphi = Cn(w)\diamond\psi$}. Moreover, $\psi\in Cn(w)\diamond\psi$, and hence, by condition~(5), $Cn(w)\diamond^{c}\psi = Cn(w)\diamond\psi$. Using condition~(7), we obtain that
\[
\begin{aligned}
[Cn(w)\diamond\varphi]
&=
[Cn(w)\diamond\psi] \\[1mm]
&=
[Cn(w)\diamond^{c}\psi] \\[1mm]
&=
\min\Big([\psi]\cap C_w,\preceq_w\!\Big) \\[1mm]
&=
\min\Big([f_w(\varphi)]\cap C_w,\preceq_w\!\Big).
\end{aligned}
\]
\end{itemize}

Finally, in view of our standing assumption on consistent prior belief sets, let $K$ be a consistent belief set. By postulate $(K\diamond9)$ and the equality \mbox{$[Cn(w)\diamond\varphi] = \min\Big([f_w(\varphi)]\cap C_w,\preceq_w\!\Big)$}, established above for every $w\in[K]$, we obtain that $[K\diamond\varphi] = \displaystyle\bigcup_{w\in[K]} [Cn(w)\diamond\varphi] =$ \mbox{$\displaystyle\bigcup_{w\in[K]} \min\Big([f_w(\varphi)]\cap C_w,\preceq_w\!\Big)$}. Thus, $\diamond$ is represented by condition (SCL), as required.
\end{proof}

\subsection{An Unrestricted-Credibility Variant: Selective Belief Update}
\label{subsec_genuine_selective}

The SCL framework combines two conceptually distinct mechanisms; i.e., a source-dependent transformation that determines which part of the epistemic input is accepted, and a credibility restriction that limits the successor worlds available from each source world. To isolate the former mechanism, we now {\em remove the credibility restrictions} by taking every credible set to be the entire set of worlds $\mathbb{M}$. In this unrestricted setting, properties (F1)--(F3) retain the essential structural requirements on selective transformations, whereas property (F4) is deliberately omitted, since it would force every consistent epistemic input to be accepted unchanged.\footnote{Indeed, if \(C_w=\mathbb M\) and \(\varphi\) is consistent, then \([\varphi]\cap C_w=[\varphi]\neq\varnothing\); hence, property \({\rm(F4)}\) entails \(f_w(\varphi)\equiv\varphi\).} Therefore, the resulting framework captures selective belief update independently of credibility limitation. 

\begin{remark}
Let \(\diamond\) be an update operator specified by the semantic clause underlying condition (SCL), by means of a credible faithful assignment \(w\mapsto(C_w,\preceq_w)\) and a pointwise transformation assignment \(\mathcal F=\{f_w:w\in\mathbb M\}\) satisfying properties (F1)--(F3). If \(C_w=\mathbb M\), for every world \(w\in\mathbb M\), then this semantic clause reduces to
\[
[K\diamond\varphi]=\bigcup_{w\in[K]}\min\bigl([f_w(\varphi)],\preceq_w\bigr).
\]
Thus, the epistemic input \(\varphi\) is selectively transformed relative to each source world \(w\), and the resulting proxy \(f_w(\varphi)\) is processed by the standard pointwise update mechanism. In this case, \(\diamond\) implements (genuinely) selective belief update.
\end{remark}

Selective belief update may be regarded as the belief-update analogue of selective belief revision~\cite{ferme99,garapa21} --- whereas selective belief revision transforms the epistemic input through a single function \(f\) before applying revision, the present construction transforms the epistemic input relative to each source world \(w\), and then applies the KM pointwise update mechanism.

\section{Well-Behaved Classes of SCL Update Operators}
\label{sec_classes_scl_operators}

In this section, we identify and characterize two sub-classes of SCL update operators obtained by progressively strengthening the requirements imposed on the pointwise transformation assignment $\mathcal{F}$. {\em Consistency-preserving} SCL update operators, introduced in Subsection~\ref{subsec_sc_update}, require the transformed epistemic input to be credible from its corresponding source world whenever the original epistemic input is consistent. This guarantees that each source world contributes at least one successor to the updated belief set whenever the epistemic input is consistent. {\em Maximal consistency-preserving} SCL update operators, introduced in Subsection~\ref{subsec_max_sc_update}, {\em additionally} require, for every consistent epistemic input, the selected proxy to be {\em maximally informative} among its credible consequences. For each sub-class, we provide both a semantic definition and an axiomatic characterization.

\subsection{Consistency-Preserving SCL Update Operators}
\label{subsec_sc_update}

The general SCL framework does {\em not} require the transformed epistemic input \(f_w(\varphi)\) to be credible from its source world \(w\). Consequently, it may be the case that
\[
[f_w(\varphi)]\cap C_w=\varnothing,
\]
in which case the branch associated with \(w\) contributes no successor to the updated belief set. To exclude this possibility for consistent epistemic inputs, we impose the following additional requirement on the pointwise transformation assignment \(\mathcal F=\{f_w:w\in\mathbb M\}\).

\vspace{3mm}

\begin{tabular}{l p{13cm}}
{\bf (F5)} & If $[\varphi]\neq\varnothing$, then \([f_w(\varphi)]\cap C_w\neq\varnothing\).
\end{tabular}

\vspace{3mm}

Property (F5) requires the transformed epistemic input to be credible from its corresponding source world whenever the epistemic input is consistent. Since the set \(\mathbb M\) of possible worlds is finite, every non-empty subset of \(C_w\) has at least one minimal element under a preorder \(\preceq_w\). Therefore, whenever $[\varphi]\neq\varnothing$, it follows that
\[
\min\Bigl([f_w(\varphi)]\cap C_w,\preceq_w\Bigr)\neq\varnothing.
\] 
Thus, for every consistent epistemic input, every source world contributes at least one selected successor under condition (SCL).

\begin{definition}[Consistency-Preserving SCL Update Operator] \label{def_sc_scl_operator}
An SCL update operator $\diamond$ is a consistency-preserving SCL update operator iff it admits a representation satisfying condition (SCL), in which the pointwise transformation assignment \(\mathcal F=\{f_w:w\in\mathbb M\}\) additionally satisfies property (F5).
\end{definition}

The semantic requirement imposed by property (F5) has a direct axiomatic counterpart. Since every source world contributes at least one successor whenever the epistemic input is consistent, updating a consistent prior belief set by a consistent epistemic input always produces a consistent result. This is expressed by the standard KM consistency postulate \((K\diamond4)\). On that basis, the following theorem strengthens Theorem~\ref{thm_scl_charact} accordingly, and characterizes consistency-preserving SCL update operators.

\begin{theorem} \label{thm_sc_scl_charact}
An update operator $\diamond$ satisfies postulates $(K\diamond1)$, $(K\diamond2)^{S}$, $(K\diamond3)$, $(K\diamond4)$, $(K\diamond5)$, $(K\diamond6)^{S}$, $(K\diamond7)$, $(K\diamond8)^{S}$, $(K\diamond9)$, and (LC) iff there exist a credible faithful assignment \mbox{$w\mapsto(C_w,\preceq_w)$} and a pointwise transformation assignment $\mathcal F=\{f_w:w\in\mathbb M\}$ satisfying properties (F1)--(F5), such that, for every belief set $K$ and every sentence $\varphi\in\mathcal L$, condition (SCL) holds.
\end{theorem}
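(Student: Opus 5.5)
The plan is to reduce everything to Theorem~\ref{thm_scl_charact}. The two postulate lists differ only by $(K\diamond4)$, and the two semantic conditions differ only by property (F5). So in each direction it suffices to pass from the representation (respectively the postulates) given by that theorem to the single extra item. No new construction of $C_w$, $\preceq_w$ or $f_w$ should be needed.

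\emph{Right-to-left.} Assume a representation satisfying (F1)--(F5) and condition (SCL). Theorem~\ref{thm_scl_charact} already yields every postulate except $(K\diamond4)$. For $(K\diamond4)$, let $K$ and $\varphi$ be consistent and pick some $w\in[K]$. Since $[\varphi]\neq\varnothing$, property (F5) gives $[f_w(\varphi)]\cap C_w\neq\varnothing$. Because $\mathbb M$ is finite, $\min\bigl([f_w(\varphi)]\cap C_w,\preceq_w\bigr)\neq\varnothing$, as the paper notes after stating (F5). By (SCL) this set is contained in $[K\diamond\varphi]$, so $K\diamond\varphi$ is consistent.

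\emph{Left-to-right.} Assume all the listed postulates. Dropping $(K\diamond4)$, Theorem~\ref{thm_scl_charact} provides a credible faithful assignment and a transformation assignment satisfying (F1)--(F4) and (SCL). I would verify that this same $\mathcal F$ already satisfies (F5). Fix $w\in\mathbb M$ and a consistent $\varphi$. The complete theory $Cn(w)$ is consistent, so $(K\diamond4)$ gives that $Cn(w)\diamond\varphi$ is consistent. Applying (SCL) with $K=Cn(w)$ gives
\[
[Cn(w)\diamond\varphi]=\min\bigl([f_w(\varphi)]\cap C_w,\preceq_w\bigr).
\]
This set is therefore non-empty, hence $[f_w(\varphi)]\cap C_w\neq\varnothing$, which is exactly (F5).

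The only point needing care is in the left-to-right direction. Definition~\ref{def_sc_scl_operator} asks for \emph{some} representation satisfying (F5), and the argument above shows that \emph{every} (SCL)-representation of an operator obeying $(K\diamond4)$ satisfies (F5). This holds because each $w\in\mathbb M$ is isolated by the complete theory $Cn(w)$, so no world can fall outside the scope of $(K\diamond4)$. It also means no modification of the choice made in conditions (9)--(10) of the earlier proof is required.
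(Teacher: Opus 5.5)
Your proposal is correct and matches the paper's proof: you reduce both directions to Theorem~\ref{thm_scl_charact}, derive $(K\diamond4)$ from (F5) plus the non-emptiness of minimal sets over the finite $\mathbb{M}$, and obtain (F5) for the inherited representation by applying $(K\diamond4)$ to the complete theory $Cn(w)$ and reading off $[Cn(w)\diamond\varphi]=\min\bigl([f_w(\varphi)]\cap C_w,\preceq_w\bigr)$. Your added remark that every (SCL)-representation of such an operator automatically satisfies (F5) is accurate and is implicit in the paper's argument.
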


\begin{proof}
\underline{\bf Right-to-left implication.} Let $\diamond$ be an update operator, let $K$ be a belief set, let $\varphi$ be a sentence of $\mathcal{L}$, and assume that there exist a credible faithful assignment \mbox{$w\mapsto(C_w,\preceq_w)$} and a pointwise transformation assignment $\mathcal F$ satisfying properties (F1)--(F5), such that condition (SCL) holds; that is, \mbox{$[K\diamond\varphi] = \displaystyle\bigcup_{w\in[K]}\min\Big([f_w(\varphi)]\cap C_w\,,\preceq_w\!\Big)$}. We show that $\diamond$ satisfies postulates \mbox{$(K\diamond1)$}, $(K\diamond2)^{S}$, $(K\diamond3)$, $(K\diamond4)$, $(K\diamond5)$, $(K\diamond6)^{S}$, $(K\diamond7)$, $(K\diamond8)^{S}$, $(K\diamond9)$, and (LC).

Since $\mathcal F$ satisfies properties (F1)--(F4), Theorem~\ref{thm_scl_charact} entails that $\diamond$ satisfies postulates $(K\diamond1)$, $(K\diamond2)^{S}$, $(K\diamond3)$, $(K\diamond5)$, $(K\diamond6)^{S}$, $(K\diamond7)$, $(K\diamond8)^{S}$, $(K\diamond9)$, and (LC). It remains to establish postulate $(K\diamond4)$.

To that end, suppose that both $K$ and $\varphi$ are consistent. Then, $[K]\neq\varnothing$. By property (F5), for every $w\in[K]$, $[f_w(\varphi)]\cap C_w\neq\varnothing$. Hence, $\min\Big([f_w(\varphi)]\cap C_w,\preceq_w\!\Big)\neq\varnothing$, for every $w\in[K]$. Then, it follows from condition (SCL) that $[K\diamond\varphi] = \displaystyle\bigcup_{w\in[K]} \min\Big([f_w(\varphi)]\cap C_w,\preceq_w\!\Big) \neq \varnothing$. Therefore, $K\diamond\varphi$ is consistent, and postulate $(K\diamond4)$ follows.

\bigskip

\noindent\underline{\bf Left-to-right implication.} Let $\diamond$ be an update operator that satisfies postulates $(K\diamond1)$, $(K\diamond2)^{S}$, $(K\diamond3)$, $(K\diamond4)$, $(K\diamond5)$, $(K\diamond6)^{S}$, $(K\diamond7)$, $(K\diamond8)^{S}$, $(K\diamond9)$, and (LC). We show that there exist a credible faithful assignment \mbox{$w\mapsto(C_w,\preceq_w)$} and a pointwise transformation assignment $\mathcal F$ satisfying properties (F1)--(F5), such that condition (SCL) holds.

By Theorem~\ref{thm_scl_charact}, there exist a credible faithful assignment $w\mapsto(C_w,\preceq_w)$ and a pointwise transformation assignment $\mathcal F=\{f_w:w\in\mathbb M\}$ satisfying properties (F1)--(F4), such that condition (SCL) holds. It remains to show that the transformation assignment $\mathcal F$ also satisfies property (F5). To that end, fix an arbitrary world $w\in\mathbb M$ and an arbitrary consistent sentence $\varphi\in\mathcal L$. 

Since $Cn(w)$ is consistent, postulate $(K\diamond4)$ entails that $Cn(w)\diamond\varphi$ is consistent, and therefore, $[Cn(w)\diamond\varphi]\neq\varnothing$. Since $[Cn(w)]=\{w\}$, condition (SCL) gives \mbox{$[Cn(w)\diamond\varphi]=\min\Big([f_w(\varphi)]\cap C_w,\preceq_w\!\Big)$}. Consequently, $\min\Big([f_w(\varphi)]\cap C_w,\preceq_w\!\Big)\neq\varnothing$, which implies that $[f_w(\varphi)]\cap C_w\neq\varnothing$. Thus, the transformation assignment $\mathcal F=\{f_w:w\in\mathbb M\}$ satisfies property (F5), as desired.
\end{proof}

The following lemma provides a behavioural characterization of the credible sets occurring in the representations supplied by Theorem~\ref{thm_sc_scl_charact}. Intuitively, it shows that a consistent sentence $\varphi$ is credible from a source world $w$ exactly when updating the complete theory $Cn(w)$ by $\varphi$ results in a belief set that accepts $\varphi$. Thus, the semantic condition $[\varphi]\cap C_w\neq\varnothing$ can be characterized entirely in terms of the observable behaviour of an update operator $\diamond$. This equivalence will be used in Subsection~\ref{subsec_max_sc_update} to characterize maximal consistency-preserving SCL update operators.

\begin{lemma} \label{lem_local_credibility}
Let $w\mapsto(C_w,\preceq_w)$ and $\mathcal F=\{f_w:w\in\mathbb M\}$ be assignments witnessing a representation of $\diamond$ as in Theorem~\ref{thm_sc_scl_charact}. Then, for every world $w\in\mathbb M$ and every consistent sentence $\varphi\in\mathcal L$,
\[
[\varphi]\cap C_w\neq\varnothing \quad\text{iff}\quad \varphi\in Cn(w)\diamond\varphi.
\]
\end{lemma}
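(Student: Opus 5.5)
The plan is to fix a world $w$ and a consistent sentence $\varphi$, and to work entirely with the single-source instance of condition (SCL). Since $[Cn(w)]=\{w\}$, condition (SCL) gives
\[
[Cn(w)\diamond\varphi]=\min\Big([f_w(\varphi)]\cap C_w,\preceq_w\!\Big).
\]
I would prove the two directions of the equivalence separately from this identity, using property (F4) for the forward direction and property (F5) for the backward one.

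For the left-to-right direction, I assume $[\varphi]\cap C_w\neq\varnothing$. Property (F4) then yields $f_w(\varphi)\equiv\varphi$, so $[Cn(w)\diamond\varphi]=\min([\varphi]\cap C_w,\preceq_w)\subseteq[\varphi]$. Hence every world of $Cn(w)\diamond\varphi$ satisfies $\varphi$, and by deductive closure (postulate $(K\diamond1)$, or simply because the left-hand side is the set of models of the theory) $\varphi\in Cn(w)\diamond\varphi$. This direction does not use consistency of $\varphi$ beyond what (F4) already demands.

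For the right-to-left direction, I assume $\varphi\in Cn(w)\diamond\varphi$, so that $[Cn(w)\diamond\varphi]\subseteq[\varphi]$. The key point is to show that $[Cn(w)\diamond\varphi]$ is non-empty. Since $\varphi$ is consistent, property (F5) gives $[f_w(\varphi)]\cap C_w\neq\varnothing$. Finiteness of $\mathbb M$ then guarantees that the minimum set is non-empty; alternatively, one could invoke $(K\diamond4)$ directly, since the representation comes from Theorem~\ref{thm_sc_scl_charact}. I then pick a world $r$ in this minimum set. This $r$ lies in $C_w$ by construction, and it lies in $[\varphi]$ by the assumed inclusion, so $r\in[\varphi]\cap C_w$, which is the required non-emptiness.

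The only delicate step is this non-emptiness in the backward direction. Without (F5) the branch could be empty and the membership $\varphi\in Cn(w)\diamond\varphi$ would hold vacuously, since the result would be the inconsistent theory $\mathcal L$. This is exactly why the lemma is stated for representations as in Theorem~\ref{thm_sc_scl_charact} and for consistent $\varphi$.
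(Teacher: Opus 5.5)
Your proof is correct and follows the paper's argument essentially step for step. The forward direction uses (F4) with the single-source instance of (SCL), and the backward direction uses (F5) to obtain a non-empty selected set, from which you pick a successor lying in both $C_w$ and $[\varphi]$; your explicit appeal to finiteness of $\mathbb M$ for the non-emptiness of the minimum is a detail the paper leaves implicit.
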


\begin{proof}
Let $w$ be a world of $\mathbb{M}$ and let $\varphi$ be a consistent sentence of $\mathcal{L}$. 

First, suppose that $[\varphi]\cap C_w\neq\varnothing$. By property (F4), $f_w(\varphi)\equiv\varphi$. Since $[Cn(w)]=\{w\}$, condition (SCL) gives \mbox{$[Cn(w)\diamond\varphi] = \min\Big([\varphi]\cap C_w,\preceq_w\!\Big)\subseteq[\varphi]$}. Therefore, $\varphi\in Cn(w)\diamond\varphi$.

Conversely, suppose that $\varphi\in Cn(w)\diamond\varphi$. By property (F5), $[f_w(\varphi)]\cap C_w\neq\varnothing$. Hence, condition (SCL) yields \mbox{$[Cn(w)\diamond\varphi] = \min\Big([f_w(\varphi)]\cap C_w,\preceq_w\!\Big)\neq\varnothing$}. Let $r\in[Cn(w)\diamond\varphi]$. Since $\varphi\in Cn(w)\diamond\varphi$, we have $r\models\varphi$, while condition (SCL) ensures that $r\in C_w$. Consequently, $r\in[\varphi]\cap C_w$, and therefore, $[\varphi]\cap C_w\neq\varnothing$.
\end{proof}

\subsection{Maximal Consistency-Preserving SCL Update Operators}
\label{subsec_max_sc_update}

Theorem~\ref{thm_sc_scl_charact} characterizes consistency-preserving SCL update operators by requiring every transformed epistemic input \(f_w(\varphi)\) to be credible from its corresponding source world \(w\), whenever the original epistemic input $\varphi$ is consistent. This requirement guarantees the existence of a credible proxy, but does not determine how informative that proxy must be. In particular, when several credible consequences of \(\varphi\) are available, properties (F1)--(F5) permit the transformation to select a proxy even though a strictly stronger credible consequence could have been retained. To exclude such unnecessarily weak selections, we impose the following maximality requirement.

\vspace{3mm}

\begin{tabular}{l p{13cm}}
{\bf (F6)} & There is no sentence \(\psi\in\mathcal L\) such that \(\varphi\models\psi\), \(\psi\models f_w(\varphi)\), \(f_w(\varphi)\not\models\psi\), and \([\psi]\cap C_w\neq\varnothing\).
\end{tabular}

\vspace{3mm}

Properties (F5) and (F6) jointly require $f_w(\varphi)$ to be maximally informative among the credible consequences of $\varphi$. More precisely, property (F5) ensures that $f_w(\varphi)$ is credible from $w$, while property (F6) excludes the existence of a credible consequence $\psi$ of $\varphi$ that is strictly stronger than $f_w(\varphi)$ and lies logically between $\varphi$ and $f_w(\varphi)$. The requirement concerns maximality rather than the existence of a uniquely strongest credible consequence; thus, several mutually incomparable maximal proxies may be available.

\begin{definition}[Maximal Consistency-Preserving SCL Update Operator] \label{def_max_sc_scl_operator}
A consistency-preserving SCL update operator \(\diamond\) is a maximal consistency-preserving SCL update operator iff it admits a representation satisfying condition (SCL), in which the pointwise transformation assignment \mbox{\(\mathcal F=\{f_w:w\in\mathbb M\}\)} additionally satisfies property (F6).
\end{definition}

We next formulate the axiomatic counterpart of this semantic requirement. By Lemma~\ref{lem_local_credibility} of the previous subsection, for a complete theory \(K=Cn(w)\), the condition $\chi\in K\diamond\chi$ holds exactly when \(\chi\) is credible from the source world \(w\). Consequently, the credibility condition \([\chi]\cap C_w\neq\varnothing\) occurring in property (F6) can be expressed entirely in terms of the observable behaviour of the update operator $\diamond$. This leads to the following strengthening of postulate \((K\diamond2)^S\). In addition to requiring an accepted proxy that is implied by the epistemic input and induces the same update result, postulate $(K\diamond2)^{SM}$ requires that proxy to be maximally informative among the locally credible consequences of the epistemic input.

\vspace{4mm}

\noindent \begin{tabular}{l p{12cm}}
$\mathbf{(K\diamond2)^{SM}}$ & If $K$ is complete, then there exists a sentence $\psi\in\mathcal{L}$ such that $\varphi\models\psi$, \mbox{$\psi\in K\diamond\varphi$}, $K\diamond\varphi=K\diamond\psi$, and, for every $\chi\in\mathcal{L}$, if $\varphi\models\chi$, $\chi\models\psi$, $\chi\in K\diamond\chi$, then $\psi\models\chi$.
\end{tabular}

\vspace{4mm}

As stated, postulate \((K\diamond2)^{SM}\) strengthens \((K\diamond2)^S\). Its first three requirements provide an accepted proxy \(\psi\) that is a consequence of \(\varphi\) and produces the same update result. Its final requirement imposes local maximality. If a locally credible consequence \(\chi\) lies between \(\varphi\) and \(\psi\), then \(\psi\models\chi\); since \(\chi\models\psi\) is already assumed, it follows that \(\chi\equiv\psi\). Thus, no strictly stronger locally credible consequence of \(\varphi\) can lie between \(\varphi\) and the selected proxy.

Against this background, we can formulate the representation theorem for maximal consistency-preserving SCL update operators.

\begin{theorem} \label{thm_max_sc_scl_charact}
An update operator $\diamond$ satisfies postulates $(K\diamond1)$, $(K\diamond2)^{SM}$, $(K\diamond3)$, $(K\diamond4)$, $(K\diamond5)$, $(K\diamond6)^{S}$, $(K\diamond7)$, $(K\diamond8)^{S}$, $(K\diamond9)$, and (LC) iff there exist a credible faithful assignment \mbox{$w\mapsto(C_w,\preceq_w)$} and a pointwise transformation assignment $\mathcal F=\{f_w:w\in\mathbb M\}$ satisfying properties (F1)--(F6), such that, for every belief set $K$ and every sentence $\varphi\in\mathcal L$, condition (SCL) holds.
\end{theorem}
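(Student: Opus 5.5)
The plan is to reduce both directions to Theorem~\ref{thm_sc_scl_charact}, which already handles every postulate except $(K\diamond2)^{SM}$ and every property except (F6). Lemma~\ref{lem_local_credibility} is then used to translate between semantic credibility $[\chi]\cap C_w\neq\varnothing$ and behavioural credibility $\chi\in Cn(w)\diamond\chi$.

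\emph{Right-to-left.} Assume a representation satisfying (F1)--(F6) exists.
\begin{enumerate}[label=(\roman*)]
\item Since (F1)--(F5) hold, Theorem~\ref{thm_sc_scl_charact} yields all the listed postulates except $(K\diamond2)^{SM}$.
\item For $(K\diamond2)^{SM}$, let $K=Cn(w)$ and take $\psi=f_w(\varphi)$. The first three clauses follow exactly as in the proof of $(K\diamond2)^S$ in Theorem~\ref{thm_scl_charact}, using (F1) and (F3).
\item For maximality, let $\chi$ satisfy $\varphi\models\chi$, $\chi\models\psi$ and $\chi\in K\diamond\chi$, and first assume $\varphi$ is consistent. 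Then $\chi$ is consistent, so Lemma~\ref{lem_local_credibility} gives $[\chi]\cap C_w\neq\varnothing$. Property (F6) then rules out $\psi\not\models\chi$, hence $\psi\models\chi$.
\item If $\varphi$ is inconsistent, I would not insist on $\psi=f_w(\varphi)$ but split into two cases.
\begin{itemize}
\item If $K\diamond\varphi$ is inconsistent, take $\psi=\varphi$. Then every $\chi$ between $\varphi$ and $\psi$ is equivalent to $\varphi$, and $K\diamond\psi=K\diamond\varphi$ by $(K\diamond5)$.
\item If $K\diamond\varphi$ is consistent, keep $\psi=f_w(\varphi)$. Any inconsistent $\chi$ has $K\diamond\chi=K\diamond\varphi$ by $(K\diamond5)$, which is consistent, so $\chi\notin K\diamond\chi$ and $\chi$ imposes no constraint. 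Any consistent such $\chi$ is handled by the argument in (iii).
\end{itemize}
\end{enumerate}

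\emph{Left-to-right.} It is not enough to quote Theorem~\ref{thm_sc_scl_charact} as a black box, because its $f_w$ was chosen via an arbitrary $(K\diamond2)^S$ witness and need not be maximal. Instead I will rerun the construction from the proof of Theorem~\ref{thm_scl_charact}.
\begin{enumerate}[label=(\roman*)]
\item Keep the auxiliary CL operator $\diamond^c$ and the resulting $(C_w,\preceq_w)$ given by condition (7); this part uses only postulates we still have.
\item Define $f_w(\varphi)=\varphi$ when $\varphi\in Cn(w)\diamond\varphi$. Otherwise, choose $f_w(\varphi)$ as a witness supplied by $(K\diamond2)^{SM}$, again uniformly on equivalence classes. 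Since $(K\diamond2)^{SM}$ implies $(K\diamond2)^S$, this witness satisfies (9).
\item The verifications of (F1)--(F4) and of (SCL) go through verbatim, since they use only (9) and conditions (5)--(8).
\item (F5) follows from $(K\diamond4)$ exactly as in Theorem~\ref{thm_sc_scl_charact}.
\item For (F6), suppose some $\chi$ satisfies $\varphi\models\chi$, $\chi\models f_w(\varphi)$, $f_w(\varphi)\not\models\chi$ and $[\chi]\cap C_w\neq\varnothing$. If $f_w(\varphi)=\varphi$, then $\chi\equiv\varphi$, which contradicts $f_w(\varphi)\not\models\chi$. Otherwise, $\chi$ is consistent, and the present assignments satisfy (F1)--(F5), so they witness a representation as in Theorem~\ref{thm_sc_scl_charact}. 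Lemma~\ref{lem_local_credibility} then gives $\chi\in Cn(w)\diamond\chi$, and the maximality clause of $(K\diamond2)^{SM}$ forces $f_w(\varphi)\models\chi$, a contradiction.
\end{enumerate}

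The main obstacle is making sure that the choice of $f_w$ really is the $(K\diamond2)^{SM}$ witness and is still extensional. Invoking Lemma~\ref{lem_local_credibility} is legitimate only after (F1)--(F5) and (SCL) have been verified for this very assignment, so the order of the steps matters. The secondary obstacle is the inconsistent-input edge case in the right-to-left direction, which the case split in step (iv) is meant to handle.
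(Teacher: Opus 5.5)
Your proof is correct, and its core matches the paper's. You use the same transformation: identity on self-accepted inputs, and otherwise a $(K\diamond2)^{SM}$ witness chosen uniformly on equivalence classes. As in the paper, (F6) is read off from the maximality clause once credibility is translated into $\chi\in Cn(w)\diamond\chi$.

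The routes differ in how the remaining properties are secured. The paper does invoke Theorem~\ref{thm_sc_scl_charact} as a black box, but keeps only its credible faithful assignment and an auxiliary transformation $\mathcal G$. It then uses Lemma~\ref{lem_local_credibility} to get, for consistent $\varphi$, that $[\varphi]\cap C_w\neq\varnothing$ iff $\varphi\in Cn(w)\diamond\varphi$. It checks (F4)--(F6) for the new transformation through this criterion, and transfers (SCL) from $g_w$ via (F4) for $g_w$. Because the criterion covers only consistent sentences, inconsistent inputs and witnesses need separate handling. You instead reopen the proof of Theorem~\ref{thm_scl_charact} and reuse $\diamond^{c}$ and its condition (7), which is valid for all sentences. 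As a result, (F1)--(F4) and (SCL) carry over verbatim with no consistency split, and (F5) follows directly from $(K\diamond4)$. The price is that you rely on the internals of an earlier proof rather than on stated results. Your claim that black-boxing is ``not enough'' is therefore overstated: it suffices to keep the black-box assignment and replace its transformation, which is what the paper does.

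In the right-to-left direction, for inconsistent $\varphi$ the paper keeps $\psi=f_w(\varphi)$. It uses (F3) and (F5) to show that an accepted inconsistent $\chi$ forces $f_w(\varphi)$ to be inconsistent. Your switch to the witness $\varphi$ when $K\diamond\varphi$ is inconsistent is equally valid; the paper's argument additionally shows that the canonical proxy $f_w(\varphi)$ always serves as the witness.
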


\begin{proof}
\noindent\underline{\bf Right-to-left implication.} Let $\diamond$ be an update operator, let $K$ be a belief set, let $\varphi$ be a sentence of $\mathcal{L}$, and assume that there exist a credible faithful assignment \mbox{$w\mapsto(C_w,\preceq_w)$} and a pointwise transformation assignment $\mathcal F$ satisfying properties (F1)--(F6), such that condition (SCL) holds; that is, \mbox{$[K\diamond\varphi] = \displaystyle\bigcup_{w\in[K]}\min\Big([f_w(\varphi)]\cap C_w\,,\preceq_w\!\Big)$}. We show that $\diamond$ satisfies postulates \mbox{$(K\diamond1)$}, $(K\diamond2)^{SM}$, $(K\diamond3)$, $(K\diamond4)$, $(K\diamond5)$, $(K\diamond6)^{S}$, $(K\diamond7)$, $(K\diamond8)^{S}$, $(K\diamond9)$, and (LC).

Since $\mathcal F$ satisfies properties (F1)--(F5), Theorem~\ref{thm_sc_scl_charact} entails that $\diamond$ satisfies postulates $(K\diamond1)$, $(K\diamond2)^S$, $(K\diamond3)$, $(K\diamond4)$, $(K\diamond5)$, $(K\diamond6)^S$, $(K\diamond7)$, $(K\diamond8)^S$, $(K\diamond9)$, and (LC). It remains to establish postulate $(K\diamond2)^{SM}$.

Let $K=Cn(w)$ be complete, and put $\psi=f_w(\varphi)$. By property (F1),  we have that $\varphi\models\psi$. Since $[K]=\{w\}$, condition (SCL) gives $[K\diamond\varphi] = \min\Bigl([f_w(\varphi)]\cap C_w,\preceq_w\Bigr) \subseteq[\psi]$. Hence, $\psi\in K\diamond\varphi$. Moreover, by property (F3), we have that $f_w(\psi) = f_w\bigl(f_w(\varphi)\bigr) \equiv f_w(\varphi) = \psi$. Therefore,
\[
[K\diamond\psi] = \min\Bigl([f_w(\psi)]\cap C_w,\preceq_w\Bigr) = \min\Bigl([\psi]\cap C_w,\preceq_w\Bigr) =
[K\diamond\varphi].
\]
Thus, $K\diamond\varphi=K\diamond\psi$.

Now, let $\chi\in\mathcal L$ be such that $\varphi\models\chi$, $\chi\models\psi$, and $\chi\in K\diamond\chi$. We show that $\psi\models\chi$. We distinguish two cases.

\begin{itemize}
\item Suppose first that $\chi$ is consistent. By property (F5), we derive that $[f_w(\chi)]\cap C_w\neq\varnothing$. Hence, $\min\Bigl([f_w(\chi)]\cap C_w,\preceq_w\Bigr)\neq\varnothing$. By condition (SCL), it follows that $[K\diamond\chi] = \min\Bigl([f_w(\chi)]\cap C_w,\preceq_w\Bigr)$. Choose a world $r\in[K\diamond\chi]$. Since $\chi\in K\diamond\chi$, we have that $r\models\chi$, while condition (SCL) ensures that $r\in C_w$. Consequently, $[\chi]\cap C_w\neq\varnothing$. Then, property (F6), applied to $\varphi$, $\chi$, and $f_w(\varphi)=\psi$, yields $\psi\models\chi$.

\item Suppose now that $\chi$ is inconsistent. Since $\varphi\models\chi$, the sentence $\varphi$ is also inconsistent, and hence, $\varphi\equiv\chi$. By property (F2), we have that $f_w(\varphi)\equiv f_w(\chi)$, and therefore, $\psi\equiv f_w(\chi)$.

Since $\chi\in K\diamond\chi$ and $\chi$ is inconsistent, the theory $K\diamond\chi$ is inconsistent. Hence $[K\diamond\chi]=\varnothing$. By condition (SCL), it follows that $\min\Bigl([f_w(\chi)]\cap C_w,\preceq_w\Bigr)=\varnothing$. Hence, every non-empty subset of $C_w$ has a minimal element under $\preceq_w$. Consequently, $[f_w(\chi)]\cap C_w=\varnothing$. Since $\psi\equiv f_w(\chi)$, we obtain that $[\psi]\cap C_w=\varnothing$.

Suppose, towards a contradiction, that $\psi$ is consistent. By property (F5), it follows that $[f_w(\psi)]\cap C_w\neq\varnothing$. However, property (F3) gives $f_w(\psi) = f_w\bigl(f_w(\varphi)\bigr) \equiv f_w(\varphi) = \psi$, and therefore, $[\psi]\cap C_w\neq\varnothing$, contrary to the conclusion above. Hence, $\psi$ is inconsistent. It follows  immediately that $\psi\models\chi$.
\end{itemize}

Thus, in either case, $\psi\models\chi$. Therefore, the sentence $\psi$  satisfies all the requirements of postulate $(K\diamond2)^{SM}$, and the postulate follows.

\bigskip

\noindent\underline{\bf Left-to-right implication.} Let $\diamond$ be an update operator that satisfies postulates $(K\diamond1)$, $(K\diamond2)^{SM}$, $(K\diamond3)$, $(K\diamond4)$, $(K\diamond5)$, $(K\diamond6)^{S}$, $(K\diamond7)$, $(K\diamond8)^{S}$, $(K\diamond9)$, and (LC). We show that there exist a credible faithful assignment \mbox{$w\mapsto(C_w,\preceq_w)$} and a pointwise transformation assignment $\mathcal F$ satisfying properties (F1)--(F6), such that condition (SCL) holds.

Since postulate $(K\diamond2)^{SM}$ implies postulate \mbox{$(K\diamond2)^S$}, Theorem~\ref{thm_sc_scl_charact} provides a credible faithful assignment $w\mapsto(C_w,\preceq_w)$ and a pointwise transformation assignment \mbox{$\mathcal G=\{g_w:w\in\mathbb M\}$} satisfying properties (F1)--(F5), such that, for every belief set $K$ and every sentence $\varphi\in\mathcal{L}$, 
\[
[K\diamond\varphi]=\displaystyle\bigcup_{w\in[K]}\min\Big([g_w(\varphi)]\cap C_w,\preceq_w\!\Big). \tag{1}
\] 
Since $w\mapsto(C_w,\preceq_w)$ and $\mathcal G$ witness a representation of $\diamond$ as in Theorem~\ref{thm_sc_scl_charact}, Lemma~\ref{lem_local_credibility} yields, for every world $w\in\mathbb M$ and every consistent sentence $\varphi\in\mathcal L$,
\[
[\varphi]\cap C_w\neq\varnothing \quad\text{iff}\quad \varphi\in Cn(w)\diamond\varphi. \tag{2}
\]

\paragraph{Construct a New Pointwise Transformation Assignment.}
We shall use the above equivalence to construct a {\em new} pointwise transformation assignment satisfying the additional maximality property (F6). To that end, for each $w\in\mathbb M$, define a new transformation function $\widehat f_w$ as follows:
\[
\widehat f_w(\varphi)=
\begin{cases}
\varphi, & \text{if }\ \varphi\in Cn(w)\diamond\varphi,\\[1mm]
\psi, & \text{otherwise},
\end{cases}
\]
where, in the second case, $\psi$ is supplied by postulate $(K\diamond2)^{SM}$ so that
\[
\varphi\models\psi,\qquad \psi\in Cn(w)\diamond\varphi,\qquad Cn(w)\diamond\varphi=Cn(w)\diamond\psi,
\]
and, for every $\chi\in\mathcal L$, if $\varphi\models\chi$, $\chi\models\psi$, and $\chi\in Cn(w)\diamond\chi$, then $\psi\models\chi$. The choices in the second clause are made uniformly on logical-equivalence classes. This is well defined because postulate $(K\diamond5)$ ensures that logically equivalent epistemic inputs yield the same update result, while the entailment and maximality conditions in postulate $(K\diamond2)^{SM}$ are invariant under logical equivalence. Hence, the same witnesses are available for logically equivalent epistemic inputs.

\paragraph{Verify (F1)--(F6).}
Now, we show that the pointwise transformation assignment \mbox{$\widehat{\mathcal F}=\{\widehat f_w:w\in\mathbb M\}$} satisfies properties (F1)--(F6). 

\begin{itemize}
\item Property (F1) follows immediately from the definition of $\widehat f_w$.

\item For property (F2), let $\varphi\equiv\psi$. By postulate $(K\diamond5)$, $Cn(w)\diamond\varphi=Cn(w)\diamond\psi$, and therefore, $\varphi\in Cn(w)\diamond\varphi$ iff $\psi\in Cn(w)\diamond\psi$. Hence, the same clause in the definition of $\widehat f_w$ applies to both epistemic inputs. If the first clause applies, then $\widehat f_w(\varphi)=\varphi\equiv\psi=\widehat f_w(\psi)$. If the second clause applies, the uniform choice on logical-equivalence classes ensures that the same sentence is selected for both epistemic inputs. Thus, $\widehat f_w(\varphi)\equiv\widehat f_w(\psi)$.

\item For property (F3), if $\varphi\in Cn(w)\diamond\varphi$, then from the first clause of the definition of $\widehat f_w$ we have that $\widehat f_w(\varphi)=\varphi$, and thus, $\widehat f_w\big(\widehat f_w(\varphi)\big) = \widehat f_w(\varphi)$. If, on the other hand, $\varphi\notin Cn(w)\diamond\varphi$, let $\widehat f_w(\varphi)=\psi$, as specified in the definition of $\widehat f_w$. Since $\psi\in Cn(w)\diamond\varphi$ and $Cn(w)\diamond\varphi=Cn(w)\diamond\psi$, it follows that $\psi\in Cn(w)\diamond\psi$. Hence, the first clause of the definition of $\widehat f_w$ applies to $\psi$, and gives $\widehat f_w\big(\widehat f_w(\varphi)\big)=\widehat f_w(\psi)=\psi=\widehat f_w(\varphi)$.

\item For property (F4), suppose that $[\varphi]\cap C_w\neq\varnothing$. By condition~(2), we derive that \mbox{$\varphi\in Cn(w)\diamond\varphi$}, and the first clause of the definition of $\widehat f_w$ entails that $\widehat f_w(\varphi)=\varphi$. 

\item For property (F5), let $\varphi$ be consistent. Suppose first that $\varphi\in Cn(w)\diamond\varphi$. Then, the first clause of the definition of $\widehat f_w$ entails $\widehat f_w(\varphi)=\varphi$, while condition~(2) gives $[\widehat f_w(\varphi)]\cap C_w\neq\varnothing$, as required. Suppose instead that $\varphi\notin Cn(w)\diamond\varphi$, and let $\widehat f_w(\varphi)=\psi$, as specified in the definition of $\widehat f_w$. Since $\varphi\models\psi$ and $\varphi$ is consistent, $\psi$ is consistent. Moreover, \mbox{$\psi\in Cn(w)\diamond\varphi = Cn(w)\diamond\psi$}, and hence, $\psi\in Cn(w)\diamond\psi$. Therefore, condition~(2) yields $[\psi]\cap C_w\neq\varnothing$. Consequently, $[\widehat f_w(\varphi)]\cap C_w\neq\varnothing$, and property (F5) follows.

\item For property (F6), suppose first that $\varphi\in Cn(w)\diamond\varphi$. Then, the first clause of the definition of $\widehat f_w$ entails that $\widehat f_w(\varphi)=\varphi$, and there cannot be a sentence $\psi\in\mathcal{L}$ such that both $\varphi\models\psi$ and $\widehat f_w(\varphi)\not\models\psi$. Suppose, therefore, that $\varphi\notin Cn(w)\diamond\varphi$. Assume, towards a contradiction, that there exists a sentence $\psi\in\mathcal L$ such that $\varphi\models\psi$, $\psi\models\widehat f_w(\varphi)$, $\widehat f_w(\varphi)\not\models\psi$, and $[\psi]\cap C_w\neq\varnothing$. Since $[\psi]\cap C_w\neq\varnothing$, the sentence $\psi$ is consistent. Hence, condition~(2) gives $\psi\in Cn(w)\diamond\psi$. Then, postulate $(K\diamond2)^{SM}$ gives $\widehat f_w(\varphi)\models\psi$, contradicting $\widehat f_w(\varphi)\not\models\psi$. Consequently, property (F6) holds.
\end{itemize}
 
\paragraph{Prove Condition (SCL).} As a last step, we show that condition (SCL) holds. Fix an arbitrary world $w\in\mathbb M$ and a sentence $\varphi\in\mathcal L$. Since $[Cn(w)]=\{w\}$, condition~(1) gives, for every sentence $\theta\in\mathcal L$, 
\[ 
[Cn(w)\diamond\theta] = \min\Bigl([g_w(\theta)]\cap C_w,\preceq_w\Bigr). \tag{3} 
\] 
We distinguish two cases, according to whether $\varphi$ belongs to $Cn(w)\diamond\varphi$.

\begin{itemize} \item Suppose first that $\varphi\in Cn(w)\diamond\varphi$. By the first clause of the definition of $\widehat f_w$, we have that $\widehat f_w(\varphi)=\varphi$. Suppose first that $\varphi$ is consistent. By condition~(2), $[\varphi]\cap C_w\neq\varnothing$. Therefore, property (F4) for $g_w$ gives $g_w(\varphi)\equiv\varphi$. Consequently, by condition~(3), 
\[ 
\begin{aligned} [Cn(w)\diamond\varphi] &= \min\Bigl([g_w(\varphi)]\cap C_w,\preceq_w\Bigr) \\ &= \min\Bigl([\varphi]\cap C_w,\preceq_w\Bigr) \\ &= \min\Bigl([\widehat f_w(\varphi)]\cap C_w,\preceq_w\Bigr). \end{aligned} 
\] 
Suppose instead that $\varphi$ is inconsistent. Since $\varphi\in Cn(w)\diamond\varphi$, the theory $Cn(w)\diamond\varphi$ is inconsistent. Hence, $[Cn(w)\diamond\varphi]=\varnothing$. Moreover, $\widehat f_w(\varphi)=\varphi$ and $[\varphi]=\varnothing$. Therefore, $\min\Bigl([\widehat f_w(\varphi)]\cap C_w,\preceq_w\Bigr) = \varnothing = [Cn(w)\diamond\varphi]$. 

\item Suppose now that $\varphi\notin Cn(w)\diamond\varphi$. Let $\widehat f_w(\varphi)=\psi$, as specified by the second clause of the definition of $\widehat f_w$. Then, $Cn(w)\diamond\varphi=Cn(w)\diamond\psi$ and $\psi\in Cn(w)\diamond\varphi = Cn(w)\diamond\psi$. Suppose first that $\psi$ is consistent. By condition~(2), $[\psi]\cap C_w\neq\varnothing$. Therefore, property (F4) for $g_w$ gives $g_w(\psi)\equiv\psi$. Consequently, by condition~(3), 
\[ 
\begin{aligned} [Cn(w)\diamond\varphi] &= [Cn(w)\diamond\psi] \\ &= \min\Bigl([g_w(\psi)]\cap C_w,\preceq_w\Bigr) \\ &= \min\Bigl([\psi]\cap C_w,\preceq_w\Bigr) \\ &= \min\Bigl([\widehat f_w(\varphi)]\cap C_w,\preceq_w\Bigr). \end{aligned} 
\] 
Suppose instead that $\psi$ is inconsistent. Since $\psi\in Cn(w)\diamond\psi$, the theory $Cn(w)\diamond\psi$ is inconsistent. Hence, $[Cn(w)\diamond\psi]=\varnothing$. Since $Cn(w)\diamond\varphi=Cn(w)\diamond\psi$, it follows that $[Cn(w)\diamond\varphi]=\varnothing$. Moreover, $\widehat f_w(\varphi)=\psi$ and $[\psi]=\varnothing$. Therefore, \mbox{$\min\Bigl([\widehat f_w(\varphi)]\cap C_w,\preceq_w\Bigr) = \varnothing = [Cn(w)\diamond\varphi]$}. 
\end{itemize} 

Finally, in view of our standing assumption on consistent prior belief sets, let $K$ be a consistent belief set. By postulate $(K\diamond9)$ and the equality \mbox{$[Cn(w)\diamond\varphi]=\min\Big([\widehat f_w(\varphi)]\cap C_w,\preceq_w\!\Big)$}, established above for every $w\in[K]$, we obtain that $[K\diamond\varphi] = \displaystyle\bigcup_{w\in[K]} [Cn(w)\diamond\varphi] =$ \mbox{$\displaystyle\bigcup_{w\in[K]} \min\Big([\widehat f_w(\varphi)]\cap C_w,\preceq_w\!\Big)$}. Thus, $\diamond$ is represented by condition (SCL), as required.
\end{proof}

\section{KM, CL, and CCL Belief Update as Special Cases of SCL Belief Update}
\label{sec_cl_ccl_special_cases}

Having introduced the SCL framework and characterized its principal sub-classes, we now examine its relationship with standard KM belief update \cite{katsuno92} and the credibility-limited approaches of \mbox{Ferm{\'e} \emph{et al.}~\cite{ferme23}}. We show that CL and CCL belief update arise as special cases of SCL belief update through particular choices of the pointwise transformation assignment, while KM belief update is recovered by additionally removing the credibility restrictions. These results establish that the SCL framework {\em subsumes all three approaches}.

\subsection{CL and CCL Belief Update as Special Cases}
\label{subsec_cl_special_case}

Fix a credible faithful assignment \(w\mapsto(C_w,\preceq_w)\). For every pointwise transformation assignment \(\mathcal F=\{f_w:w\in\mathbb M\}\), let \(\diamond^{\mathcal F}\) denote the operation determined by condition (SCL), relative to this credible faithful assignment. We first show that CL belief update is recovered by taking every source-dependent transformation function to be the identity function.

\begin{proposition} \label{prop_cl_limiting_case}
For every \(w\in\mathbb M\) and every \(\varphi\in\mathcal L\), define the transformation function \mbox{\(f_w^{\mathrm{CL}}(\varphi)=\varphi\)}, and let \mbox{\(\mathcal F_{\mathrm{CL}}=\big\{f_w^{\mathrm{CL}}:w\in\mathbb M\big\}\)} be the corresponding pointwise transformation assignment. Then, \(\mathcal F_{\mathrm{CL}}\) satisfies properties (F1)--(F4), and \[ [K\diamond^{\mathcal F_{\mathrm{CL}}}\varphi]=\bigcup_{w\in[K]}\min\Big([\varphi]\cap C_w,\preceq_w\!\Big). \] Hence, \(\diamond^{\mathcal F_{\mathrm{CL}}}\) is a CL update operator.
\end{proposition}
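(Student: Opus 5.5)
The plan is a direct verification in three parts: the four properties of the identity assignment, the reduction of condition (SCL), and an appeal to Theorem~\ref{thm_cl_charact}.

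First, I check (F1)--(F4) for $\mathcal F_{\mathrm{CL}}$. Since $f_w^{\mathrm{CL}}(\varphi)=\varphi$ for every $w$ and $\varphi$, each property is immediate:
\begin{itemize}
\item (F1) holds because $\varphi\models\varphi$.
\item (F2) holds because $\varphi\equiv\psi$ gives $f_w^{\mathrm{CL}}(\varphi)=\varphi\equiv\psi=f_w^{\mathrm{CL}}(\psi)$.
\item (F3) holds because $f_w^{\mathrm{CL}}\big(f_w^{\mathrm{CL}}(\varphi)\big)=\varphi=f_w^{\mathrm{CL}}(\varphi)$.
\item (F4) holds unconditionally, because its conclusion $f_w^{\mathrm{CL}}(\varphi)\equiv\varphi$ is true whether or not $[\varphi]\cap C_w\neq\varnothing$.
\end{itemize}
So no case analysis is needed here.

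Second, I substitute $f_w^{\mathrm{CL}}(\varphi)=\varphi$ into condition (SCL), which defines $\diamond^{\mathcal F_{\mathrm{CL}}}$ relative to the fixed credible faithful assignment. For every $w\in[K]$ we have $[f_w^{\mathrm{CL}}(\varphi)]\cap C_w=[\varphi]\cap C_w$. Taking the union over $w\in[K]$ yields exactly the displayed equality, which is condition (CL) for the same credible faithful assignment $w\mapsto(C_w,\preceq_w)$.

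Third, I conclude with the right-to-left direction of Theorem~\ref{thm_cl_charact}. The operator $\diamond^{\mathcal F_{\mathrm{CL}}}$ is represented by condition (CL) via a credible faithful assignment, so it satisfies $(K\diamond1)$--$(K\diamond3)$ and $(K\diamond5)$--$(K\diamond9)$. By Definition~\ref{def_cl_operator}, it is therefore a CL update operator.

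The only point needing care is the inconsistent prior $K=\mathcal L$. There $[K]=\varnothing$, so both sides of the equality are the empty union, and the Consistency Assumption remark already covers this case. I do not expect a real obstacle; the argument is pure unfolding of definitions.
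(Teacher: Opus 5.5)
Your proposal is correct and takes the same approach as the paper's proof: the identity assignment trivially satisfies (F1)--(F4), substituting it into condition (SCL) yields condition (CL) for the same credible faithful assignment, and the right-to-left direction of Theorem~\ref{thm_cl_charact} then shows that $\diamond^{\mathcal F_{\mathrm{CL}}}$ is a CL update operator. Your extra remark on the inconsistent prior $K=\mathcal L$ is harmless, since the paper's Consistency Assumption already covers that case.
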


\begin{proof} Since every \(f_w^{\mathrm{CL}}\) is the identity function, properties (F1)--(F4) follow immediately. By condition (SCL), we have that, for every belief set $K$ and every sentence $\varphi\in\mathcal{L}$, \[ [K\diamond^{\mathcal F_{\mathrm{CL}}}\varphi] =\bigcup_{w\in[K]}\min\Big(\big[f_w^{\mathrm{CL}}(\varphi)\big]\cap C_w,\preceq_w\!\Big) = \bigcup_{w\in[K]}\min\Big([\varphi]\cap C_w,\preceq_w\!\Big). \] This is exactly condition (CL) of Subsection~\ref{subsec_cl_update}. Therefore, by Theorem~\ref{thm_cl_charact}, \(\diamond^{\mathcal F_{\mathrm{CL}}}\) is a CL update operator. 
\end{proof}

Notice that the pointwise transformation assignment \(\mathcal F_{\mathrm{CL}}\) need not satisfy property (F5). Indeed, it may be the case that \([\varphi]\neq\varnothing\), while \([\varphi]\cap C_w=\varnothing\). In that case, $\bigl[f_w^{\mathrm{CL}}(\varphi)\bigr]\cap C_w = [\varphi]\cap C_w = \varnothing$, and the source branch associated with \(w\) contributes no successor world to the updated state of belief.

We next turn to CCL belief update. Its source-retention behaviour can be recovered by leaving locally credible epistemic inputs unchanged and replacing each locally non-credible epistemic input \(\varphi\) with \(\varphi\lor\gamma_w\), where \(\gamma_w\) is a complete sentence characterizing the source world \(w\). Relative to $C_w$, this proxy has $w$ as its unique credible world and is maximally informative among the credible consequences of $\varphi$.

\begin{proposition}\label{prop_ccl_limiting_case}
For every \(w\in\mathbb M\), let \(\gamma_w\) be a sentence such that \([\gamma_w]=\{w\}\), and define the transformation function
\[
f_w^{\mathrm{CCL}}(\varphi)=
\begin{cases}
\varphi, & \text{if }[\varphi]\cap C_w\neq\varnothing,\\[1mm]
\varphi\lor\gamma_w, & \text{otherwise}.
\end{cases}
\]
Let \(\mathcal F_{\mathrm{CCL}}=\{f_w^{\mathrm{CCL}}:w\in\mathbb M\}\) be the corresponding pointwise transformation assignment. Then, \(\mathcal F_{\mathrm{CCL}}\) satisfies properties (F1)--(F6), and
\[
\min\Bigl(\bigl[f_w^{\mathrm{CCL}}(\varphi)\bigr]\cap C_w,\preceq_w\Bigr)=
\begin{cases}
\min\Bigl([\varphi]\cap C_w,\preceq_w\Bigr), & \text{if }\ [\varphi]\cap C_w\neq\varnothing,\\[2mm]
\{w\}, & \text{otherwise}.
\end{cases}
\]
Hence, \(\diamond^{\mathcal F_{\mathrm{CCL}}}\) is a CCL update operator and, in particular, a maximal consistency-preserving SCL update operator.
\end{proposition}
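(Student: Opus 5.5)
The plan is to verify the properties directly, then compute the local selection, and finally invoke Theorem~\ref{thm_ccl_charact} and Definition~\ref{def_max_sc_scl_operator}. The one fact used throughout is this: whenever $[\varphi]\cap C_w=\varnothing$, we have
\[
[\varphi\lor\gamma_w]\cap C_w=\bigl([\varphi]\cap C_w\bigr)\cup\{w\}=\{w\},
\]
since $w\in C_w$ by Definition~\ref{def_cred_assign}. Call $\varphi$ \emph{locally credible} at $w$ if $[\varphi]\cap C_w\neq\varnothing$, and \emph{non-credible} otherwise.

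\textbf{Checking (F1)--(F5).}
\begin{itemize}
\item (F1) holds because $\varphi\models\varphi\lor\gamma_w$.
\item (F2) holds because whether $[\varphi]\cap C_w$ is empty depends only on $[\varphi]$, and $\varphi\lor\gamma_w$ respects logical equivalence.
\item (F3): in the credible case this is trivial. In the non-credible case, $\varphi\lor\gamma_w$ is credible by the displayed fact, so $f_w$ leaves it unchanged.
\item (F4) is immediate from the first clause.
\item (F5) follows in both cases, from the first clause or from the displayed fact.
\end{itemize}

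\textbf{Checking (F6).} In the credible case, $f_w(\varphi)=\varphi$, so no $\psi$ with $\varphi\models\psi\models\varphi$ can be strictly stronger than $\varphi$. In the non-credible case, suppose $\varphi\models\psi\models\varphi\lor\gamma_w$ and $[\psi]\cap C_w\neq\varnothing$. Then
\[
[\psi]\cap C_w\subseteq[\varphi\lor\gamma_w]\cap C_w=\{w\},
\]
so $w\in[\psi]$. Together with $[\varphi]\subseteq[\psi]$, this gives $[\varphi\lor\gamma_w]\subseteq[\psi]$, i.e.\ $f_w(\varphi)\models\psi$. This contradicts the strictness required by (F6).

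\textbf{The displayed min-equation.} In the credible case it is literal. In the non-credible case, $\min(\{w\},\preceq_w)=\{w\}$.

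\textbf{Concluding the two class memberships.} Plugging the min-equation into (SCL) gives exactly condition (CCL) with $g_w$ as in Theorem~\ref{thm_ccl_charact}, so $\diamond^{\mathcal F_{\mathrm{CCL}}}$ is a CCL update operator. Since $\mathcal F_{\mathrm{CCL}}$ satisfies (F1)--(F6) and the representation is given by (SCL), Definitions~\ref{def_sc_scl_operator} and~\ref{def_max_sc_scl_operator} make it a maximal consistency-preserving SCL update operator.

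\textbf{Main obstacle.} The only delicate point is (F6) in the non-credible case, specifically the step showing that any credible $\psi$ between $\varphi$ and $\varphi\lor\gamma_w$ must contain $w$ and hence be equivalent to the proxy. Inconsistent $\varphi$ needs no separate treatment: it is automatically non-credible, and the argument above goes through with $[\varphi]=\varnothing$.
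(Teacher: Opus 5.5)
Your proposal is correct and follows essentially the same route as the paper. Both verify (F1)--(F5) directly using $[\varphi\lor\gamma_w]\cap C_w=\{w\}$ in the non-credible case. Both prove (F6) by showing that any credible intermediate $\psi$ must be satisfied by $w$, so $\varphi\lor\gamma_w\models\psi$. Both then reduce (SCL) to (CCL) and conclude via Theorem~\ref{thm_ccl_charact}.
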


\begin{proof}
First, we show that the pointwise transformation assignment \(\mathcal F_{\mathrm{CCL}}\) satisfies properties \mbox{(F1)--(F6)}.

\begin{itemize}
\item Property (F1) follows because \(f_w^{\mathrm{CCL}}(\varphi)\) is either \(\varphi\) or \(\varphi\lor\gamma_w\), and \(\varphi\models\varphi\lor\gamma_w\). 

\item Property (F2) follows because the condition \([\varphi]\cap C_w\neq\varnothing\) is invariant under logical equivalence.

\item For property (F3), suppose first that \([\varphi]\cap C_w\neq\varnothing\). Then, \(f_w^{\mathrm{CCL}}(\varphi)=\varphi\), and thus, we have immediately that $f_w^{\mathrm{CCL}}\big(f_w^{\mathrm{CCL}}(\varphi)\big) = f_w^{\mathrm{CCL}}(\varphi)$. Suppose instead that \([\varphi]\cap C_w=\varnothing\). Since \(w\in C_w\), we have that $[\varphi\lor\gamma_w]\cap C_w=\{w\}\neq\varnothing$. Hence,
\[
f_w^{\mathrm{CCL}}\bigl(f_w^{\mathrm{CCL}}(\varphi)\bigr) =f_w^{\mathrm{CCL}}(\varphi\lor\gamma_w) =\varphi\lor\gamma_w =f_w^{\mathrm{CCL}}(\varphi).
\]

\item Property (F4) follows directly from the first clause of the definition of the transformation function $f_w^{\mathrm{CCL}}$.

\item Property (F5) follows because
\[
\bigl[f_w^{\mathrm{CCL}}(\varphi)\bigr]\cap C_w=
\begin{cases}
[\varphi]\cap C_w, & \text{if }\ [\varphi]\cap C_w\neq\varnothing,\\
\{w\}, & \text{otherwise},
\end{cases}
\]
and both sets are non-empty in their respective cases. Thus, property (F5) holds.

\item For property (F6), suppose first that \([\varphi]\cap C_w\neq\varnothing\). Then, \(f_w^{\mathrm{CCL}}(\varphi)=\varphi\), so there cannot be a sentence \(\psi\) such that \(\varphi\models\psi\) and \(f_w^{\mathrm{CCL}}(\varphi)\not\models\psi\). Suppose instead that \([\varphi]\cap C_w=\varnothing\), so that \(f_w^{\mathrm{CCL}}(\varphi)=\varphi\lor\gamma_w\). Let \(\psi\) satisfy $\varphi\models\psi$, $\psi\models\varphi\lor\gamma_w$, and $[\psi]\cap C_w\neq\varnothing$. Choose \(r\in[\psi]\cap C_w\). Since \(\psi\models\varphi\lor\gamma_w\) and \([\varphi]\cap C_w=\varnothing\), it follows that \(r=w\). Hence, \(w\models\psi\). Since \([\gamma_w]=\{w\}\), it follows that \(\gamma_w\models\psi\). Together with \(\varphi\models\psi\), this yields \(\varphi\lor\gamma_w\models\psi\). Therefore, no strictly stronger credible consequence of \(\varphi\) lies between \(\varphi\) and \(f_w^{\mathrm{CCL}}(\varphi)\), and property (F6) follows.
\end{itemize}

Finally, if \([\varphi]\cap C_w\neq\varnothing\), then \(f_w^{\mathrm{CCL}}(\varphi)=\varphi\), and hence, 
\[ 
\min\Big(\big[f_w^{\mathrm{CCL}}(\varphi)\big]\cap C_w,\preceq_w\!\Big)=\min\Big([\varphi]\cap C_w,\preceq_w\!\Big). \]
If, on the other hand, \([\varphi]\cap C_w=\varnothing\), then $\bigl[f_w^{\mathrm{CCL}}(\varphi)\bigr]\cap C_w =[\varphi\lor\gamma_w]\cap C_w =\{w\}$, and therefore,
\[
\min\Bigl(\bigl[f_w^{\mathrm{CCL}}(\varphi)\bigr]\cap C_w,\preceq_w\Bigr)=\{w\}.
\]
Consequently, in view of condition (SCL), we obtain that, for every belief set $K$ and every sentence $\varphi\in\mathcal{L}$, \[ \big[K\diamond^{\mathcal F_{\mathrm{CCL}}}\varphi\big]=\bigcup_{w\in[K]} \begin{cases} \min\Big([\varphi]\cap C_w,\preceq_w\!\Big), & \text{if }[\varphi]\cap C_w\neq\varnothing,\\[1mm] \{w\}, & \text{otherwise}, \end{cases} \] which is exactly condition (CCL) of Subsection~\ref{subsec_ccl_update}. Hence, \(\diamond^{\mathcal F_{\mathrm{CCL}}}\) is a CCL update operator and, in particular, a maximal consistency-preserving SCL update operator.
\end{proof}

The preceding propositions, together with the semantic characterizations of CL and CCL belief update, yield the following inclusion relations between the corresponding classes of update operators.

\begin{corollary}\label{cor_cl_ccl_inclusions}
Every CL update operator is an SCL update operator, while every CCL update operator is a maximal consistency-preserving SCL update operator.
\end{corollary}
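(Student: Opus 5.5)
The plan is to reduce both inclusions to the two preceding propositions, using the representation theorems of Ferm{\'e} \emph{et al.} to pass from an arbitrary CL or CCL update operator to a concrete credible faithful assignment. The key observation is that the operators $\diamond^{\mathcal F_{\mathrm{CL}}}$ and $\diamond^{\mathcal F_{\mathrm{CCL}}}$ in Propositions~\ref{prop_cl_limiting_case} and~\ref{prop_ccl_limiting_case} were built relative to an \emph{arbitrary} fixed credible faithful assignment. So every such assignment yields an SCL representation of the corresponding operator.

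For the first inclusion, let $\diamond$ be a CL update operator. By Theorem~\ref{thm_cl_charact}, there is a credible faithful assignment $w\mapsto(C_w,\preceq_w)$ such that condition (CL) holds for every belief set $K$ and sentence $\varphi$. Fixing this assignment, I take $\mathcal F_{\mathrm{CL}}$ as in Proposition~\ref{prop_cl_limiting_case}. That proposition gives (F1)--(F4) and shows that $[K\diamond^{\mathcal F_{\mathrm{CL}}}\varphi]$ equals the right-hand side of (CL), hence equals $[K\diamond\varphi]$. Since theories are determined by their sets of worlds, $K\diamond\varphi=K\diamond^{\mathcal F_{\mathrm{CL}}}\varphi$ for all $K,\varphi$. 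The pair $(w\mapsto(C_w,\preceq_w),\mathcal F_{\mathrm{CL}})$ therefore witnesses condition (SCL) for $\diamond$, and by Definition~\ref{def_scl_operator} $\diamond$ is an SCL update operator.

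For the second inclusion, let $\diamond$ be a CCL update operator. Theorem~\ref{thm_ccl_charact} supplies a credible faithful assignment satisfying (CCL) with the functions $g_w$. Relative to this assignment, I choose $\gamma_w$ with $[\gamma_w]=\{w\}$; such a sentence exists because $\mathcal P$ is finite, so one can take the conjunction of the literals of $w$. I then form $\mathcal F_{\mathrm{CCL}}$. Proposition~\ref{prop_ccl_limiting_case} gives (F1)--(F6) and shows that $\min\bigl([f_w^{\mathrm{CCL}}(\varphi)]\cap C_w,\preceq_w\bigr)=g_w(\varphi)$ for every $w$ and $\varphi$. Taking unions over $[K]$, the (SCL) right-hand side coincides with the (CCL) right-hand side, and so $\diamond=\diamond^{\mathcal F_{\mathrm{CCL}}}$. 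This single representation simultaneously satisfies (F1)--(F4), making $\diamond$ an SCL operator, (F5), making it consistency-preserving per Definition~\ref{def_sc_scl_operator}, and (F6), making it maximal per Definition~\ref{def_max_sc_scl_operator}.

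The only point requiring care, and the main (mild) obstacle, is the direction of the quantifiers. The propositions as stated show that $\diamond^{\mathcal F}$ \emph{is} a CL/CCL operator, whereas the corollary needs the converse: that an \emph{arbitrary} CL/CCL operator arises this way. The step that closes this gap is invoking the representation theorems first, so that the assignment used in the propositions is exactly the one representing the given $\diamond$. The equality of the world-sets computed in the propositions then identifies $\diamond$ with $\diamond^{\mathcal F}$. The inconsistent prior $K=\mathcal L$ is handled by the standing consistency remark, since both sides yield $\mathcal L$.
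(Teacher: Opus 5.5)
Your proposal is correct and matches the paper's proof. Both invoke Theorems~\ref{thm_cl_charact} and~\ref{thm_ccl_charact} to get the credible faithful assignment representing the given operator, then apply Propositions~\ref{prop_cl_limiting_case} and~\ref{prop_ccl_limiting_case} to that same assignment. Your added details (existence of $\gamma_w$ from the finiteness of $\mathcal P$, identifying theories via their world sets, and the inconsistent prior) are points the paper leaves implicit.
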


\begin{proof}
Let \(\diamond\) be a CL update operator. By Theorem~\ref{thm_cl_charact}, there exists a credible faithful assignment \(w\mapsto(C_w,\preceq_w)\) representing \(\diamond\) by condition (CL). Applying Proposition~\ref{prop_cl_limiting_case} to this assignment, with \(f_w^{\mathrm{CL}}(\varphi)=\varphi\), yields a pointwise transformation assignment \(\mathcal F_{\mathrm{CL}}\) satisfying properties \mbox{(F1)--(F4)}, such that condition (SCL) holds for \(\diamond\). Hence, every CL update operator is an SCL update operator.

Now, let \(\diamond\) be a CCL update operator. By Theorem~\ref{thm_ccl_charact}, there exists a credible faithful assignment \(w\mapsto(C_w,\preceq_w)\) representing \(\diamond\) by condition (CCL). Applying Proposition~\ref{prop_ccl_limiting_case} to this assignment yields a pointwise transformation assignment \(\mathcal F_{\mathrm{CCL}}\) satisfying properties (F1)--(F6), such that condition (SCL) holds for \(\diamond\). Hence, every CCL update operator is a maximal consistency-preserving SCL update operator.
\end{proof}

\subsection{KM Belief Update within the SCL Hierarchy}
\label{subsec_km_scl_hierarchy}

Corollary~\ref{cor_cl_ccl_inclusions} establishes that CL and CCL update operators arise as sub-classes of the class of SCL update operators. To complete the inclusion structure among the principal classes considered in this article, we now locate standard KM belief update within this hierarchy. KM belief update is recovered by taking every credible set to be the entire set of worlds and every transformation function to be the identity.

\begin{proposition} \label{prop_km_common_subclass}
Every KM update operator is a CL update operator and a maximal consistency-preserving SCL update operator.
\end{proposition}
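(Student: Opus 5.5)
The plan is to start from the semantic characterization in Theorem~\ref{thm_update_char}. Let $\diamond$ be a KM update operator. That theorem provides a faithful pointwise assignment $w\mapsto\preceq_w$, with each $\preceq_w$ a partial preorder over $\mathbb M$ and $w\prec_w r$ for every $r\neq w$, such that condition (U) holds. I will set $C_w=\mathbb M$ for every $w$. Then $\{w\}\subseteq C_w\subseteq\mathbb M$, and each $\preceq_w$ is a partial preorder over $C_w$ in which $w$ is strictly below every other element. So $w\mapsto(C_w,\preceq_w)$ is a credible faithful assignment in the sense of Definition~\ref{def_cred_assign}.

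Since $[\varphi]\cap C_w=[\varphi]$, condition (U) coincides with condition (CL) for this assignment. Theorem~\ref{thm_cl_charact} then shows at once that $\diamond$ is a CL update operator. Alternatively, this follows directly from Definition~\ref{def_cl_operator}, because the KM postulates contain $(K\diamond1)$--$(K\diamond3)$ and $(K\diamond5)$--$(K\diamond9)$.

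For the second claim, I will use the same credible assignment together with the identity transformation assignment $\mathcal F_{\mathrm{CL}}$ of Proposition~\ref{prop_cl_limiting_case}. That proposition already gives properties (F1)--(F4) and shows that $\diamond^{\mathcal F_{\mathrm{CL}}}$ satisfies (CL), hence (SCL), which here is just (U). It remains to check (F5) and (F6).

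For (F5): if $[\varphi]\neq\varnothing$, then $[f_w(\varphi)]\cap C_w=[\varphi]\cap\mathbb M=[\varphi]\neq\varnothing$. For (F6): since $f_w(\varphi)=\varphi$, a sentence $\psi$ with $\varphi\models\psi$ and $f_w(\varphi)\not\models\psi$ would require $\varphi\not\models\psi$, which is impossible, so (F6) holds vacuously. Therefore $\diamond$ admits an (SCL) representation whose transformation assignment satisfies (F1)--(F6), and so it is a maximal consistency-preserving SCL operator by Definitions~\ref{def_sc_scl_operator} and~\ref{def_max_sc_scl_operator}.

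Alternatively, one could route the second claim through Proposition~\ref{prop_ccl_limiting_case}. With $C_w=\mathbb M$, the non-credible case of $f^{\mathrm{CCL}}_w$ arises only for inconsistent $\varphi$, where $f^{\mathrm{CCL}}_w(\varphi)=\varphi\vee\gamma_w$ yields $\{w\}$ instead of $\varnothing$. The result therefore differs from KM update there, since $(K\diamond2)$ forces $K\diamond\bot=\mathcal L$. This is why I prefer the identity-transformation route. The only delicate point is exactly this inconsistent-input case: (F5) is conditioned on $[\varphi]\neq\varnothing$, so the identity transformation is compatible with $[K\diamond\bot]=\varnothing$.
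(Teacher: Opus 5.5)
Your proposal is correct and follows essentially the same route as the paper: take $C_w=\mathbb M$ with identity transformations, observe that both (CL) and (SCL) collapse to (U), and check (F5) from the consistency of $\varphi$ and (F6) vacuously. Your side remarks are accurate additions not made in the paper's proof: that CL membership also follows directly from the postulate inclusion, and that the CCL-style transformation would fail on inconsistent inputs.
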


\begin{proof}
Let \(\diamond\) be a KM update operator. By Theorem~\ref{thm_update_char}, there exists a faithful pointwise assignment \(w\mapsto\preceq_w\) such that $[K\diamond\varphi]=\displaystyle\bigcup_{w\in[K]}\min([\varphi],\preceq_w)$. For every world \(w\in\mathbb M\), put \(C_w=\mathbb M\) and define $f_w(\varphi)=\varphi$. Since \(C_w=\mathbb M\), condition (CL) reduces to $[K\diamond\varphi] = \displaystyle\bigcup_{w\in[K]} \min\bigl([\varphi],\preceq_w\bigr)$, which is exactly condition (U). Therefore, \(\diamond\) is a CL update operator.

Moreover, the identity transformation \(f_w(\varphi)=\varphi\) satisfies properties (F1)--(F4) and (F6). It also satisfies property (F5), since, whenever \([\varphi]\neq\varnothing\), $[f_w(\varphi)]\cap C_w = [\varphi]\cap\mathbb M = [\varphi] \neq\varnothing$. Finally, condition (SCL) reduces to $[K\diamond\varphi] = \displaystyle\bigcup_{w\in[K]} \min\bigl([\varphi],\preceq_w\bigr)$, which is again exactly condition (U). Therefore, \(\diamond\) is a maximal consistency-preserving SCL update operator.
\end{proof}

\begin{remark}
When the epistemic input \(\varphi\) is consistent and \(C_w=\mathbb M\), for every \(w\in\mathbb M\), condition (CCL) of Theorem~\ref{thm_ccl_charact} reduces to condition (U). Thus, under unrestricted credibility, KM and CCL belief update coincide on consistent epistemic inputs. This coincidence does not, however, amount to an inclusion between the corresponding classes of operators over the unrestricted domain of epistemic inputs.
\end{remark}

\subsection{Inclusion Hierarchy and Strictness}
\label{subsec_inclusion_hierarchy}

\begin{figure}[h!]
\centering\small
\begin{tikzpicture}[
box/.style={rectangle, draw, rounded corners, minimum width=4.3cm, minimum height=2cm, align=center},
inclusion/.style={->, very thick, >=latex},
coincidence/.style={<->, very thick, dashed, >=latex}
]
\node[box, very thick, minimum width=11cm] (scl) at (0,13) {{\bf SCL Update Operators}\\[3mm] {\scriptsize $(K\diamond1)$, $(K\diamond2)^{S}$, $(K\diamond3)$, $(K\diamond5)$, $(K\diamond6)^{S}$, $(K\diamond7)$, $(K\diamond8)^{S}$, $(K\diamond9)$, (LC)}};
\node[box, very thick, minimum height=2.5cm] (scscl) at (0,8.4) {{\bf Consistency-Preserving}\\{\bf SCL Update Operators}\\[1.5mm] {\scriptsize $(K\diamond1)$, $(K\diamond2)^{S}$, $(K\diamond3)$, $(K\diamond4)$, $(K\diamond5)$,}\\[1mm] {\scriptsize $(K\diamond6)^{S}$, $(K\diamond7)$, $(K\diamond8)^{S}$, $(K\diamond9)$, (LC)}};
\node[box, very thick, minimum height=2.5cm] (maxscl) at (0,4) {{\bf Maximal Consistency-Preserving}\\{\bf SCL Update Operators}\\[1.5mm] {\scriptsize $(K\diamond1)$, $(K\diamond2)^{SM}$, $(K\diamond3)$, $(K\diamond4)$, $(K\diamond5)$,}\\[1mm] {\scriptsize $(K\diamond6)^{S}$, $(K\diamond7)$, $(K\diamond8)^{S}$, $(K\diamond9)$, (LC)}};
\node[box, very thick, minimum height=2.5cm, minimum width=4cm] (cl) at (-5.7,8.4) {{\bf CL Update Operators}\\[2mm] $(K\diamond1)$--$(K\diamond3)$,\\[1mm] $(K\diamond5)$--$(K\diamond9)$};
\node[box, very thick, minimum height=2.5cm, minimum width=4cm] (ccl) at (5.7,4) {{\bf CCL Update Operators}\\[2mm] $(K\diamond1)$, $(K\diamond3)$--$(K\diamond9)$,\\[1mm] (RSC), (SM), (IR)};
\node[box, very thick] (km) at (0,0) {{\bf KM Update Operators}\\[2mm] $(K\diamond1)$--$(K\diamond9)$};

\draw[inclusion] (scscl) -- (scl);
\draw[inclusion] (maxscl) -- (scscl);
\draw[inclusion] (cl) -- (scl);
\draw[inclusion] (ccl.west) -- (maxscl.east);
\draw[inclusion] (km) -- (maxscl);
\draw[inclusion] (km.west) -| (cl.south);
\draw[coincidence]
(km.east) -|
node[pos=0.25, below, align=center, font=\scriptsize]
{coincide if\\
$C_w=\mathbb M$ and $[\varphi]\neq\varnothing$}
(ccl.south);
\end{tikzpicture}
\caption{Relations among the principal classes of SCL update operators. Solid arrows point from proper sub-classes to their super-classes. The dashed bidirectional arrow indicates that KM and CCL belief update coincide under unrestricted credibility for consistent epistemic inputs.}
\label{fig_map}
\end{figure}

The inclusion relations established above yield the hierarchy depicted in Figure~\ref{fig_map}. Maximal consistency-preserving SCL update operators form a sub-class of consistency-preserving SCL update operators, which in turn form a sub-class of SCL update operators. CL update operators form a sub-class of SCL update operators, whereas CCL update operators form a sub-class of maximal consistency-preserving SCL update operators. KM update operators form a common sub-class of CL update operators and maximal consistency-preserving SCL update operators. Moreover, under unrestricted credibility, KM and CCL belief update coincide whenever the epistemic input is consistent.

As a matter of fact, the inclusions depicted in Figure~\ref{fig_map} are all {\em proper}, as Proposition~\ref{prop_proper_inclusions} below proves. Hence, the proposed SCL framework offers a unified and strictly more expressive model of belief update, as it subsumes the established approaches, while also accommodating source-dependent selective acceptance.

\begin{proposition} \label{prop_proper_inclusions}
All inclusion relations represented by the solid arrows in
Figure~\ref{fig_map} are proper.
\end{proposition}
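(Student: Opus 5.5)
Since Figure~\ref{fig_map} contains seven solid arrows, the plan is to exhibit, for each arrow, an operator in the larger class that is not in the smaller one. To keep the separating properties easy to read off, every counterexample will be defined through condition (SCL) over a small signature. The most economical choice is a single atom $a$, with worlds $a$ and $\bar a$, possibly supplemented by the cup scenario of Example~\ref{ex_cup_scl}. I will then check non-membership by showing that a specific postulate fails. The membership side requires no work, because in every case it follows directly from the semantic definitions together with Theorems~\ref{thm_scl_charact}, \ref{thm_sc_scl_charact} and~\ref{thm_max_sc_scl_charact}.

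For the arrows going into the top class (SCL), I proceed as follows.

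\emph{Consistency-preserving $\subsetneq$ SCL.} Take the identity transformations and $C_a=\{a\}$. Then $Cn(a)\diamond\neg a=\mathcal L$, which violates $(K\diamond4)$.

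\emph{CL $\subsetneq$ SCL.} The operator $\diamond_{\mathrm{SCL}}$ of Example~\ref{ex_cup_scl} does the job, since it violates $(K\diamond2)$: the formula $t\land\neg e$ does not belong to $K\diamond_{\mathrm{SCL}}\varphi$.

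\emph{Maximal $\subsetneq$ consistency-preserving.} Use the cup assignment but set $f_{w_3}(\varphi)\equiv\top$ instead of $t$. The constraints (F1)--(F5) still hold, since $\top$ is credible and (F3) is satisfied. However, $(K\diamond2)^{SM}$ fails at $Cn(w_3)$: by Lemma~\ref{lem_local_credibility}, the formula $t$ is locally accepted, yet $t$ lies strictly between $\varphi$ and every admissible witness $\psi$. The delicate point is that the witness $\psi$ in $(K\diamond2)^{SM}$ need not be $f_{w_3}(\varphi)$. So I must argue that any $\psi$ with $Cn(w_3)\diamond\psi=Cn(w_3)\diamond\varphi=Cn(w_3)$ must be satisfied by $w_3$. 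Such a $\psi$ then admits a strictly stronger credible consequence $\chi$, namely $\psi\land(t\lor\ldots)$. I will choose $\chi=\psi\wedge t$, or $\psi\wedge(t\vee\gamma)$ adjusted so that $\varphi\models\chi$; concretely, $\chi=\psi\wedge t$ works because $\varphi\models t$. If $\psi\models t$, then $w_3\not\models\psi$, which is a contradiction.

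For the arrows into the lower classes, the plan is this.

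\emph{CCL $\subsetneq$ maximal.} Take the cup operator with $f_{w_3}(\varphi)\equiv t$, which is maximal because $t$ is a maximal credible consequence. It violates (RSC) at $Cn(w_3)$, since the result is $Cn(teb)$, which neither contains $\varphi$ nor equals $Cn(w_3)$. I must verify (F6) for every input, not just $\varphi$. To do so, I will define $f_w$ in general as a chosen maximal credible consequence that keeps (F3) and (F2), for instance by lexicographic choice on equivalence classes, and set $f_w(\varphi)=\varphi$ when $\varphi$ is credible.

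\emph{KM $\subsetneq$ CL.} Take $C_a=\{a\}$ with identity transformations. Consistency fails, so $(K\diamond4)$ fails.

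\emph{KM $\subsetneq$ maximal.} The CCL operator on one atom with $C_a=\{a\}$ works, because $Cn(a)\diamond\neg a=Cn(a)$ violates $(K\diamond2)$. This operator is maximal by Proposition~\ref{prop_ccl_limiting_case}.

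The main obstacle is guaranteeing that the non-standard transformations in the third and fourth cases globally satisfy (F3) and (F6) for all inputs, not only the illustrative one. I expect to handle this by a uniform recipe: $f_w(\varphi)=\varphi$ if credible, and otherwise $\varphi\vee\delta_w(\varphi)$, where $\delta_w(\varphi)$ is a fixed credible world formula chosen canonically. I will then check idempotence directly.
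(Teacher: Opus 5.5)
Your separations for consistency-preserving $\subsetneq$ SCL and KM $\subsetneq$ CL work: the one-atom operator with $C_a=\{a\}$ and identity transformations is essentially the expansion operator the paper uses. So do CL $\subsetneq$ SCL (via $\diamond_{\mathrm{SCL}}$, whose $\mathcal F$ can indeed be completed to satisfy (F1)--(F4)) and KM $\subsetneq$ maximal (a one-atom CCL operator). Note that there are six solid arrows, not seven.

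The genuine gap is the separation of maximal consistency-preserving from consistency-preserving operators. Write $\gamma_u$ for a sentence with $[\gamma_u]=\{u\}$. With $f_{w_3}(\varphi)\equiv\top$ you get $[Cn(w_3)\diamond\varphi]=\{w_3\}$. However, $\psi=\varphi\lor\gamma_{w_3}\equiv(t\land\neg e)\lor(\neg t\land e\land b)$ is a valid witness for $(K\diamond2)^{SM}$ at this input:
\begin{itemize}
\item $\varphi\models\psi$ and $w_3\models\psi$;
\item $[\psi]\cap C_{w_3}=\{w_3\}$ gives $Cn(w_3)\diamond\psi=Cn(w_3)=Cn(w_3)\diamond\varphi$;
\item up to equivalence, the only sentences between $\varphi$ and $\psi$ are $\varphi$, which is not locally accepted, and $\psi$ itself.
\end{itemize}
Your step fails because $\chi=\psi\land t$ is credible from $w_3$ only if $t e b\models\psi$, which you never secured. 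For this $\psi$ one has $\psi\land t\equiv\varphi$. Likewise, $t$ does not lie between $\varphi$ and every witness, since $t\not\models\psi$.

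So your operator coincides with the one using the maximal proxy $\varphi\lor\gamma_{w_3}$, and it \emph{is} a maximal consistency-preserving SCL operator. More generally, suppose $\varphi$ is not credible from $w$ and the selected set is a singleton $\{u\}$. Then $\varphi\lor\gamma_u$ is always a maximal witness producing it. Since $C_{w_3}$ has only two worlds, with $w_3$ strictly least, the cup scenario cannot separate these classes at $w_3$ at all.

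The missing idea is a selection consisting of two incomparable credible worlds, which forces every witness to be the non-maximal proxy. This is the paper's construction: $w=\bar a\bar b$, $C_w=\{w,a\bar b,\bar a b\}$ with $a\bar b$ and $\bar a b$ incomparable, and $f_w(a\land b)=a\lor b$. Every witness is then equivalent to $a\lor b$, whereas $a$ is a strictly stronger, locally accepted consequence of $a\land b$.

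A smaller error appears in your CCL $\subsetneq$ maximal case. There, $t$ is \emph{not} a maximal credible consequence of $t\land\neg e$ from $w_3$. The sentence $t\land(\neg e\lor b)\equiv\varphi\lor\gamma_{t e b}$ is implied by $\varphi$, strictly stronger than $t$, and credible via $t e b$. So your $\mathcal F$ violates (F6), and your claim that membership needs no work is unjustified here. The separation survives if you set $f_{w_3}(\varphi)=t\land(\neg e\lor b)$ instead. This yields the same result $Cn(t\land e\land b)$, which still violates (RSC), and it is exactly the paper's $\chi\lor\gamma_r$ pattern.
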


\begin{proof}
Ferm{\'e} \emph{et al.}~\cite{ferme23} show that expansion is a simple example of a CL update operator that is not a KM update operator. Since expansion may produce an inconsistent result from a consistent prior belief set and a consistent epistemic input, it is not a consistency-preserving SCL update operator. Hence, the class of KM update operators is a proper sub-class of the class of CL update operators, while the class of consistency-preserving SCL update operators is a proper sub-class of the class of SCL update operators.

The CCL update operator illustrated in Example~\ref{ex_cup_ccl} of Subsection~\ref{subsec_ccl_update} does not satisfy the success postulate \((K\diamond2)\), since \(\varphi\notin K\diamond_{\mathrm{CCL}}\varphi\). Therefore, it is neither a KM nor a CL update operator. Since every CCL update operator is a maximal consistency-preserving SCL update operator, this establishes that KM update operators form a proper sub-class of maximal consistency-preserving SCL update operators. It also shows that CL update operators form a proper sub-class of SCL update operators.

It remains to separate the class of CCL update operators from the class of maximal consistency-preserving SCL update operators. Consider a language built from \(\mathcal P=\{a,b\}\), and let the worlds $w=\bar a\bar b$ and $r=a\bar b$. Define a credible faithful assignment by setting \(C_w=\{w,r\}\), with \(w\prec_w r\), and \(C_u=\{u\}\), with \(\preceq_u=\{(u,u)\}\), for every \(u\in\mathbb M\setminus\{w\}\). For each \(u\in\mathbb M\), let \(\gamma_u\) be a complete sentence such that \([\gamma_u]=\{u\}\). Define the source-dependent transformation functions by
\[
f_w(\chi)=
\begin{cases}
\chi, & \text{if }\ [\chi]\cap C_w\neq\varnothing,\\
\chi\lor\gamma_r, & \text{otherwise},
\end{cases}
\]
and, for every \(u\neq w\), by
\[
f_u(\chi)=
\begin{cases}
\chi, & \text{if }\ u\in[\chi],\\
\chi\lor\gamma_u, & \text{otherwise}.
\end{cases}
\]
Let \(\mathcal F=\{f_u:u\in\mathbb M\}\), and let \(\diamond\) be the SCL update operator determined by the above credible faithful assignment and the pointwise transformation assignment $\mathcal F$, through condition (SCL). By the same argument as in Proposition~\ref{prop_ccl_limiting_case}, we can show that \(\mathcal F\) satisfies properties \mbox{(F1)--(F6)}. Hence, \(\diamond\) is a maximal consistency-preserving SCL update operator.

Now, let \(K=Cn(w)\) and \(\varphi=a\land b\). Since \([\varphi]\cap C_w=\varnothing\), we have that $f_w(\varphi)=\varphi\lor\gamma_r$ and $[f_w(\varphi)]\cap C_w=\{r\}$. Therefore, condition (SCL) gives $[K\diamond\varphi]=\{r\}$. Since \(r\not\models\varphi\) and \(r\neq w\), we have that \(\varphi\notin K\diamond\varphi\) and \(K\diamond\varphi\neq K\). Thus, postulate (RSC) of Definition~\ref{def_ccl_operator} fails, and \(\diamond\) is not a CCL update operator. Therefore, the class of CCL update operators is a proper sub-class of the class of maximal consistency-preserving SCL update operators.

Finally, to separate the class of maximal consistency-preserving SCL update operators from the class of consistency-preserving SCL update operators, consider a language built from \(\mathcal P=\{a,b\}\), and let the worlds $w=\bar a\bar b$, $r_1=a\bar b$, and $r_2=\bar a b$. Define a credible faithful assignment by setting $C_w=\{w,r_1,r_2\}$, where \(w\prec_w r_1\), \(w\prec_w r_2\), and \(r_1\) and \(r_2\) are incomparable with respect to $\preceq_w$. For every \(u\in\mathbb M\setminus\{w\}\), put \(C_u=\{u\}\) and \(\preceq_u=\{(u,u)\}\).

Define the transformation function associated with \(w\) by
\[
f_w(\chi)=
\begin{cases}
\chi, & \text{if }\ [\chi]\cap C_w\neq\varnothing,\\[1mm]
a\lor b, & \text{if }\ \chi\equiv a\land b,\\[1mm]
\top, & \text{otherwise}.
\end{cases}
\]
For every \(u\in\mathbb M\setminus\{w\}\), let \(\gamma_u\) be a complete sentence such that \([\gamma_u]=\{u\}\), and define
\[
f_u(\chi)=
\begin{cases}
\chi, & \text{if }u\in[\chi],\\[1mm]
\chi\lor\gamma_u, & \text{otherwise}.
\end{cases}
\]
Let \(\mathcal F=\{f_u:u\in\mathbb M\}\), and let \(\diamond\) be the SCL update operator determined by the above credible faithful assignment and the pointwise transformation assignment $\mathcal F$, through condition (SCL). It is readily verified that \(\mathcal F\) satisfies properties (F1)--(F5). Hence, \(\diamond\) is a consistency-preserving SCL update operator.

Now, let \(K=Cn(w)\) and \(\varphi=a\land b\). Since \([\varphi]\cap C_w=\varnothing\), we have that $f_w(\varphi)=a\lor b$ and $[f_w(\varphi)]\cap C_w=\{r_1,r_2\}$. Since \(r_1\) and \(r_2\) are incomparable with respect to $\preceq_w$, condition (SCL) gives $[K\diamond\varphi]=\{r_1,r_2\}$.

Suppose that a sentence \(\psi\in\mathcal{L}\) satisfies $\varphi\models\psi$, $\psi\in K\diamond\varphi$, and $K\diamond\varphi=K\diamond\psi$. Since \(\psi\in K\diamond\varphi\), both \(r_1\) and \(r_2\) satisfy \(\psi\); moreover, \(\varphi\models\psi\) entails that \(ab\models\psi\). As the language contains only the atoms \(a\) and \(b\), it follows that \(\psi\) is logically equivalent either to \(a\lor b\) or to \(\top\). The latter possibility is excluded because $[K\diamond\top]=\{w\}\neq\{r_1,r_2\}$. Hence, every possible witness \(\psi\) for postulate \((K\diamond2)^{SM}\) is logically equivalent to \(a\lor b\).

Consider now the sentence \(\chi=a\). We have that $\varphi\models a$ and $a\models a\lor b$, and, since \mbox{\([a]\cap C_w=\{r_1\}\)}, $[K\diamond a]=\{r_1\}$. Consequently, \(a\in K\diamond a\), while \(a\lor b\not\models a\). Thus, no possible witness \(\psi\) satisfies the maximality requirement of postulate \((K\diamond2)^{SM}\). Therefore, \(\diamond\) is not a maximal consistency-preserving SCL update operator. Hence, the class of maximal consistency-preserving SCL update operators is a proper sub-class of the class of consistency-preserving SCL update operators.

The aforementioned separating examples establish the strictness of every inclusion represented in Figure~\ref{fig_map}.
\end{proof}

\section{Conclusion}

This article introduced selective credibility-limited belief update, a non-prioritized framework that builds on three principal lines of research; namely, the pointwise semantics of Katsuno--Mendelzon (KM) belief update~\cite{katsuno92}, the credibility-limited (CL) and consistent credibility-limited (CCL) belief-update models of \mbox{Ferm{\'e} \emph{et al.}~\cite{ferme23}}, and the transformation-based approach developed for selective belief revision~\cite{ferme99}. The proposed framework combines these ideas by integrating source-dependent transformation functions into credibility-limited transition structures. For each initially possible world, the epistemic input is first replaced by a weaker proxy representing the information that can be accepted from that source world; the most plausible credible worlds of the resulting proxy are then selected as possible successors. This two-stage construction preserves the pointwise character of KM belief update, while overcoming the all-or-nothing treatment of compound epistemic inputs found in existing credibility-limited approaches~\cite{ferme23}.

We provided both semantic and axiomatic characterizations of the resulting class of selective credibility-limited (SCL) update operators. We also identified two progressively more constrained sub-classes. Consistency-preserving SCL update operators require every transformed epistemic input to have a credible world relative to its source world whenever the original epistemic input is consistent, thus ensuring that every initially possible world contributes at least one successor in such cases. Maximal consistency-preserving SCL update operators additionally require, for every consistent epistemic input, the selected proxy to be maximally informative among its credible consequences. The corresponding representation results clarify the relationship between the semantic properties imposed on source-dependent transformations and the observable rationality properties of the induced update operators.

The SCL framework also provides a unified perspective on the established approaches from which it originates. CL belief update is recovered by taking all source-dependent transformation functions to be identities, whereas CCL belief update is obtained by replacing locally non-credible epistemic inputs with proxies that retain the corresponding source worlds. Standard KM belief update is recovered by additionally removing the credibility restrictions. Moreover, the inclusion results establish that the principal classes considered herein are related by proper containment. Therefore, selective credibility-limited belief update provides a strictly more expressive framework than the established approaches it encompasses, as it supports source-dependent partial acceptance, without requiring the elimination or unchanged retention of source branches from which the complete epistemic input cannot be realized.

Several directions for future work arise from the proposed framework. First, further study could address more relaxed variants of the framework, including selective belief update based only on properties (F1)--(F3), as specified in Subsection~\ref{subsec_genuine_selective}, as well as intermediate settings obtained by weakening the full set of requirements (F1)--(F4). Additional properties of pointwise transformation assignments could also be investigated, drawing on the conditions studied for transformation functions in selective belief revision~\cite[p. 335]{ferme99}. In particular, source-relative counterparts of monotonicity, negation conditions, conjunctive and disjunctive distribution, and disjunctive factoring may yield additional well-behaved classes of SCL update operators and corresponding axiomatic characterizations. Related questions concern the existence and uniqueness of admissible proxies, as well as the computational complexity of determining them and computing the resulting updates. Further work may also address iterated belief update \cite{ferme2023,fang2024}, and extensions to richer logical settings~\cite{ribeiro13}, multi-agent settings~\cite{tamargo12,lorini21}, and action-based formalisms~\cite{delval94,shapiro11}.

\currentpdfbookmark{Declarations}{Declarations}
\section*{Declarations}

\noindent {\bf Competing Interests:} The authors have no relevant financial or non-financial interests to disclose.

\vspace{3mm}

\noindent {\bf Data Availability:} Data sharing is not applicable to this article as no datasets were generated or analysed during the current study.

\currentpdfbookmark{References}{References}\bibliographystyle{plain}
\bibliography{references}
\end{document}